\documentclass{article}
\newcommand{\Lcal}{\mathcal{L}}

\usepackage[preprint]{neurips_2026}

\usepackage[utf8]{inputenc}
\usepackage[T1]{fontenc}
\usepackage{amsmath,amssymb,amsthm}
\usepackage{thmtools}
\usepackage{mathtools}
\usepackage{hyperref}
\usepackage{url}
\usepackage{booktabs}
\usepackage{nicefrac}
\usepackage{wrapfig}
\usepackage{microtype}
\usepackage{xcolor}
\usepackage{graphicx}
\usepackage{subcaption}
\usepackage{algorithm}
\usepackage{algorithmic}
\usepackage{multirow}
\usepackage{enumitem}

\theoremstyle{plain}
\newtheorem{theorem}{Theorem}[section]
\newtheorem*{theorem*}{Theorem}
\newtheorem{lemma}[theorem]{Lemma}

\newtheorem*{corollary*}{Corollary}
\newtheorem{assumption}{Assumption}
\newtheorem{proposition}[theorem]{Proposition}
\newtheorem*{proposition*}{Proposition}

\theoremstyle{definition}
\newtheorem{definition}{Definition}

\theoremstyle{remark}

\newcommand{\R}{\mathbb{R}}
\newcommand{\E}{\mathbb{E}}
\newcommand{\Scal}{\mathcal{S}}
\newcommand{\Acal}{\mathcal{A}}
\newcommand{\Fcal}{\mathcal{F}}
\newcommand{\Gcal}{\mathcal{G}}
\newcommand{\Hcal}{\mathcal{H}}

\newcommand{\Vcal}{\mathcal{V}}
\newcommand{\Ncal}{\mathcal{N}}
\newcommand{\Rcal}{\mathcal{R}}
\newcommand{\phis}{\phi_s}
\newcommand{\phia}{\phi_a}
\newcommand{\sg}{\operatorname{sg}}

\DeclareMathOperator{\vect}{vec}

\DeclareMathOperator{\diam}{diam}

\title{Factorized Spectral Representations for Reinforcement Learning}
\author{
Junyi Wu \\
Department of Industrial and Systems Engineering\\
University of Washington\\
Seattle, WA, USA\\
\texttt{junyiwu@uw.edu}
\And
Dan Li\\
Department of Industrial and Systems Engineering\\
University of Washington\\
Seattle, WA, USA\\
\texttt{dli27@uw.edu}
}
\begin{document}
\maketitle

\begin{abstract}
  Learning a compact model of the world from interaction data is central to sample-efficient deep reinforcement learning. Spectral representation methods have become the leading paradigm for representation learning in continuous control by taking a matrix view of the transition kernel, with state-action pairs on one side and next states on the other, and learning a low-rank factorization through self-supervised contrastive objectives. We take this view one step further. The transition kernel is naturally a three-mode tensor over states, actions, and next states, and a CP decomposition gives one feature map per mode. We propose FaStR, which fits this decomposition with a noise contrastive objective, producing separate state, action, and next-state encoders that together form a single spectral representation. The factored form yields a smaller hypothesis class, and the sample size needed for representation learning shrinks by a factor that scales with the smaller of the state and action dimensions. Empirically, FaStR delivers its largest gains on high-dimensional locomotion tasks whose dynamics align with the factored structure, and the learned state encoder transfers intact across actuator shift while only the action encoder is retrained.
\end{abstract}

\section{Introduction}
\label{sec:intro}

Sample efficiency is a central concern in continuous-control RL. As state and action dimensions grow, the number of environment interactions required by model-free methods rises sharply, limiting real-world deployment~\citep{fujimoto2018addressing,haarnoja2018soft}. One productive route is the framework of low-rank MDPs~\citep{jin2020provably,yang2020reinforcement,agarwal2020flambe}, which assumes the transition kernel admits a finite-rank factorization. The representation-learning problem then becomes one of finding a feature space in which the dynamics are lix---near. Spectral representation methods bring this low-rank view closer to practical deep RL~\citep{gao2025spectral}. They use transition samples from online interaction to learn transition-aware features, often without reward labels, and feed these features to policy and value learning. When such features are estimated more accurately from the same interaction budget, the critic receives a representation closer to the Bellman-linear structure earlier in training. The contrastive objective used to extract these features has been refined steadily across this line of work, and the resulting agents are now competitive on standard benchmarks.

In many control systems, states and actions play different roles. The state describes the current configuration of the system, while the action describes the input driving the system. Existing spectral RL methods~\citep{ren2022spectral,zhang2022making,gao2025spectral,shribak2024diffusion,tang2024spectral} implement low-rank structure by treating the state-action pair as a joint argument to the transition kernel. Separately, factored value-function methods separate state and action structures within the critic. This work identifies a gap between these two approaches: the lack of a factored low-rank representation for the transition dynamics themselves. The motivation for this connection stems from the observation that distinct actions often drive identical state-dependent motion patterns. 
\begin{wrapfigure}{r}[0pt]{0.5\textwidth}
  \vspace{-12pt}
  \centering
  \begin{subfigure}{0.5\textwidth}
    \centering
    \includegraphics[width=\textwidth]{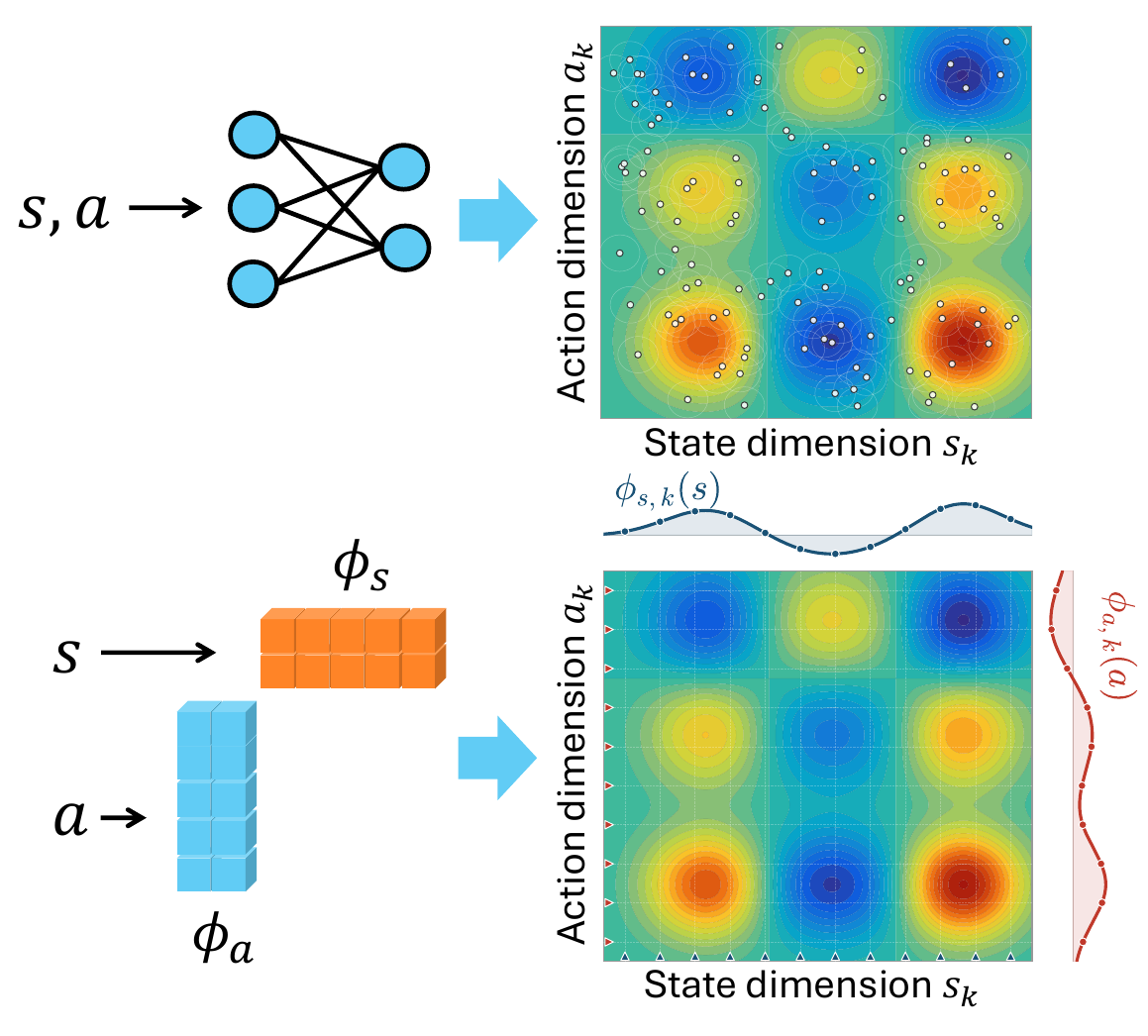}
    \caption{Joint Representation}
    \label{fig:concept-a}
  \end{subfigure}
  \vspace{-4pt}
  \begin{subfigure}{0.5\textwidth}
    \centering
    \includegraphics[width=\textwidth]{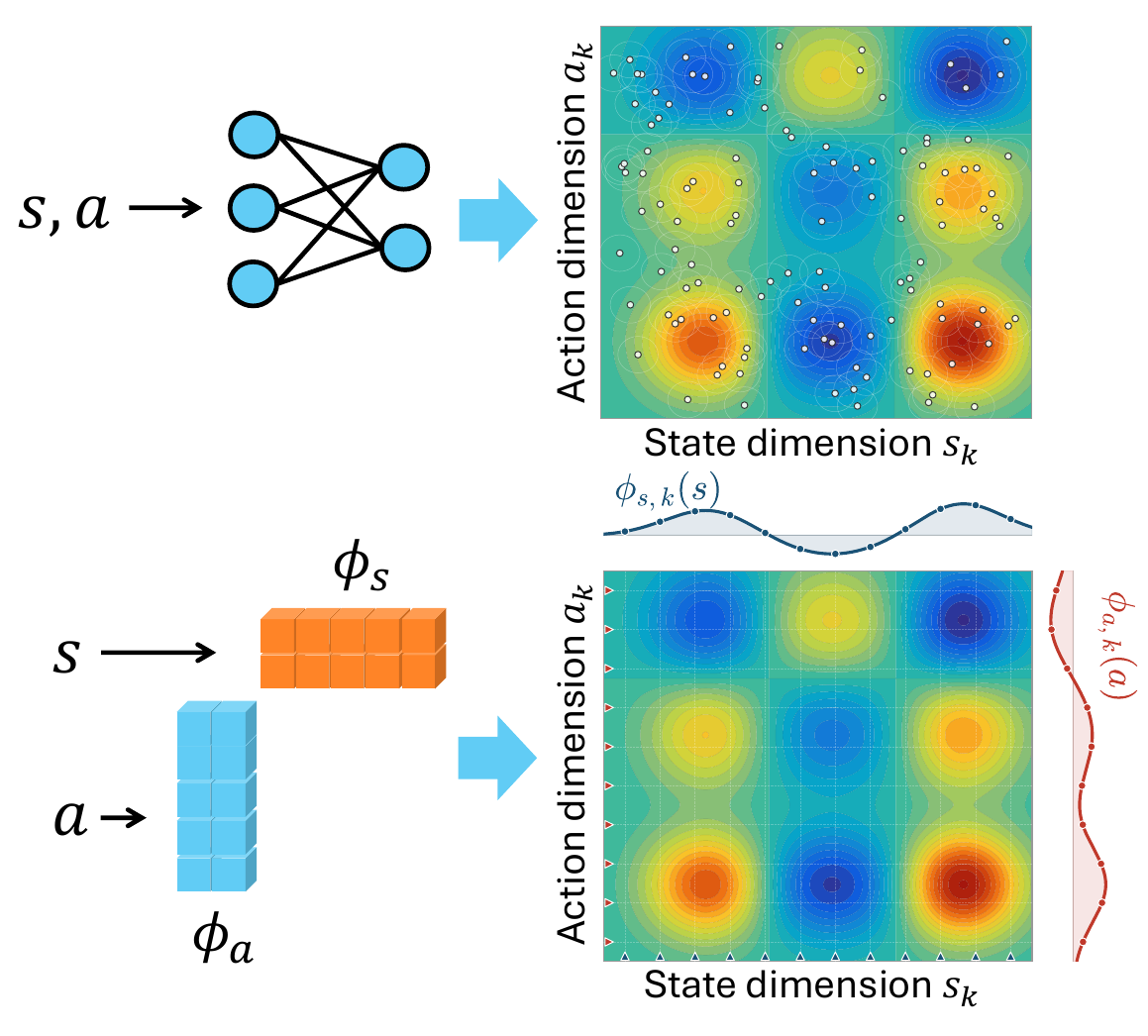}
    \caption{Factorized Representation}
    \label{fig:concept-b}
  \end{subfigure}
  \caption{Comparison of representation structures. Factorization constrains the joint score landscape to the intersection of independent state and action profiles.}
  \label{fig:concept}
  \vspace{-15pt}
\end{wrapfigure}
In such cases, a joint representation must relearn shared state-side structure across actions, whereas a factored representation can reuse the same state-side modes while learning how actions modulate them.

To this end, we introduce Factored Spectral Representations (FaStR), which treats the transition kernel as a three-way tensor and learns one encoder per mode (state, action, next-state). Figure~\ref{fig:concept} illustrates the key difference. A joint encoder maps the concatenated pair $(s,a)$ directly into a state-action representation, so the learned score can vary freely over the full joint plane (Figure~\ref{fig:concept-a}). In this view, local peaks in the score landscape may be tied to specific state-action regions. FaStR instead learns separate state and action profiles, and combines them through a Hadamard product (Figure~\ref{fig:concept-b}). Thus, peaks and variations in the joint score must arise from the interaction of reusable state-side and action-side factors, encouraging sharing across rows and columns of the state-action plane. This structured sharing acts as an implicit regularizer and shifts the statistical burden from covering isolated state-action regions to estimating reusable factors, allowing the frozen feature to become useful for Bellman backups with fewer transitions when the dynamics match the factorization. Specifically, our contributions are as follows:
\begin{enumerate}[leftmargin=2em]
\item \textbf{A factored view of spectral RL representations.} We formulate the transition kernel as a three-mode object over state, action, and next state, and instantiate this view as a representation with a separate encoder for each of these three modes. The factored representation keeps these modes explicit while remaining compatible with contrastive training.
\item \textbf{A representation-complexity separation under CP structure.} We prove that the factored hypothesis class has a covering number that decomposes additively over the state, action, and next-state factors. Under CP realizability, this gives a sample-size gap implied by the bound; in the approximate case, the guarantee becomes a bias-estimation tradeoff.
\item \textbf{Empirical evaluation and modular transfer.} On DM Control Suite~\citep{tassa2018dmcontrol} tasks across morphologies and action dimensions, we test when the factored representation outperforms a joint state-action encoder. Gains grow with action dimension, but only when the factorization matches the dynamics. The same factorization also supports modular adaptation: under actuator shift, the state factor transfers intact and only the action factor is retrained.
\end{enumerate}
\section{Preliminaries}
\label{sec:prelim}

We consider a discounted MDP $\mathcal{M}=(\mathcal{S},\mathcal{A},P,r,\gamma)$ with continuous state space $\mathcal{S}$, continuous action space $\mathcal{A}$, transition kernel $P:\mathcal{S}\times\mathcal{A}\to\Delta(\mathcal{S})$, reward function $r:\mathcal{S}\times\mathcal{A}\to\mathbb{R}$, and discount factor $\gamma\in[0,1)$. The Q-function $Q^\pi(s,a)=\mathbb{E}_\pi[\sum_{t=0}^\infty \gamma^t r(s_t,a_t)\mid s_0=s,a_0=a]$ satisfies the Bellman equation
\begin{equation}\label{eq:bellman-main}
Q^\pi(s,a) = r(s,a) + \gamma \int_\mathcal{S} P(s'|s,a)\, V^\pi(s')\, ds',
\end{equation}
where $V^\pi(s)=\mathbb{E}_{a\sim\pi(\cdot|s)}[Q^\pi(s,a)]$.

\paragraph{Linear MDPs and the linear-feature view.}
A common simplification is to assume the transition kernel admits a finite-rank factorization. The linear MDP assumption~\citep{jin2020provably} states that there exist a feature map $\varphi:\mathcal{S}\times\mathcal{A}\to\mathbb{R}^d$, a kernel map
$\mu:\mathcal{S}\to\mathbb{R}^d$, and a reward parameter $\theta_r\in\mathbb{R}^d$ such that
\begin{equation}\label{eq:linear-mdp}
P(s'|s,a) = \varphi(s,a)^\top \mu(s'), \qquad r(s,a) = \varphi(s,a)^\top \theta_r.
\end{equation}
Under this assumption, the value function is linear in $\varphi$: for any policy $\pi$, $Q^\pi(s,a)=\varphi(s,a)^\top w^\pi$ for some $w^\pi\in\mathbb{R}^d$ that absorbs the integral against $V^\pi$. The complexity of value learning then depends on the feature dimension $d$ rather than the size of the state-action space, which enables provably efficient algorithms~\citep{jin2020provably,uehara2022representation}.
In practice $\varphi$ is unknown and must be learned from interaction data.

\paragraph{Spectral RL: learning transition-ratio scores.}
Spectral representation methods learn $\varphi$ and $\mu$ from transition samples using self-supervised contrastive objectives~\citep{gutmann2010noise,oord2018representation,zhang2022making,gao2025spectral}. A scoring function $h(s,a,s')=\varphi(s,a)^\top\mu(s')$ is trained to assign high values to true next-state samples $s'\sim P(\cdot|s,a)$ and low values to negatives drawn from a reference
distribution. At the population level, such objectives fit a transition-ratio score: the logit measures how plausible a candidate next state is under the transition kernel relative to the negative distribution. Existing deep spectral methods parameterize this score bilinearly, using a fused state-action feature on one side and a next-state feature on the other. The trained $\varphi$ can then be used as the linear-MDP feature for downstream value learning.

The contrastive template that FaStR will instantiate is the ranking-based perturbed noise contrastive estimation (RP-NCE) objective~\citep{zhang2022making,gao2025spectral}, in which negatives are obtained by perturbing true next states across a sequence of noise levels and the model is trained to rank a true (perturbed) positive above $K-1$ corrupted alternatives:
\begin{equation}\label{eq:nce-prelim}
\ell_{\text{NCE}} = -\frac{1}{MN}\sum_{m=1}^M\sum_{i=1}^N
\log\frac{\exp\bigl(\varphi_i^\top \mu(\tilde s'_{i,0};\beta_m)\bigr)}
{\sum_{j=0}^{K-1} \exp\bigl(\varphi_i^\top \mu(\tilde s'_{i,j};\beta_m)\bigr)},
\end{equation}
where $\varphi_i=\varphi(s_i,a_i)$, $\{\beta_m\}_{m=1}^M$ are noise levels along a variance-preserving (VP) schedule~\citep{song2021scorebased}, which keeps the marginal variance fixed across levels, and $\tilde s'_{i,j}$ denotes the $j$-th candidate next state corrupted at level $\beta_m$, with $j=0$ the true target and $j\geq 1$ negatives drawn from other transitions in the mini-batch (in-batch negatives). At the population level this ranking objective is consistent with maximum likelihood estimation as the number of negatives grows~\citep{ma2018noise}. Thus RP-NCE is best viewed as an estimator for a chosen transition-ratio score class; FaStR keeps this estimator but changes the score class from a fused bilinear to a mode-wise trilinear form. 

\paragraph{The full spectral RL pipeline.}
The contrastive loss is one component of the system. After training, the encoder $\varphi$ is used as a frozen representation: a critic $Q_\theta$ takes $\varphi(s,a)$ as input and is trained by standard temporal-difference (TD) losses without back-propagating through $\varphi$, and a policy network $\pi$ is trained on top of this frozen representation. This separation between representation learning (the contrastive loss) and value learning (TD on the frozen feature) is what makes the learned $\varphi$ usable as the linear-MDP feature in \citet{jin2020provably}. We work within this two-stage setup and locate our contribution in the structural form of the encoder $\varphi$.
\vspace{-4pt}
\section{Method}
\label{sec:method}
\vspace{-2pt}
\subsection{The FaStR Framework}
\label{sec:method-cp}
\vspace{-2pt}
\paragraph{A tensor view of the transition kernel.}
The standard linear MDP merges $s$ and $a$ into a single index and factorizes the resulting matrix $P\in\mathbb{R}^{|\mathcal{S}\times\mathcal{A}|\times|\mathcal{S}|}$ via an SVD-like decomposition. This treats the two arguments uniformly, although the state specifies the configuration of the system while the action specifies the input applied to it. We separate the two arguments by viewing the transition kernel as a third-way tensor $\mathcal{T}\in\mathbb{R}^{|\mathcal{S}|\times|\mathcal{A}|\times|\mathcal{S}|}$ indexed by $s$, $a$, and $s'$.\footnote{The tensor view is stated in the finite case for intuition; Assumption~\ref{assump:factored} below is stated directly through feature maps on general $\mathcal{S}$ and $\mathcal{A}$.}

A standard general decomposition of such a tensor is the Tucker form~\citep{tucker1966some,kolda2009tensor} $P(s'|s,a)=\sum_{i,j,k}G_{ijk}\,\phi_{s,i}(s)\,\phi_{a,j}(a)\,m_k(s')$ with a core $G\in\mathbb{R}^{d_s\times d_a\times d_m}$. The CP (CANDECOMP/PARAFAC) decomposition~\citep{carroll1970analysis,harshman1970foundations,kolda2009tensor} is the special case where $G$ is superdiagonal, $G_{kkk}=1$, eliminating the core entirely; lower-rank or block-diagonal cores interpolate between CP and dense Tucker. Among these, FaStR uses the CP form; the case for this choice is made below in the context of contrastive training.

\begin{assumption}[CP-Factored MDP]\label{assump:factored}
There exist feature maps $\phi_s:\mathcal{S}\to\mathbb{R}^d$, $\phi_a:\mathcal{A}\to\mathbb{R}^d$, a kernel map $m:\mathcal{S}\to\mathbb{R}^d$, and a reward parameter $\theta_r\in\mathbb{R}^d$ such that for all $(s,a,s')$:
\begin{align}
P(s'|s,a) &= \bigl(\phi_s(s)\odot\phi_a(a)\bigr)^\top m(s')
  = \sum_{k=1}^d \phi_{s,k}(s)\,\phi_{a,k}(a)\,m_k(s'),
  \label{eq:cp}\\
r(s,a) &= \bigl(\phi_s(s)\odot\phi_a(a)\bigr)^\top \theta_r.
  \label{eq:reward}
\end{align}
We write $\psi(s,a):=\phi_s(s)\odot\phi_a(a)\in\mathbb{R}^d$ for the factored feature.
\end{assumption}

Each component $k$ in Equation ~ ref\{eq:cp\} is a rank-one term combining a state factor $\phi_{s, k}(s)$, an action factor $\phi_{a, k}(a)$, and a next-state factor $m_k\left(s^{\prime}\right)$, with the action factor scaling the coupling between the other two. CP fits well when actions scale shared state-dependent modes of motion, such as posture or balance in locomotion; it underfits when strong contact or limb-coupling effects entangle modes across actions. The CP form has scale invariances between $\phi_s$, $\phi_a$, and $m$, which are fixed by uniform norm bounds $B_\phi$ and $B_{m,\infty}$ throughout the analysis (Appendix~\ref{app:notation}); the implementation uses LayerNorm to stabilize the scales.

\paragraph{Factored representation and Q-linearity.}
FaStR learns a feature that is useful for control, not only for predicting
the next state.  Let
$\psi(s,a):=\phi_s(s)\odot\phi_a(a)$.  Under the CP-factored model,
the transition and reward share this same feature:
$P(s'|s,a)=\psi(s,a)^\top m(s')$ and
$r(s,a)=\psi(s,a)^\top\theta_r$.  Therefore, a Bellman backup does not
leave the span of $\psi$: for any policy $\pi$, $Q^\pi(s,a)$ is derived as:
\begin{equation}\label{eq:Q-linear}
r(s,a)+\gamma\!\int_{\mathcal S}P(s'|s,a)V^\pi(s')ds'
= \psi(s,a)^\top\!\left(\theta_r+\gamma\!\int_{\mathcal S}m(s')V^\pi(s')ds'\right)
=: \psi(s,a)^\top w^\pi .
\end{equation}
Thus the downstream linear critic is not an additional assumption: once the
transition and reward admit the CP form, the corresponding $Q$-function is
linear in the learned factored feature.  A derivation with norm bounds is
given in Appendix~\ref{app:proof-bilinear-Q}.

\paragraph{Learning a trilinear transition-ratio score.}
We learn the factors by viewing contrastive spectral learning as conditional
ratio estimation. For an anchor $(s_i,a_i)$, the observed next state is the
positive candidate and perturbed in-batch next states are negatives. InfoNCE~\citep{oord2018representation}
turns this into a classification problem: among one true next state and several
distractors, the model is trained to assign the highest energy to the true
candidate. At the population optimum, differences between these energies
estimate the log ratio between the transition law at the anchor and the
negative candidate distribution. RP-NCE repeats this classification across noise levels $\beta$, giving a multi-scale ratio target for the next-state side. In standard spectral RL, this energy is bilinear: a fused state-action feature is matched with a $\beta$-conditioned next-state feature. FaStR instead chooses the CP trilinear energy
\[
f_\Theta(s,a,s';\beta)
=
\sum_{k=1}^d
\phi_{s,\Theta,k}(s)\,\phi_{a,\Theta,k}(a)\,
m_{\Theta,k}(s';\beta)
=
\bigl(\phi_{s,\Theta}(s)\odot\phi_{a,\Theta}(a)\bigr)^\top
m_\Theta(s';\beta).
\]
Each coordinate is a ratio channel: the state factor selects a state-dependent mode, the action factor gates it, and the next-state factor scores the candidate at noise level $\beta$. Substituting this energy into the RP-NCE template~\eqref{eq:nce-prelim} gives
\begin{equation}\label{eq:nce}
\ell_{\mathrm{FaStR}}
=
-\frac{1}{MN}\sum_{b=1}^{M}\sum_{i=1}^{N}
\log
\frac{
\exp\!\left(f_\Theta(s_i,a_i,\tilde s'_{i,0};\beta_b)\right)}
{
\sum_{j=0}^{K-1}
\exp\!\left(f_\Theta(s_i,a_i,\tilde s'_{i,j};\beta_b)\right)
}.
\end{equation}
The learned state-action feature is the anchor side of this energy,
$\psi_\Theta(s,a)=\phi_{s,\Theta}(s)\odot\phi_{a,\Theta}(a)$. The downstream
RFF critic then kernelizes the frozen $\psi_\Theta$ for value learning, while the mode-wise geometry of $\psi_\Theta$ is fixed by the transition-ratio energy. Together, the preceding pieces define the full FaStR pipeline: the CP factored MDP specifies the trilinear score class, Q-linearity explains why the anchor feature can support value learning, and RP-NCE provides the self-supervised signal that learns the factors from transitions. After this representation stage, we freeze $\psi_\Theta$, train the actor and critic on top of it, and use the same downstream control pipeline as prior spectral RL; the structural change is in how transition data shape the state-action feature. Algorithm~\ref{alg:factored-sr} gives the per-step update.
\paragraph{CP vs. Tucker cores.}
The CP form is the diagonal member of a broader trilinear transition-ratio score class, written as $h(s,a,s')=\phi_s(s)^\top M(s')\phi_a(a)$, with $M(s')$ generated by the next-state mode and restricted to be diagonal under CP. Other parameterizations correspond to less constrained interaction matrices: a dense $d_s\times d_a$ matrix gives full Tucker, and intermediate cases (low-rank $M(s')=U(s')V(s')^\top$, block-diagonal cores) interpolate between CP and dense Tucker. Such factorizations have been used in prior bilinear MDP methods trained with regression or maximum-likelihood objectives~\citep{yang2020reinforcement, du2021bilinear, lin2024because}, where the gradient on $M(s')$ is full-rank per anchor and large interaction matrices can be trained with enough data.

Under contrastive training the gradient structure is different. The chain rule on $h_{ij}=\phi_s(s_i)^\top M(s'_j)\phi_a(a_i)$ gives
\begin{equation}\label{eq:rank-one-grad}
\nabla_{\mathrm{vec}(M(s'_j))}\,\ell_{\text{NCE}}
\;\in\; \mathrm{span}\bigl\{\phi_a(a_i)\otimes\phi_s(s_i)\bigr\},
\end{equation}
a one-dimensional subspace per anchor. A mini-batch of $N$ anchors therefore covers at most $\min(N,\dim(M))$ directions in the parameter space of $M(s')$ in a single update. This rank-one structure follows from the conditional bilinear form of the trilinear score and is the same for any parameterization of $M$: low-rank or block-diagonal cores reduce the parameter count below $d_s d_a$, but each anchor still contributes a rank-one direction projected onto the corresponding submanifold. For dense Tucker, $\dim(M)=d_s d_a$, so per-batch coverage requires $N\geq d_s d_a$, which scales quadratically in the per-mode dimension and exceeds typical batch sizes. The CP form has $\dim(M)=d$, so the linear requirement $N\geq d$ is met by standard mini-batch sizes.

Whether structured cores can accumulate the full parameter space across many SGD updates at realistic compute is an empirical question. The underlying issue is that any structured-core fix faces a three-way constraint: encoder capacity, NCE validity (the negative count must reach the effective NCE dimension, here $N\geq d_s d_a$, for the contrastive logit to be non-degenerate), and per-batch gradient coverage cannot be satisfied simultaneously at standard batch sizes. Reducing $d_s, d_a$ to satisfy validity makes encoder capacity the binding constraint and removes the representational benefit. Appendix~\ref{app:nce-ablation} reports a controlled three-way encoder ablation (monolithic, concat-of-separate, Tucker) and a separate capacity test, showing that no choice of Tucker rank within this constraint matches CP under matched compute.

\subsection{Provable Representation Efficiency}
\label{sec:complexity}

Section~\ref{sec:method-cp} chose CP among bilinear parameterizations on training-dynamics grounds. The matching statistical question is whether CP also makes the representation class easier to learn from a finite sample. We answer it in three steps. We first bound the size of the factored class relative to the joint class (Theorem~\ref{thm:complexity}). We then convert this bound into a finite-sample bound on representation error (Proposition~\ref{prop:finite-sample}). Finally, we invert the bound to compare the sample sizes that suffice for each class to reach a target error (Proposition~\ref{prop:sample-sep}). The dimension dependence of the final result matches the empirical pattern in Section~\ref{sec:experiments}.

We work with two scoring classes. The factored class $\Gcal_{\text{fac}}$ contains $g(s,a,s')=(\phi_s(s)\odot\phi_a(a))^\top m(s')$ with $(\phi_s,\phi_a,m)\in\Fcal_s\times\Fcal_a\times\Fcal_m$; the joint class
$\Gcal_{\text{joint}}$ contains $g(s,a,s')=\varphi(s,a)^\top\mu(s')$ with $\varphi\in\Fcal_{sa}$, $\mu\in\Fcal_m$. Norm bounds and the convention for sup-$\ell_2$ covering numbers are in Appendix~\ref{app:notation}.

\paragraph{Step 1: bounding the size of the factored class.}
A monolithic encoder takes each $(s,a)$ pair as a single input, and the size of its hypothesis class scales with the combined state-action dimension. The CP parameterization splits the score into three networks: a state factor, an action factor, and a kernel encoder, coupled through a diagonal trilinear interaction (the Hadamard form of Eq.~\ref{eq:cp}). The size of the resulting class is then controlled by the three component networks separately rather than by their product. Covering numbers, a standard measure of the size of a function class, make this precise.

\begin{theorem}[Covering-number decomposition]\label{thm:complexity}
For every $\epsilon>0$,
\[
\log\Ncal(\Gcal_{\emph{fac}},\epsilon)
\;\le\;
\log\Ncal\!\left(\Fcal_s,\tfrac{\epsilon}{3B_{m,\infty}B_\phi}\right)
+
\log\Ncal\!\left(\Fcal_a,\tfrac{\epsilon}{3B_{m,\infty}B_\phi}\right)
+
\log\Ncal\!\left(\Fcal_m,\tfrac{\epsilon}{3B_\phi^2}\right).
\]
If $\Fcal_{sa}$ contains every Hadamard product $\phi_s\odot\phi_a$ with $(\phi_s,\phi_a)\in\Fcal_s\times\Fcal_a$, then $\Gcal_{\emph{fac}}\subseteq\Gcal_{\emph{joint}}$, and the inclusion holds for both covering numbers and Rademacher complexities at every sample size.
\end{theorem}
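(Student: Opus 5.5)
The plan is a direct product-cover argument built on a three-term telescoping of the trilinear score. I will use the conventions of Appendix~\ref{app:notation}: covering numbers are in the sup-$\ell_2$ metric, $\|\phi_s(s)\|_2\le B_\phi$ and $\|\phi_a(a)\|_2\le B_\phi$ uniformly, and $\|m(s')\|_\infty\le B_{m,\infty}$ (or whichever norm pairs with it by H\"older). First I fix minimal covers. Let $\Ncal_s$ be a $\delta_s$-cover of $\Fcal_s$, $\Ncal_a$ a $\delta_a$-cover of $\Fcal_a$, and $\Ncal_m$ a $\delta_m$-cover of $\Fcal_m$, with $\delta_s=\delta_a=\epsilon/(3B_{m,\infty}B_\phi)$ and $\delta_m=\epsilon/(3B_\phi^2)$. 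The candidate cover of $\Gcal_{\text{fac}}$ is the set of scores $(\bar\phi_s\odot\bar\phi_a)^\top\bar m$ over the product $\Ncal_s\times\Ncal_a\times\Ncal_m$. Its cardinality is the product of the three sizes, so the logarithm is the claimed sum, provided every $g\in\Gcal_{\text{fac}}$ lies within $\epsilon$ of some element.

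For the approximation step, take $g=(\phi_s\odot\phi_a)^\top m$ and its nearest cover elements. I telescope
\[
(\phi_s\odot\phi_a)^\top m-(\bar\phi_s\odot\bar\phi_a)^\top\bar m
=((\phi_s-\bar\phi_s)\odot\phi_a)^\top m+(\bar\phi_s\odot(\phi_a-\bar\phi_a))^\top m+(\bar\phi_s\odot\bar\phi_a)^\top(m-\bar m).
\]
Each term is bounded pointwise. For the first, $|(u\odot v)^\top w|\le\|w\|_\infty\|u\|_2\|v\|_2$ by Cauchy--Schwarz, which gives at most $B_{m,\infty}\,\delta_s\,B_\phi=\epsilon/3$. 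The second term gives $B_{m,\infty}B_\phi\delta_a=\epsilon/3$ in the same way. For the third, $|(u\odot v)^\top w|\le\|u\odot v\|_2\|w\|_2\le\|u\|_2\|v\|_2\|w\|_2$, giving $B_\phi^2\delta_m=\epsilon/3$. Summing the three and taking the sup over $(s,a,s')$ yields $\epsilon$.

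\textbf{Main obstacle.} The delicate point is that the cover centers $\bar\phi_s,\bar\phi_a$ must also satisfy the norm bound $B_\phi$; otherwise the second and third terms pick up extra constants. I would first try using internal (proper) covers, with centers drawn from the class itself. If the appendix convention instead allows external covers, I would project each center onto the $B_\phi$-ball. This projection is nonexpansive, so it preserves the cover radius, and the constants above stay intact. A second subtlety is that for $\Fcal_m$ the $\ell_2$ covering metric must be the one matching the third term's bound; I would check this against the appendix definition. If the metric on $\Fcal_m$ is sup-$\ell_\infty$, the third term would instead need the $\ell_1$ norm of $\bar\phi_s\odot\bar\phi_a$, which Cauchy--Schwarz also bounds by $B_\phi^2$.

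For the inclusion claim, every $g\in\Gcal_{\text{fac}}$ has the form $\varphi^\top m$ with $\varphi=\phi_s\odot\phi_a\in\Fcal_{sa}$ by hypothesis and $m\in\Fcal_m$, so $g\in\Gcal_{\text{joint}}$. Monotonicity then gives both consequences. Any $\epsilon$-cover of the larger class restricts to one of the subset; if internal covers are required, this costs at most a factor of $2$ in radius, and I would note that convention. Empirical and population Rademacher complexities are suprema over the class, hence monotone under inclusion for every sample and every sample size.
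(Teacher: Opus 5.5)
Your proof is correct and follows essentially the paper's argument: a product of the three component covers at radii $\epsilon/(3B_{m,\infty}B_\phi)$, $\epsilon/(3B_{m,\infty}B_\phi)$, $\epsilon/(3B_\phi^2)$, a three-term perturbation bound on the trilinear score, and plain set inclusion for the containment and Rademacher claims. The differences are minor. The paper telescopes as $x^\top(m-\widetilde m)+(x-\widetilde x)^\top\widetilde m$ with $x=\phi_s\odot\phi_a$, so it needs the centers $\widetilde\phi_s,\widetilde m$ to obey the norm bounds, and its lemma simply assumes this. Your ordering needs bounds only on $\bar\phi_s,\bar\phi_a$, and your projection-onto-the-ball and internal/external-cover remarks make explicit a point the paper leaves tacit.
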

\vspace{-8pt}
\begin{proof}
See Appendix~\ref{app:proof-complexity}.
\end{proof}
\vspace{-8pt}
For Lipschitz encoders on compact domains, this gives an exponent $D_{\max}=\max(d_s,d_a)$ for the factored class versus $D_{\mathrm{sum}}=d_s+d_a$ for the joint class, leaving a gap of $\min(d_s,d_a)$ in the rate exponent that is formalized as a sample-size separation in Proposition~\ref{prop:sample-sep}.
\vspace{-4pt}
\paragraph{What the bound says in operational terms.}
In sample-size terms, the covering number is the number of transitions needed for the contrastive loss to separate two encoder triples within a target tolerance. A smaller covering number means the empirical loss is close to its population value at smaller $n$, so the resulting $\psi=\phi_s\odot\phi_a$ is available as a TD feature at smaller $n$. Take humanoid as a concrete case. The state has $d_s=67$ proprioceptive coordinates, the action has $d_a=21$ actuator commands, and the standard approach fits the transition density on the joint $88$-dimensional input. The CP form fits two smaller pieces instead: a $67$-dimensional state feature, and $21$ per-actuator gains that say how each actuator scales that feature. Theorem~\ref{thm:complexity} states this same split at the level of the covering number. Proposition~\ref{prop:finite-sample} turns the smaller class size into a smaller representation error at fixed $n$. Proposition~\ref{prop:sample-sep} inverts the bound and gives the smaller sample size needed for a target error. Once $\psi$ is frozen, TD on $\psi$ satisfies the linear-MDP sample-complexity guarantees of \citet{jin2020provably}, so a smaller representation error at a given $n$ becomes a smaller return gap in policy learning.

\paragraph{Step 2: from class size to representation error.}
A smaller class size matters only if it leads to a smaller generalization gap. The target we analyze is the $L_2$ representation objective of \citet{ren2022spectral}, the squared $L^2(\nu)$ distance from the true transition density,
\[
L(g) := \E_\rho\!\left[\int_\Scal\bigl(P(s'\mid s,a)-g(s,a,s')\bigr)^2 d\nu(s')\right].
\]
We analyze this objective rather than NCE itself because finite-sample guarantees are tractable for the $L_2$ surrogate at the population level, while the contrastive loss serves as its practical proxy at training time~\citep{ren2022spectral}.

\begin{proposition}[Finite-sample generalization]\label{prop:finite-sample}
Let $\Gcal\in\{\Gcal_{\emph{fac}},\Gcal_{\emph{joint}}\}$ with 
$|g|\le B_g$, and let $\hat g_n$ be an empirical minimizer of the 
centered objective on $n$ i.i.d.\ transitions. With probability at least 
$1-\delta$,
\[
L(\hat g_n) \;\le\; \underbrace{L(g^*_\Gcal)}_{\text{approximation}}
\;+\; \underbrace{8\,\Rcal_n(\Gcal) + 4\,\Rcal_n(\Hcal_\Gcal)}_{\text{estimation}}
\;+\; O\!\left((B_g+B_g^2)\sqrt{\tfrac{\log(1/\delta)}{n}}\right),
\]
where $\Hcal_\Gcal:=\{(s,a)\mapsto\int g^2(s,a,u)\,d\nu(u):g\in\Gcal\}$ 
is the integrated quadratic class arising from expanding the squared 
loss. The quadratic class satisfies 
$\Ncal(\Hcal_\Gcal,\epsilon)\le\Ncal(\Gcal,\epsilon/(2B_g))$, so its 
Rademacher complexity inherits the rate of $\Gcal$ itself.
\end{proposition}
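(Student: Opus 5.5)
The plan is to expand the squared loss so that the unknown term $\int P^2\,d\nu$ drops out. Then I run a standard uniform-convergence argument on the resulting centered empirical objective.

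Expanding gives
\[
L(g)=\E_\rho\!\left[\int P^2\,d\nu\right]-2\,\E_{\rho}\E_{s'\sim P(\cdot|s,a)}\!\left[g(s,a,s')\,\tfrac{1}{\nu(s')}\right]+\E_\rho\!\left[\int g^2(s,a,u)\,d\nu(u)\right].
\]
I will take $\nu$ as the density reference, absorbing the Radon--Nikodym factor into the convention of Appendix~\ref{app:notation}, so that the cross term is $\E_{(s,a,s')}[g(s,a,s')]$ under the transition law. The first term does not depend on $g$. The centered objective is therefore
\[
\tilde L(g)=-2\,\E[g(s,a,s')]+\E_\rho[h_g(s,a)],\qquad h_g(s,a):=\int g^2(s,a,u)\,d\nu(u)\in\Hcal_\Gcal .
\]
Its empirical version $\hat L_n$ replaces both expectations by averages over the $n$ transitions, and $\hat g_n$ minimizes $\hat L_n$ over $\Gcal$. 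Since $L-\tilde L$ is a constant, $L(\hat g_n)-L(g^*_\Gcal)=\tilde L(\hat g_n)-\tilde L(g^*_\Gcal)$.

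Next comes the standard three-term decomposition
\[
\tilde L(\hat g_n)-\tilde L(g^*)\le \bigl[\tilde L(\hat g_n)-\hat L_n(\hat g_n)\bigr]+\bigl[\hat L_n(g^*)-\tilde L(g^*)\bigr],
\]
using $\hat L_n(\hat g_n)\le\hat L_n(g^*)$. The first bracket is bounded by $\sup_{g\in\Gcal}|\tilde L(g)-\hat L_n(g)|$. That supremum splits into twice the deviation of the linear part over $\Gcal$ plus the deviation over $\Hcal_\Gcal$. Symmetrization combined with McDiarmid's inequality gives, with probability $1-\delta/2$,
\[
\sup_g|\tilde L-\hat L_n|\le 2\cdot 2\Rcal_n(\Gcal)+2\Rcal_n(\Hcal_\Gcal)+O\bigl((B_g+B_g^2)\sqrt{\log(1/\delta)/n}\bigr).
\]
Here the bounded differences are of order $(2B_g+B_g^2\nu(\Scal))/n$, and I take $\nu$ to be normalized. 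The second bracket involves the single fixed function $g^*$, so Hoeffding's inequality gives an $O((B_g+B_g^2)\sqrt{\log(1/\delta)/n})$ term. A union bound then yields the stated form.

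To reach the constants $8$ and $4$, I will bound the supremum twice rather than bounding the fixed-$g^*$ term separately. The factor $2$ in front of the linear term, doubled by symmetrization and doubled again by the two brackets, gives $8\Rcal_n(\Gcal)$; the same count without the leading factor gives $4\Rcal_n(\Hcal_\Gcal)$. This is slightly loose but matches the statement.

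For the covering claim, take $g,g'\in\Gcal$ with $\sup|g-g'|\le\eta$. Then
\[
|h_g-h_{g'}|=\Bigl|\int (g-g')(g+g')\,d\nu\Bigr|\le 2B_g\eta,
\]
again using a probability $\nu$. So any $\epsilon/(2B_g)$-cover of $\Gcal$ induces an $\epsilon$-cover of $\Hcal_\Gcal$, which gives $\Ncal(\Hcal_\Gcal,\epsilon)\le\Ncal(\Gcal,\epsilon/(2B_g))$. Dudley's entropy integral then transfers the rate of $\Gcal$ to $\Hcal_\Gcal$, up to a $B_g$ factor, and Theorem~\ref{thm:complexity} feeds in for $\Gcal_{\text{fac}}$.

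The main obstacle is the cross term. Turning $\E_\rho\int P\,g\,d\nu$ into an expectation over observed transitions requires that $P(\cdot|s,a)$ be a density with respect to $\nu$, and that $\nu$ be finite (ideally a probability measure), so that $h_g\le B_g^2$. I would state this convention explicitly as the first step.
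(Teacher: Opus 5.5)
Your proposal matches the paper's proof essentially step for step: expand the square so the constant $C_P$ drops out, bound the excess risk by $2\sup_{g}|\bar L(g)-\hat L_n(g)|$, symmetrize and split to get $4\Rcal_n(\Gcal)+2\Rcal_n(\Hcal_\Gcal)$ in expectation, pass to high probability by McDiarmid with envelope $2B_g+B_g^2$, and obtain the covering transfer from $|g^2-g'^2|\le 2B_g|g-g'|$ with $\nu(\Scal)=1$. The only blemish is the stray factor $1/\nu(s')$ in your first display, which should simply be dropped: under the paper's standing convention $P(\cdot\mid s,a)$ is already a density with respect to the probability measure $\nu$, so $\E_\rho\int_\Scal P\,g\,d\nu=\E[g(s,a,s')]$ directly.
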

\vspace{-8pt}
\begin{proof}
See Appendix~\ref{app:proof-finite-sample}.
\end{proof}
\vspace{-8pt}
The bound has two terms with opposite scaling. The approximation term $L(g^*_\Gcal)$ measures whether the class contains the true dynamics; under exact CP realizability and the inclusion of Theorem~\ref{thm:complexity}, both classes contain the truth and this term is zero for both. Under approximate CP, the factored class has an irreducible bias $\epsilon_{\mathrm{CP}}^2:=\inf_{g\in\Gcal_{\text{fac}}}L(g)$ that the joint class does not have. The estimation term is smaller for the factored class: both Rademacher terms inherit the smaller exponent $D_{\max}$ from Theorem~\ref{thm:complexity}, so the factored class converges faster in $n$.

\paragraph{Step 3: from representation error to sample size.}
The previous two steps bound representation error at a fixed sample size. The inverse question is how many samples each class needs to reach the same target error.

\begin{proposition}[Sufficient-sample-size separation]\label{prop:sample-sep}
Assume exact CP realizability, the inclusion condition of Theorem~\ref{thm:complexity}, and $L$-Lipschitz encoder classes on compact domains with $D_{\max}\geq 3$. Proposition~\ref{prop:finite-sample} then gives certificates of the form
$L(\hat g_n^{\mathrm{fac}})\le C_{\mathrm{fac}}(\delta)\,n^{-1/D_{\max}}$ and 
$L(\hat g_n^{\mathrm{joint}})\le C_{\mathrm{joint}}(\delta)\,n^{-1/D_{\mathrm{sum}}}$, with $C_{\mathrm{fac}}, C_{\mathrm{joint}}$ depending polynomially on the norm bounds, Lipschitz constants, and feature dimension, and logarithmically on $1/\delta$. Let 
$n_{\mathrm{fac}}^{\mathrm{suff}}(\epsilon)$ and 
$n_{\mathrm{joint}}^{\mathrm{suff}}(\epsilon)$ be the smallest $n$ at which each certificate guarantees error at most $\epsilon^2$. Then
\[
\frac{n_{\mathrm{joint}}^{\mathrm{suff}}(\epsilon)}
     {n_{\mathrm{fac}}^{\mathrm{suff}}(\epsilon)}
\;=\;\Theta\!\left(\epsilon^{-2\min(d_s,d_a)}\right) \quad \text{as } \epsilon\to 0.
\]
\end{proposition}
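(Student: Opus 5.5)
The plan has two parts: first establish the two certificate forms, then invert them. For the certificates, under exact CP realizability and the inclusion condition of Theorem~\ref{thm:complexity}, the approximation term $L(g^*_\Gcal)$ in Proposition~\ref{prop:finite-sample} vanishes for both classes, so only estimation remains.

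For $L$-Lipschitz, bounded encoders on compact domains of dimension $D$, the standard sup-norm entropy bound is $\log\Ncal(\Fcal,\epsilon)\lesssim d\,(L/\epsilon)^{D}$. The factor $d$ accounts for the vector-valued output, and the constant absorbs the domain diameter.

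For the factored class, Theorem~\ref{thm:complexity} adds three such terms with exponents $d_s$, $d_a$ and $d_s$. The kernel encoder acts on next states, and I assume the $\beta$-conditioning adds no effective dimension. The sum is therefore dominated by $\epsilon^{-D_{\max}}$. For the joint class, $\Fcal_{sa}$ lives on a domain of dimension $d_s+d_a$, giving $\epsilon^{-D_{\mathrm{sum}}}$.

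Dudley's entropy integral then bounds the Rademacher complexity. Since $D\ge 3>2$, the integral diverges at zero, so I truncate it at scale $\alpha$ and optimize. This gives $\Rcal_n\lesssim n^{-1/D}$ for each class.

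The quadratic class $\Hcal_\Gcal$ inherits the same rate by the covering relation in Proposition~\ref{prop:finite-sample}. The $\sqrt{\log(1/\delta)/n}$ term is of lower order because $1/D<1/2$. This yields $L(\hat g_n)\le C(\delta)n^{-1/D}$, with $C$ polynomial in $B_\phi$, $B_{m,\infty}$, $L$ and $d$, and logarithmic in $1/\delta$. It is convenient to absorb the lower-order term into $C$ by using $n^{-1/2}\le n^{-1/D}$ for $n\ge1$. The certificate is then exactly a power law and can be inverted in closed form.

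The inversion is direct. The smallest $n$ with $C n^{-1/D}\le\epsilon^2$ is $n^{\mathrm{suff}}=\lceil (C/\epsilon^2)^{D}\rceil$, so $n^{\mathrm{suff}}_{\mathrm{fac}}\asymp C_{\mathrm{fac}}^{D_{\max}}\epsilon^{-2D_{\max}}$ and $n^{\mathrm{suff}}_{\mathrm{joint}}\asymp C_{\mathrm{joint}}^{D_{\mathrm{sum}}}\epsilon^{-2D_{\mathrm{sum}}}$. The ceiling only contributes a factor in $[1,2]$ once $\epsilon$ is small. The ratio is then
\[
\frac{C_{\mathrm{joint}}^{D_{\mathrm{sum}}}}{C_{\mathrm{fac}}^{D_{\max}}}\,\epsilon^{-2(D_{\mathrm{sum}}-D_{\max})}.
\]
Since $D_{\mathrm{sum}}-D_{\max}=\min(d_s,d_a)$, this is $\Theta(\epsilon^{-2\min(d_s,d_a)})$ as $\epsilon\to0$. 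The constant prefactor is independent of $\epsilon$ and positive, which gives both the upper and lower $\Theta$ bounds.

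The main obstacle is the first step: checking that the factored class genuinely gets exponent $D_{\max}$ and not something larger. The kernel factor $m(s';\beta)$ and the Hadamard coupling must not reintroduce a joint $(s,a)$ dependence. The additive decomposition of Theorem~\ref{thm:complexity} handles the coupling. For the domain of $m$, I would state explicitly that the Lipschitz assumption is taken on $\Scal$ with $\beta$ ranging over a finite set, costing only a $\log M$ additive term.

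A secondary subtlety is that the $\Theta$ claim concerns the certificates, not the true minimal sample sizes. I would emphasize this in the write-up so that the lower bound is not misread as a minimax statement.
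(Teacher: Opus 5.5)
Your proposal is correct and follows the paper's proof essentially step for step. The steps match: zero oracle risk from realizability plus containment; Lipschitz entropy exponents $D_{\max}$ (from the three-term decomposition) versus $D_{\mathrm{sum}}$; a truncated Dudley integral with $\alpha\propto n^{-1/D}$ (valid since $D>2$); transfer to $\Hcal_\Gcal$ via the covering relation; absorbing the $n^{-1/2}$ tail through $n^{-1/2}\le n^{-1/D}$; and inverting the two power laws to get the exponent gap $D_{\mathrm{sum}}-D_{\max}=\min(d_s,d_a)$. The paper additionally applies Proposition~\ref{prop:finite-sample} at level $\delta/2$ with a union bound so both certificates hold simultaneously, and your $\beta$ caveat is unnecessary there because the analyzed kernel class $\Fcal_m$ is defined on $\Scal$ alone.
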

\vspace{-8pt}
\begin{proof}
See Appendix~\ref{app:proof-sample-sep}.
\end{proof}
\vspace{-8pt}
The factored class therefore has a smaller certified sample requirement, with the exponent gap equal to $\min(d_s,d_a)$. This gap is meaningful when the smaller of the state and action dimensions is nontrivial and the CP bias is small; otherwise the approximate-CP certificate $L(\hat g_n^{\mathrm{fac}}) \lesssim \epsilon_{\mathrm{CP}}^2 + C_{\mathrm{fac}}(\delta)n^{-1/D_{\max}}$ can be dominated by bias. We test these two conditions in Section~\ref{sec:experiments} and Appendix~\ref{app:cp-misfit}. Appendix~\ref{app:lsvi-conditional} further gives a conditional LSVI-UCB implication: if the learned feature satisfies approximate Bellman linearity, the factored structure can reduce the representation cost of obtaining features suitable for optimism-driven exploration.
\vspace{-4pt}
\section{Experiments}
\label{sec:experiments}

The experiments evaluate FaStR on two axes. First, on standard locomotion benchmarks whose morphology is well aligned with a CP-style factorization, we compare FaStR against the monolithic encoder under an otherwise identical pipeline (Section~\ref{sec:exp-singletask}). Second, in a transfer setting where only the action-to-torque mapping changes between training and adaptation, we test whether the separable state factor enables sample-efficient adaptation that a joint encoder cannot support (Section~\ref{sec:exp-transfer}).
\vspace{-4pt}
\subsection{Setup}
\label{sec:exp-setup}
\vspace{-4pt}
We evaluate on DM Control Suite~\citep{tassa2018dmcontrol} locomotion tasks that vary in action dimensionality and morphology, with $\dim(\Acal)\in\{6,12,21,38\}$ and shared proprioceptive observations.
To attribute performance differences to the encoder structure alone, we use a controlled protocol: FaStR and the monolithic baseline (CTRL-SR~\citep{gao2025spectral}) share identical total parameter counts, hidden dimensions, learning rates, noise schedules, and training budgets, with no task-specific tuning for either method. The only varying component is the representation: FaStR's factored encoders $(\phis,\phia,m)$ versus CTRL-SR's joint encoder $\varphi([s;a])$. Main-text learning curves (Figure~\ref{fig:single-task}) compare FaStR against CTRL-SR. Table~\ref{tab:full-results} additionally reports final-return comparisons against Diff-SR~\citep{shribak2024diffusion}, a second spectral RL baseline that replaces the contrastive objective with a diffusion next-state model on the same TD3 backbone, and against TD7~\citep{fujimoto2023td7} and SAC~\citep{haarnoja2018soft}, the current model-free SOTA.
SAC and TD7 use the architectures and hyperparameters from the original papers. We report two SAC variants, SAC(3-layer) and SAC(2-layer), both wider per layer than the spectral methods' actor, so a capacity mismatch is not the source of the gap.
We compare only against model-free baselines; model-based methods on this suite were evaluated by \citet{gao2025spectral} and gave lower returns than the spectral RL baselines. Full hyperparameters are in Appendix~\ref{app:impl}.
\vspace{-4pt}
\subsection{Sample Efficiency on Locomotion Benchmarks}
\label{sec:exp-singletask}
\vspace{-4pt}
\begin{figure}[h]
  \centering
  \includegraphics[width=\textwidth]{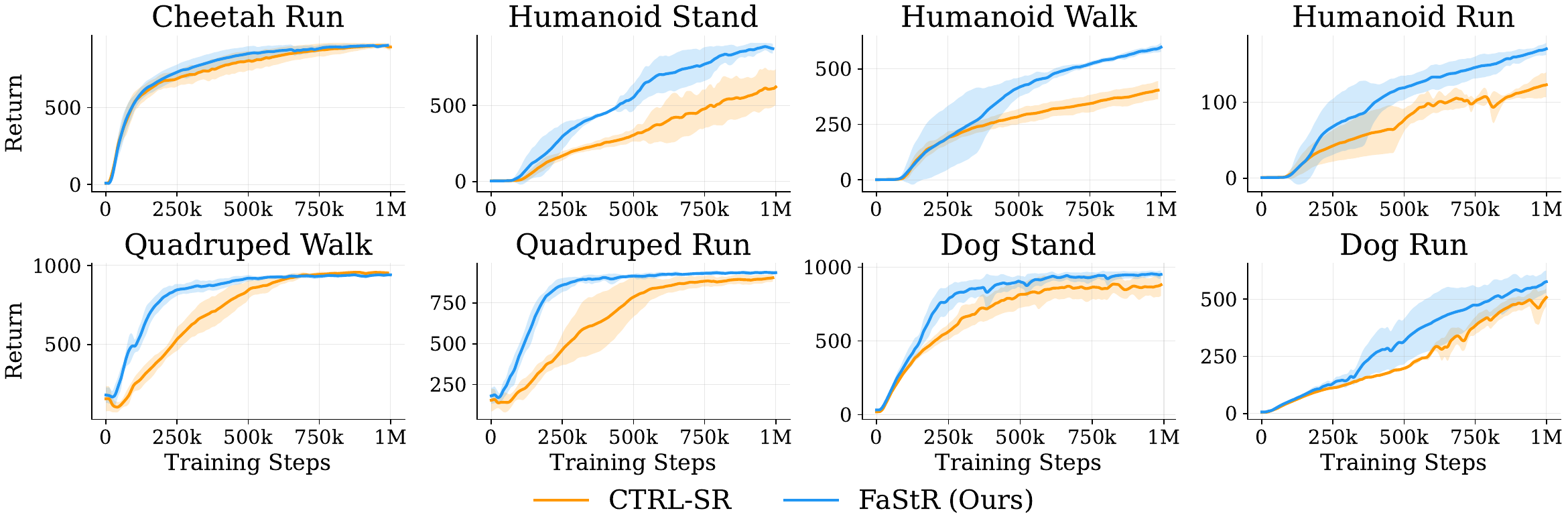}
  \caption{Learning curves across 8 DM Control Suite tasks spanning four morphologies. FaStR's gains are largest on tasks whose dynamics admit a CP factorization, consistent with the bias-estimation view. Shaded: $\pm1$~std across 5 seeds.}
  \label{fig:single-task}
  \vspace{-4pt}
\end{figure}
On the Humanoid tasks, where torso-limb morphology admits a CP factorization (independent joint groups scaling shared postural modes), FaStR gives higher returns than the controlled baseline by margins above seed-level noise across all three tasks, attributing the difference to the representation. On the Dog environment, the larger action dimension is partly offset by stronger ground-contact dynamics that introduce state-action coupling, giving improvements with wider variance. The same ordering holds against Diff-SR on all five high-action-dim tasks (Table~\ref{tab:full-results}), so the gain is not specific to the contrastive objective.

On the Quadruped tasks, FaStR converges to CTRL-SR's terminal performance with higher sample efficiency: on Quadruped-Run, it reaches the baseline's converged level earlier in training and exceeds it asymptotically; on Quadruped-Walk, the final return is slightly below the baseline, but FaStR reaches near-converged performance earlier than CTRL-SR at any fixed performance threshold. The improvement on this family appears as faster convergence rather than a higher asymptote, which is consistent with the bias-estimation interpretation: the estimation advantage appears in the rate at which the representation becomes usable for value learning. On Cheetah-Run, Walker-Walk, and Walker-Run ($\dim\Acal{\le}6$), FaStR matches the baselines without significant gain, the regime predicted by Proposition~\ref{prop:sample-sep} in which small $\min(d_s,d_a)$ gives a small estimation advantage in the certificate.
\begin{table}[h]
\vspace{-4pt}
\centering
\setlength{\abovecaptionskip}{2pt}
\setlength{\belowcaptionskip}{2pt}
\caption{Final-10-evaluation mean return on DM Control Suite tasks with proprioceptive observations, averaged across 5 seeds ($\pm1$~std). Bold: best result where the gap exceeds one standard deviation.}
\label{tab:full-results}
\footnotesize
\setlength{\tabcolsep}{2.5pt}
\renewcommand{\arraystretch}{0.95}
\resizebox{\textwidth}{!}{%
\begin{tabular}{@{}l ccc cc ccc cc@{}}
\toprule
& \multicolumn{3}{c}{$\dim\Acal=6$}
& \multicolumn{2}{c}{$\dim\Acal=12$}
& \multicolumn{3}{c}{$\dim\Acal=21$}
& \multicolumn{2}{c}{$\dim\Acal=38$} \\
\cmidrule(lr){2-4} \cmidrule(lr){5-6} \cmidrule(lr){7-9} \cmidrule(lr){10-11}
& {\scriptsize Cheetah Run} & {\scriptsize Walker Walk} & {\scriptsize Walker Run}
& {\scriptsize Quad.\ Walk} & {\scriptsize Quad.\ Run}
& {\scriptsize Hum.\ Stand} & {\scriptsize Hum.\ Walk} & {\scriptsize Hum.\ Run}
& {\scriptsize Dog Stand} & {\scriptsize Dog Run} \\
\midrule
SAC(2-layer) & $660\pm9$  & $880\pm5$     & $650\pm19$     & $880\pm6$ & $797\pm18$           & $200\pm70$          & $143\pm99$         & $\;\,60\pm27$       & $500\pm9$          & $\;\,44\pm48$ \\
SAC(3-layer) & $901\pm5$  & $972\pm3$     & $802\pm7$     & $946\pm13$ & $931\pm9$           & $446\pm52$          & $214\pm154$         & $\;\,87\pm33$       & $577\pm11$          & $\;\,67\pm34$ \\
TD7          & $898\pm1$  & $\mathbf{979\pm1}$       & $815\pm18$    & $952\pm3$  & $921\pm20$          & $728\pm176$         & $365\pm216$         & $\;\,85\pm49$       & $891\pm21$          & $192\pm12$ \\
Diff-SR      & $\mathbf{910\pm4}$ & $978\pm2$     & $\mathbf{827\pm10}$     & $953\pm5$  & $903\pm27$          & $663\pm101$         & $358\pm18$          & $111\pm18$        & $913\pm22$        & $302\pm81$ \\
CTRL-SR      & $900\pm15$ & $978\pm1$     & $809\pm29$    & $\mathbf{955\pm9}$  & $900\pm24$          & $607\pm122$         & $399\pm40$          & $120\pm15$          & $870\pm70$          & $499\pm45$ \\
FaStR (Ours) & $903\pm8$  & $974\pm3$     & $809\pm17$    & $949\pm9$  & $\mathbf{936\pm9}$  & $\mathbf{877\pm28}$ & $\mathbf{591\pm15}$ & $\mathbf{168\pm7}$  & $\mathbf{951\pm24}$ & $\mathbf{560\pm50}$ \\
\bottomrule
\end{tabular}%
}
\vspace{-10pt}
\end{table}
\vspace{-4pt}
\paragraph{Mechanism check.} Appendix~\ref{app:nce-ablation} runs a controlled three-way ablation that isolates the CP interaction from encoder separation and from generic multiplicative coupling. A concat-of-separate variant, with independent encoders feeding a joint MLP, gives only a small improvement over the monolithic baseline, well below FaStR's gain. A full Tucker-core variant has the rank-deficient gradient issue of Section~\ref{sec:method-cp}, and a separate capacity test rules out the low-rank workaround. The CP interaction is therefore the source of the gain. The offline CP-misfit diagnostic (Appendix~\ref{app:cp-misfit}) further shows that gains track structural alignment with the dynamics, not action dimension alone.
\vspace{-4pt}
\subsection{Modular Reuse of $\phis$ under Actuator Shift}
\vspace{-10pt}
\label{sec:exp-transfer}
\begin{figure}[h]
  \centering
  \includegraphics[width=\textwidth]{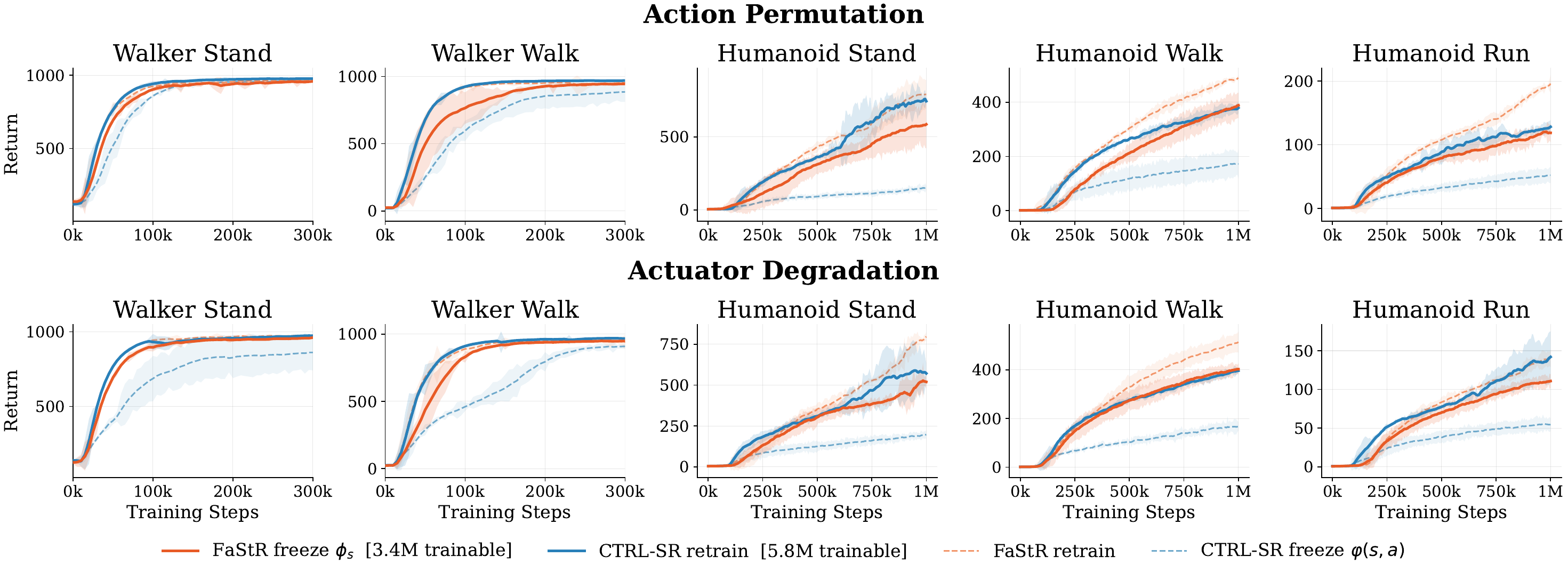}
\caption{Architectural diagnostic under actuator shift. Top: action permutation; bottom: actuator degradation. Walker ($\dim\Acal{=}6$) is a pre-shift parity check; Humanoid ($\dim\Acal{=}21$) gives the diagnostic. \textbf{What is preserved} (red solid vs.\ blue dashed): under the same freeze-and-adapt protocol, FaStR's $\phis$ remains usable across the shift while CTRL-SR's $\varphi$ does not. \textbf{At what cost} (red solid vs.\ blue solid): on Humanoid, FaStR with 18\% of its parameters held fixed in $\phis$ reaches the same final return as CTRL-SR retrained from scratch with all parameters trainable; trainable parameter counts for each condition are reported in the legend. Orange dashed (FaStR retrained from scratch) is a reference upper bound. Shaded: $\pm1$~std across 5~seeds. See Appendix~\ref{app:impl} for parameter setting.}
  \label{fig:transfer}
\end{figure}
We test transfer in a setting where the physical dynamics of the system are unchanged but the action-to-torque interface is rewritten. The CP factorization predicts what each architecture preserves under such a change. Under FaStR, $\phis$ depends only on the state-side dynamics of the body and should remain valid when only the action-to-torque map shifts; under a joint encoder, state and action are coupled through every parameter, so any change to the interface leaks throughout the encoder and no component is guaranteed to remain valid. The factored representation therefore admits an option a joint encoder does not: keep $\phis$, adapt only $\phia$.

We instantiate two such rewrites. Actuator degradation applies independent per-joint gain scaling and polarity inversion. Action permutation randomly reassigns joint channel indices. Together they capture the basic ways an action interface can drift, independent scaling and reordering, and serve as a basic simulation of the transfer problem without the confounding factors of full domain shift. The transition over true torques and the reward are both unchanged; only the path from policy output to actuation is rewritten. Source and target therefore share the same task, reward, and body, and differ only in the actuator interface, so any retraining cost reflects what the representation failed to preserve. For each architecture we freeze the largest state-dependent component ($\phis$ for FaStR, the joint encoder $\varphi$ for CTRL-SR) and reinitialize everything downstream. Walker is a parity check on matched pre-shift returns; Humanoid is the diagnostic.
\vspace{-4pt}
\paragraph{What is preserved, and at what cost.} On a CP-factored MDP, which component to keep across an actuator shift is determined by the representation, not a hyperparameter the practitioner has to choose. A monolithic encoder gives no such handle: the only options are to retrain everything or to freeze arbitrary subsets, neither of which uses structural information about the shift. A second concern is whether the frozen component is a worse starting point than learning from scratch, which would limit FaStR freeze to a lower final return than CTRL-SR scratch. The Humanoid match in Figure~\ref{fig:transfer} excludes this: the parameters that CTRL-SR retrains from scratch are the same ones FaStR keeps unchanged, reusing them matches CTRL-SR-scratch on the target task (Humanoid-Walk: 393.7 vs.\ 394.1 under permutation, 401.8 vs.\ 399.2 under degradation). The modularity is therefore useful in practice and not only as a structural property.
\vspace{-4pt}
\section{Related Work}
\label{sec:related}
\vspace{-2pt}
Modern spectral RL methods learn policy-agnostic features with self-supervised objectives that decompose the transition kernel: SPEDER~\citep{ren2022spectral} uses orthogonality-regularized spectral contrastive learning, CTRL-SR~\citep{zhang2022making,gao2025spectral} uses ranking-based perturbed NCE, Diff-SR~\citep{shribak2024diffusion} handles multimodal next states with a diffusion objective, and the Spectral Bellman Method~\citep{tang2024spectral} uses a Bellman-aligned principle. FaStR is orthogonal to this objective-level progress: it retains the NCE pipeline but replaces the fused encoder with a CP-factored one, moving factored structure from the objective into the representation itself. This also separates FaStR from prior uses of multiplicative coupling through FiLM~\citep{perez2018film}, hypernetworks~\citep{ha2017hypernetworks}, or temporal InfoNCE (a mutual-information contrastive loss) with split encoders~\citep{zheng2023taco}, which do not factorize the transition kernel itself. Low-rank MDPs~\citep{jin2020provably,yang2020reinforcement,agarwal2020flambe} and related exploration, representation, and matrix-completion analyses~\citep{modi2021model,uehara2022representation,du2019provably,stojanovic2023spectral,dubail2025shift} also impose structure on transitions, but treat the joint state-action feature as a single function. Other tensor or factored RL methods place structure on the Q-table, continuous-action value function, or multi-agent Q-tensor~\citep{rozada2024tensor,rozada2025tensor,lobel2023qfunctionals,mahajan2021tesseract}. FaStR sits between these lines by placing factored structure on the transition kernel within a deep spectral representation-learning framework.
\vspace{-2pt}
\section{Discussion and Conclusion}
\label{sec:conclusion}
\vspace{-2pt}
We introduce FaStR, a spectral representation learning framework that decomposes the transition kernel as a three-way tensor with state, action, and next-state factors, estimated through a trilinear structured contrastive objective. The resulting factored class has a quantifiable statistical advantage: transition samples constrain reusable state and action factors rather than isolated state-action pairs, so the frozen feature can become useful for Bellman backups earlier in training when the CP structure matches the dynamics. The same separation also supports modular adaptation: when the action-to-torque mapping shifts, the state factor transfers intact while only the action factor needs to be retrained. The broader principle is that representations should structurally mirror the physical factorization of future embodied systems into sensors, actuators, and the dynamics connecting them.

\newpage
\bibliographystyle{plainnat}
\bibliography{references}

@inproceedings{fujimoto2018addressing,
  author    = {Scott Fujimoto and Herke van Hoof and David Meger},
  title     = {Addressing Function Approximation Error in Actor-Critic Methods},
  booktitle = {Proceedings of the International Conference on Machine Learning},
  year      = {2018},
}

@inproceedings{lobel2023qfunctionals,
  title     = {{Q}-functionals for Value-Based Continuous Control},
  author    = {Samuel Lobel and Sreehari Rammohan and Bowen He and Shangqun Yu and George Konidaris},
  booktitle = {Proceedings of the AAAI Conference on Artificial Intelligence},
  volume    = {37},
  number    = {7},
  pages     = {8932--8939},
  year      = {2023},
}

@inproceedings{zheng2023taco,
  title={{TACO}: Temporal Latent Action-Driven Contrastive Loss for Visual Reinforcement Learning},
  author={Zheng, Ruijie and Wang, Xiyao and Sun, Yanchao and Ma, Shuang and Zhao, Jieyu and Xu, Huazhe and Daum{\'e} III, Hal and Huang, Furong},
  booktitle={Advances in Neural Information Processing Systems},
  year={2023}
}

@inproceedings{zhang2022making,
  title     = {Making Linear {MDPs} Practical via Contrastive Representation Learning},
  author    = {Zhang, Tianjun and Ren, Tongzheng and Yang, Mengjiao and
               Gonzalez, Joseph E. and Schuurmans, Dale and Dai, Bo},
  booktitle = {Proceedings of the International Conference on Machine Learning},
  series    = {Proceedings of Machine Learning Research},
  volume    = {162},
  pages     = {26447--26466},
  year      = {2022},
  publisher = {PMLR}
}

@article{tassa2018dmcontrol,
  title     = {{DeepMind Control Suite}},
  author    = {Tassa, Yuval and Doron, Yotam and Muldal, Alistair and Erez, Tom and Li, Yazhe and de Las Casas, Diego and Budden, David and Abdolmaleki, Abbas and Merel, Josh and Lefrancq, Andrew and Lillicrap, Timothy and Riedmiller, Martin},
  journal   = {arXiv preprint arXiv:1801.00690},
  year      = {2018},
}

@article{gao2025spectral,
  author  = {Gao, Chenxiao and Sun, Haotian and Li, Na and Schuurmans, Dale and Dai, Bo},
  title   = {Spectral Representation-Based Reinforcement Learning},
  journal = {arXiv preprint arXiv:2512.15036},
  year    = {2025},
}

@inproceedings{haarnoja2018soft,
  author    = {Tuomas Haarnoja and Aurick Zhou and Pieter Abbeel and Sergey Levine},
  title     = {Soft Actor-Critic: Off-Policy Maximum Entropy Deep Reinforcement Learning with a Stochastic Actor},
  booktitle = {Proceedings of the International Conference on Machine Learning},
  year      = {2018},
}

@inproceedings{jin2020provably,
  author    = {Chi Jin and Zhuoran Yang and Zhaoran Wang and Michael I. Jordan},
  title     = {Provably Efficient Reinforcement Learning with Linear Function Approximation},
  booktitle = {Proceedings of the Conference on Learning Theory},
  year      = {2020},
}

@inproceedings{rahimi2007random,
  author    = {Ali Rahimi and Benjamin Recht},
  title     = {Random Features for Large-Scale Kernel Machines},
  booktitle = {Advances in Neural Information Processing Systems},
  year      = {2007},
}

@inproceedings{ren2022spectral,
  author    = {Tongzheng Ren and Tianjun Zhang and Lisa Lee and Joseph E. Gonzalez and Dale Schuurmans and Bo Dai},
  title     = {Spectral Decomposition Representation for Reinforcement Learning},
  booktitle = {International Conference on Learning Representations},
  year      = {2023},
}

@article{rozada2024tensor,
  author  = {Sergio Rozada and Santiago Paternain and Antonio G. Marques},
  title   = {Tensor and Matrix Low-Rank Value-Function Approximation in Reinforcement Learning},
  journal = {IEEE Transactions on Signal Processing},
  volume  = {72},
  pages   = {1634--1649},
  year    = {2024},
}

@article{rozada2025tensor,
  author  = {Sergio Rozada and Hoi-To Wai and Antonio G. Marques},
  title   = {Multilinear Tensor Low-Rank Approximation for Policy-Gradient Methods in Reinforcement Learning},
  journal = {IEEE Transactions on Signal Processing},
  volume  = {73},
  pages   = {4906--4920},
  year    = {2025},
}

@inproceedings{shribak2024diffusion,
  title     = {Diffusion Spectral Representation for Reinforcement Learning},
  author    = {Dmitry Shribak and Chen-Xiao Gao and Yitong Li and Chenjun Xiao and Bo Dai},
  booktitle = {Advances in Neural Information Processing Systems},
  year      = {2024},
}

@article{stojanovic2023spectral,
  author  = {Stefan Stojanovic and Yassir Jedra and Alexandre Proutiere},
  title   = {Spectral Entry-Wise Matrix Estimation for Low-Rank Reinforcement Learning},
  journal = {arXiv preprint arXiv:2310.06793},
  year    = {2023},
}

@inproceedings{tang2024spectral,
  author    = {Ofir Nabati and Bo Dai and Shie Mannor and Guy Tennenholtz},
  title     = {Spectral {Bellman} Method: Unifying Representation and Exploration in {RL}},
  booktitle = {International Conference on Learning Representations},
  year      = {2026},
}

@book{wainwright2019high,
  author    = {Martin J. Wainwright},
  title     = {High-Dimensional Statistics: A Non-Asymptotic Viewpoint},
  publisher = {Cambridge University Press},
  year      = {2019},
}

@inproceedings{yang2020reinforcement,
  author    = {Lin Yang and Mengdi Wang},
  title     = {Reinforcement Learning in Feature Space: Matrix Bandit, Kernels, and Regret Bound},
  booktitle = {Proceedings of the International Conference on Machine Learning},
  year      = {2020},
}

@inproceedings{agarwal2020flambe,
  author    = {Alekh Agarwal and Sham Kakade and Akshay Krishnamurthy and Wen Sun},
  title     = {{FLAMBE}: Structural Complexity and Representation Learning of Low Rank {MDPs}},
  booktitle = {Advances in Neural Information Processing Systems},
  year      = {2020},
}

@inproceedings{uehara2022representation,
  author    = {Masatoshi Uehara and Xuezhou Zhang and Wen Sun},
  title     = {Representation Learning for Online and Offline {RL} in Low-Rank {MDPs}},
  booktitle = {International Conference on Learning Representations},
  year      = {2022},
}

@article{ma2018noise,
  title   = {Noise Contrastive Estimation and Negative Sampling for Conditional Models: Consistency and Statistical Efficiency},
  author  = {Zhuang Ma and Michael Collins},
  journal = {arXiv preprint arXiv:1809.01812},
  year    = {2018},
}

@article{dubail2025shift,
  title   = {Shift Before You Learn: Enabling Low-Rank Representations in Reinforcement Learning},
  author  = {Bastien Dubail and Stefan Stojanovic and Alexandre Proutiere},
  journal = {arXiv preprint arXiv:2509.05193},
  year    = {2025},
}

@article{modi2021model,
  author  = {Aditya Modi and Jinglin Chen and Akshay Krishnamurthy and Nan Jiang and Alekh Agarwal},
  title   = {Model-Free Representation Learning and Exploration in Low-Rank {MDPs}},
  journal = {Journal of Machine Learning Research},
  year    = {2024},
}

@inproceedings{du2019provably,
  author    = {Simon S. Du and Akshay Krishnamurthy and Nan Jiang and Alekh Agarwal and Miroslav Dud{\'\i}k and John Langford},
  title     = {Provably Efficient {RL} with Rich Observations via Latent State Decoding},
  booktitle = {Proceedings of the International Conference on Machine Learning},
  year      = {2019},
}

@inproceedings{lin2024because,
  author    = {Haohong Lin and Wenhao Ding and Jian Chen and Laixi Shi and Jiacheng Zhu and Bo Li and Ding Zhao},
  title     = {{BECAUSE}: Bilinear Causal Representation for Generalizable Offline Model-Based Reinforcement Learning},
  booktitle = {Advances in Neural Information Processing Systems},
  year      = {2024},
}

@inproceedings{du2021bilinear,
  title     = {Bilinear Classes: A Structural Framework for Provable Generalization in {RL}},
  author    = {Simon S. Du and Sham M. Kakade and Jason D. Lee and Shachar Lovett and Gaurav Mahajan and Wen Sun and Ruosong Wang},
  booktitle = {Proceedings of the International Conference on Machine Learning},
  year      = {2021},
}

@inproceedings{perez2018film,
  title     = {{FiLM}: Visual Reasoning with a General Conditioning Layer},
  author    = {Perez, Ethan and Strub, Florian and De Vries, Harm and Dumoulin, Vincent and Courville, Aaron},
  booktitle = {Proceedings of the AAAI Conference on Artificial Intelligence},
  year      = {2018},
}

@inproceedings{ha2017hypernetworks,
  title     = {{HyperNetworks}},
  author    = {Ha, David and Dai, Andrew and Le, Quoc V.},
  booktitle = {International Conference on Learning Representations},
  year      = {2017},
}

@inproceedings{fujimoto2023td7,
  title     = {For {SALE}: State-Action Representation Learning for Deep Reinforcement Learning},
  author    = {Fujimoto, Scott and Chang, Wei-Di and Smith, Edward J. and Gu, Shixiang Shane and Precup, Doina and Meger, David},
  booktitle = {Advances in Neural Information Processing Systems},
  year      = {2023},
}

@inproceedings{mahajan2021tesseract,
  title     = {Tesseract: Tensorised Actors for Multi-Agent Reinforcement Learning},
  author    = {Mahajan, Anuj and Samvelyan, Mikayel and Mao, Lei and Makoviychuk, Viktor and Garg, Animesh and Kossaifi, Jean and Whiteson, Shimon and Zhu, Yuke and Anandkumar, Animashree},
  booktitle = {Proceedings of the International Conference on Machine Learning},
  pages     = {7301--7312},
  year      = {2021},
  publisher = {PMLR},
  volume    = {139}
}

@article{kolda2009tensor,
  title   = {Tensor Decompositions and Applications},
  author  = {Kolda, Tamara G. and Bader, Brett W.},
  journal = {SIAM Review},
  volume  = {51},
  number  = {3},
  pages   = {455--500},
  year    = {2009},
  doi     = {10.1137/07070111X}
}

@article{tucker1966some,
  title   = {Some Mathematical Notes on Three-Mode Factor Analysis},
  author  = {Tucker, Ledyard R.},
  journal = {Psychometrika},
  volume  = {31},
  number  = {3},
  pages   = {279--311},
  year    = {1966},
  doi     = {10.1007/BF02289464}
}

@article{carroll1970analysis,
  title   = {Analysis of Individual Differences in Multidimensional Scaling via an {N}-Way Generalization of ``{Eckart-Young}'' Decomposition},
  author  = {Carroll, J. Douglas and Chang, Jih-Jie},
  journal = {Psychometrika},
  volume  = {35},
  number  = {3},
  pages   = {283--319},
  year    = {1970},
  doi     = {10.1007/BF02310791}
}

@techreport{harshman1970foundations,
  title       = {Foundations of the {PARAFAC} Procedure: Models and Conditions for an ``Explanatory'' Multi-Modal Factor Analysis},
  author      = {Harshman, Richard A.},
  institution = {University of California, Los Angeles},
  type        = {{UCLA} Working Papers in Phonetics},
  number      = {16},
  pages       = {1--84},
  year        = {1970}
}

@inproceedings{gutmann2010noise,
  title     = {Noise-Contrastive Estimation: A New Estimation Principle for Unnormalized Statistical Models},
  author    = {Gutmann, Michael and Hyv{\"a}rinen, Aapo},
  booktitle = {Proceedings of the International Conference on Artificial Intelligence and Statistics},
  series    = {Proceedings of Machine Learning Research},
  volume    = {9},
  pages     = {297--304},
  year      = {2010},
  editor    = {Teh, Yee Whye and Titterington, Mike},
  publisher = {PMLR}
}

@article{oord2018representation,
  title   = {Representation Learning with Contrastive Predictive Coding},
  author  = {van den Oord, Aaron and Li, Yazhe and Vinyals, Oriol},
  journal = {arXiv preprint arXiv:1807.03748},
  year    = {2018}
}

@article{bartlett2002rademacher,
  title   = {Rademacher and {G}aussian Complexities: Risk Bounds and Structural Results},
  author  = {Bartlett, Peter L. and Mendelson, Shahar},
  journal = {Journal of Machine Learning Research},
  volume  = {3},
  pages   = {463--482},
  year    = {2002}
}

@article{dudley1967sizes,
  title   = {The Sizes of Compact Subsets of {Hilbert} Space and Continuity of {Gaussian} Processes},
  author  = {Dudley, R. M.},
  journal = {Journal of Functional Analysis},
  volume  = {1},
  number  = {3},
  pages   = {290--330},
  year    = {1967},
  doi     = {10.1016/0022-1236(67)90017-1}
}

@incollection{mcdiarmid1989method,
  title     = {On the Method of Bounded Differences},
  author    = {McDiarmid, Colin},
  booktitle = {Surveys in Combinatorics, 1989},
  editor    = {Siemons, J.},
  series    = {London Mathematical Society Lecture Note Series},
  volume    = {141},
  pages     = {148--188},
  publisher = {Cambridge University Press},
  address   = {Cambridge},
  year      = {1989}
}

@inproceedings{song2021scorebased,
  title     = {Score-Based Generative Modeling through Stochastic Differential Equations},
  author    = {Song, Yang and Sohl-Dickstein, Jascha and Kingma, Diederik P. and Kumar, Abhishek and Ermon, Stefano and Poole, Ben},
  booktitle = {International Conference on Learning Representations},
  year      = {2021}
}
\newpage
\appendix
\section{Limitations.} 
The CP form $\phi_s(s)\odot\phi_a(a)$ treats the state and the action as two atomic axes. Environments with strongly coupled joints and tight state--action interdependence, such as Ant-style locomotion, fall outside the regime this two-way split captures cleanly, since the encoder is forced to absorb all sub-axis interactions internally. A hierarchical factorization over actuator or body-part groups, e.g.\ $\phi_s(s)=\phi_{\text{torso}}\odot\phi_{\text{limbs}}$ or a tree-shaped CP, fits inside the same framework and is a clean next step.Second, Diagonal $M$ corresponds to CP rank one along the next-state axis. A low-rank Tucker core $G\in\R^{d\times d}$ recovers more interaction terms but enlarges the effective NCE dimension from $d$ to $d^2$, reintroducing the constraint that motivated the diagonal choice. Mapping the rank--capacity trade-off, e.g.\ via low-rank rather than diagonal $M$, would close the gap between the CP and Tucker ends of this spectrum. Also, all experiments are conducted in the DMControl proprioceptive locomotion suite. Pixel-input variants and physical-robot deployment are not evaluated; the latter introduces sensor noise, latency, and actuator drift that the actuator-shift experiments only partially approximate.
\section{Implementation Details}
\label{app:impl}

This appendix specifies the full training and evaluation protocol used in Section~\ref{sec:experiments}, together with the architectural and optimization hyperparameters of FaStR and the five baselines reported in Table~\ref{tab:full-results}: CTRL-SR~\citep{gao2025spectral}, Diff-SR~\citep{shribak2024diffusion}, SAC~\citep{haarnoja2018soft} in two width-depth variants (3-layer and 2-layer), and TD7~\citep{fujimoto2023td7}. Hyperparameters that are common across methods (environment configuration, replay, evaluation cadence, total interaction budget) are listed once in Section~\ref{app:impl-protocol}; method-specific settings are listed in Sections~\ref{app:impl-fastr}--\ref{app:impl-td7} and summarized side-by-side in Table~\ref{tab:hyperparams}. We follow the convention of reporting every hyperparameter that was set in our runs, including those left at the original authors' defaults, so that the protocol is self-contained.

\subsection{Shared Training Protocol}
\label{app:impl-protocol}

All experiments use the DeepMind Control Suite~\citep{tassa2018dmcontrol} with proprioceptive observations, action repeat (frame skip) of 2, and episode length 1000 environment steps. We train each agent for $1{\times}10^6$ environment frames (i.e.\ $5{\times}10^5$ agent steps after action repeat) per task. Each agent maintains a single uniform-sampling replay buffer of capacity $1{\times}10^6$ transitions, with the first $10{,}000$ frames collected by a uniform random policy as a warm start before any gradient update. Evaluation is performed every $10{,}000$ environment frames by rolling out the deterministic actor (for TD3-based methods: TD3 mean action; for SAC: tanh-Gaussian mean) for $10$ episodes; we report the mean undiscounted return over the last 10 evaluation checkpoints, averaged over $5$ random seeds, with shaded bands or $\pm1$ standard deviations across seeds. Observation normalization (running mean and standard deviation) is enabled for the spectral methods (FaStR and CTRL-SR), following~\citet{gao2025spectral}, and disabled for the SAC and TD7 baselines, matching their original implementations. All methods use the discount factor $\gamma{=}0.99$, the same set of seeds $\{0,1,2,3,42\}$, and a single update per environment step (update-to-data ratio 1).

\subsection{FaStR (Ours)}
\label{app:impl-fastr}

\paragraph{Architecture.} All encoders, the noise schedule, and design defaults that are not explicitly varied below follow~\citet{gao2025spectral}. Encoders are ResidualMLPs with LayerNorm, ELU activations, and Mish activations inside the contrastive head. The state factor $\phis$ maps $\dim(\Scal)\to 512\to 512\to d$; the action factor $\phia$ maps $\dim(\Acal)\to 512\to 512\to d$; the kernel encoder $m$ maps $\dim(\Scal)+128\to 512\to 512\to d$, where the additional 128 dimensions encode the perturbation level via a sinusoidal time embedding. We set $d{=}512$ throughout. The factored feature $\psi(s,a)=\phis(s)\odot\phia(a)$ is fed to a twin Q critic consisting of a random Fourier features (RFF)~\citep{rahimi2007random} projection of width $1024$ followed by a 2-layer MLP head $[512]$ with LayerNorm and ELU; this critic is architecturally identical to the CTRL-SR critic so that the only varying component is the encoder. The actor is a deterministic policy network with hidden sizes $[512,512,512]$, ELU activations, and a final $\tanh$ that maps to $[-1,1]^{\dim(\Acal)}$.

\paragraph{Optimization.} All four networks (state factor, action factor, kernel encoder, critic) are trained with Adam (default $(\beta_1,\beta_2)=(0.9,0.999)$, no weight decay). Critic and actor learning rates are $3{\times}10^{-4}$; the representation learning rate (jointly applied to $\phis,\phia,m$) is $1{\times}10^{-4}$. Polyak averaging (i.e., $\bar\theta\leftarrow(1-\tau)\bar\theta+\tau\theta$) with $\tau{=}0.005$ is used for both the critic and the representation targets; actor and critic updates are performed every environment step (\texttt{actor\_update\_freq}=1). The mini-batch size is $N{=}512$, matching the negative-sample count $K{=}N{=}512$ implicit in the in-batch RP-NCE objective. Exploration uses Gaussian action noise $\Ncal(0,\sigma{=}0.2)$ clipped to $[-1,1]^{\dim(\Acal)}$; target action smoothing uses $\Ncal(0,\tilde\sigma{=}0.2)$ clipped to $[-0.3,0.3]$. Gradients of the critic loss are not back-propagated through the representation (\texttt{back\_critic\_grad}=false). One training step is summarized in Algorithm~\ref{alg:factored-sr}.

\paragraph{Representation objective.} We use the ranking-based perturbed NCE loss~\eqref{eq:nce} to train the trilinear transition-ratio score, with $M{=}25$ noise levels on a variance-preserving (VP) schedule. The ranking variant is enabled (\texttt{ranking}=true), and the auxiliary bilinear-MDP regression term of \citet{gao2025spectral} is kept at its default weight $\beta{=}1.0$. The combined training loss is $\ell_{\mathrm{FaStR}} + \lambda_r\,\ell_{\mathrm{reward}} + \lambda_Q\,\ell_{\mathrm{critic}}$ with $\lambda_{\mathrm{ctrl}}{=}\lambda_r{=}\lambda_Q{=}1.0$; the reward head is a 2-layer MLP $[512]$ on top of $\psi$, identical in structure to the CTRL-SR reward head.

\begin{algorithm}[h]
\caption{FaStR: one training step.  $\sg(\cdot)$: stop-gradient.}\label{alg:factored-sr}
\begin{algorithmic}[1]
\REQUIRE Encoders $\phis$, $\phia$, $m$; critic $Q_\theta$; actor $\pi_\xi$; target copies $\bar\phis$, $\bar\phia$, $\bar m$, $\bar Q$
\STATE Sample mini-batch $\{(s_i, a_i, r_i, s'_i)\}_{i=1}^N$; draw $K{-}1$ negatives per transition
\STATE \textbf{Representation:} $\psi_i \leftarrow \phis(s_i) \odot \phia(a_i)$\; for all $i$
\STATE \quad Update $(\phis, \phia, m)$ by minimizing $\ell_{\text{FaStR}}$ (Eq.~\ref{eq:nce})
\STATE \textbf{Critic:} $\bar\psi'_i \leftarrow \sg(\bar\phis(s'_i)) \odot \sg(\bar\phia(\pi_\xi(s'_i)))$;\; $y_i \leftarrow r_i + \gamma\, \bar Q(\bar\psi'_i)$
\STATE \quad Update $\theta$ by minimizing $\frac{1}{N}\sum_i (Q_\theta(\psi_i) - y_i)^2$
\STATE \textbf{Actor:} Update $\xi$ by maximizing $\frac{1}{N}\sum_i Q_\theta\!\bigl(\sg(\phis(s_i)) \odot \sg(\phia(\pi_\xi(s_i)))\bigr)$
\STATE \textbf{Targets:} $(\bar\phis, \bar\phia, \bar m, \bar Q) \leftarrow \tau\,(\phis, \phia, m, Q_\theta) + (1{-}\tau)\,(\bar\phis, \bar\phia, \bar m, \bar Q)$
\end{algorithmic}
\end{algorithm}

\subsection{CTRL-SR}
\label{app:impl-ctrlsr}

CTRL-SR~\citep{gao2025spectral} serves as the most controlled baseline: the only architectural difference from FaStR is that the transition-ratio logit uses a single monolithic encoder $\varphi:\Scal\times\Acal\to\R^d$ taking the concatenation $[s;a]$ as input, rather than a CP trilinear score built from separate state and action factors. The encoder $\varphi$ is a 2-layer ResidualMLP $[\dim(\Scal)+\dim(\Acal)] \to 512 \to 512 \to d$ with $d{=}512$, LayerNorm, and ELU activations; the kernel encoder $m$, RFF-MLP critic, deterministic actor $[512,512,512]$, reward head, optimizer (Adam), learning rates ($3{\times}10^{-4}$ for actor/critic and $1{\times}10^{-4}$ for representation), Polyak factor $\tau{=}0.005$, target/exploration action noise, batch size $N{=}512$, RP-NCE noise levels $M{=}25$, ranking objective, and update cadence are all identical to those of FaStR. We use the authors' open-source implementation without modification.

\subsection{Diff-SR}
\label{app:impl-diffsr}

Diff-SR~\citep{shribak2024diffusion} is the second spectral RL baseline. It uses the same TD3 actor and critic backbone as CTRL-SR and FaStR, but replaces the ranking-based perturbed NCE objective with a denoising-diffusion next-state model: the joint encoder $\phi:\Scal\times\Acal\to\R^d$ and the kernel encoder $\mu:\Scal\times[1,T]\to\R^{d\times\dim(\Scal)}$ are trained by score matching against the variance-preserving (VP) noise schedule of~\citet{shribak2024diffusion}, with the predicted noise factorized as $\hat\epsilon(s,a,s'_t,t)=\phi(s,a)^\top\mu(s'_t,t)$. The state and action inputs to $\phi$ are first projected by 2-layer Mish MLPs of widths $[256,128]$ to a shared embedding dimension of $128$ each, concatenated to a $256$-dim vector, and passed to a 3-layer ResidualMLP $256\to512\to512\to512\to d$ with $d{=}512$. The kernel encoder $\mu$ takes $(s'_t,t)$ as input, where $t$ is encoded by a sinusoidal positional embedding of width $128$ followed by a 2-layer Mish MLP, and outputs a $d\times\dim(\Scal)$ matrix via a 3-layer ResidualMLP $[\dim(\Scal)+128]\to 512\to 512\to 512\to d\cdot\dim(\Scal)$. The diffusion uses $T{=}50$ sample steps with the noisy state clamped to $[-10,10]$. The critic is the same RFF-MLP twin Q as CTRL-SR (RFF projection of width $1024$ followed by a 1-hidden-layer $[512]$ head); the deterministic actor is a $[512,512,512]$ MLP with LayerNorm and a final $\tanh$; the reward head is an RFF-MLP with the same shape as the critic head.

All four networks are trained with Adam. Critic and actor learning rates are $3{\times}10^{-4}$; the representation learning rate (jointly applied to $\phi,\mu$, and the reward head) is $1{\times}10^{-4}$. Polyak averaging uses $\tau{=}0.005$, and actor and critic updates are performed every environment step (\texttt{actor\_update\_freq}=1, \texttt{feature\_update\_ratio}=1). Mini-batch size is $N{=}512$, matching the spectral baselines. Exploration uses Gaussian noise $\Ncal(0,\sigma{=}0.2)$ clipped to $[-1,1]^{\dim(\Acal)}$; target action smoothing uses $\Ncal(0,\tilde\sigma{=}0.2)$ clipped to $[-0.3,0.3]$. The combined training loss is $\lambda_{\mathrm{diff}}\,\ell_{\mathrm{diffusion}} + \lambda_r\,\ell_{\mathrm{reward}} + \lambda_Q\,\ell_{\mathrm{critic}}$ with $\lambda_{\mathrm{diff}}{=}1.0$, $\lambda_r{=}0.1$, and $\lambda_Q{=}1.0$; gradients of the critic loss are not back-propagated through the representation (\texttt{back\_critic\_grad}=false). Observation normalization is enabled, matching CTRL-SR and FaStR. We use the open-source implementation accompanying~\citet{shribak2024diffusion} without modification.

\subsection{SAC}
\label{app:impl-sac}

SAC~\citep{haarnoja2018soft} uses a stochastic tanh-Gaussian policy and a twin Q critic with ELU activations and LayerNorm. The critic takes the raw $(s,a)$ concatenation as input; no representation learning is performed. We report two width-depth variants. SAC(3-layer) uses hidden sizes $[1024,1024,1024]$ for both the actor and the critic, matching the standard SAC configuration on DM Control Suite. SAC(2-layer) uses hidden sizes $[1024,1024]$ for both, isolating the effect of removing one hidden layer at the same per-layer width. All other settings are shared between the two variants. Following~\citet{haarnoja2018soft}, the entropy coefficient $\alpha$ is learned online (automatic entropy tuning) toward a target entropy of $-\dim(\Acal)$, optimized via Adam and learning rate $3{\times}10^{-4}$ from an initial value $\alpha_0{=}0.2$. Actor, critic, and entropy coefficient are all trained with Adam at learning rate $3{\times}10^{-4}$; soft target updates use Polyak factor $\tau{=}0.005$ at every environment step, and the actor is updated every step (\texttt{actor\_update\_freq}=1). Mini-batch size is $512$ for parity with the spectral methods. There is no exploration noise injected on top of the policy stochasticity, no observation normalization, and no gradient clipping.

\subsection{TD7}
\label{app:impl-td7}

TD7~\citep{fujimoto2023td7} extends TD3 with a learned state-action encoder ($zs,zsa$) trained by latent dynamics regression, Loss-Adjusted Prioritized (LAP) replay with per-transition priorities, behavior-cloning regularization at the policy update, and policy checkpointing. We use the official implementation released by the authors with the original DM Control Suite hyperparameters; the only configuration we set is the task identifier and the seed. For completeness: encoder, actor, and critic each use 2-hidden-layer MLPs of width $256$ with ELU activations (ReLU for the actor) and the AvgL1Norm normalization of~\citet{fujimoto2023td7} (per-layer activation rescaling); the encoded state-action dimension is $z_{\mathrm{dim}}{=}256$. Optimizer is Adam with learning rate $3{\times}10^{-4}$ for all three networks. Replay uses LAP with $\alpha{=}0.4$, $\min$-priority $1$, and Huber critic loss; the discount is $\gamma{=}0.99$ and the target network is updated by a hard copy every $250$ updates (rather than Polyak averaging). Exploration uses Gaussian noise $\Ncal(0,0.1)$ clipped to $[-1,1]$; target policy smoothing uses $\Ncal(0,0.2)$ clipped to $[-0.5,0.5]$; the actor is updated every $2$ critic updates with behavior-cloning weight $\lambda_{\mathrm{BC}}{=}0.1$. Policy checkpointing uses up to $20$ episodes per checkpoint and starts after $7.5{\times}10^5$ agent steps with weight-reset coefficient $0.9$. Mini-batch size is $256$ (the value used in the original TD7 paper); we did not tune this for parity with the spectral methods. Random exploration runs for the first $25{,}000$ agent steps before any training begins.

\subsection{Hyperparameter Summary}
\label{app:impl-summary}

Table~\ref{tab:hyperparams} consolidates the values reported above. Dashes (\textemdash) indicate that a hyperparameter does not apply to a given method (e.g., a representation learning rate for SAC). Where a method has additional method-specific hyperparameters not listed in the table (LAP $\alpha$ for TD7, BC weight, target entropy for SAC, RP-NCE noise schedule for the spectral methods), we have specified them in the corresponding paragraph above and use the original authors' default values.

\begin{table}[ht]
\centering
\caption{Hyperparameters for FaStR and the five baselines used in Table~\ref{tab:full-results}. Values shared across all methods (discount $\gamma{=}0.99$, $1{\times}10^6$ environment frames, replay capacity $1{\times}10^6$, action repeat $2$, $5$ seeds, $10$ evaluation episodes every $10{,}000$ frames) are described in Section~\ref{app:impl-protocol}.}
\label{tab:hyperparams}
\footnotesize
\setlength{\tabcolsep}{4pt}
\resizebox{\textwidth}{!}{%
\begin{tabular}{@{}l cccccc@{}}
\toprule
Parameter & FaStR (Ours) & CTRL-SR & Diff-SR & SAC(3-layer) & SAC(2-layer) & TD7 \\
\midrule
Optimizer                          & Adam          & Adam          & Adam          & Adam          & Adam          & Adam \\
Actor learning rate                & $3{\times}10^{-4}$ & $3{\times}10^{-4}$ & $3{\times}10^{-4}$ & $3{\times}10^{-4}$ & $3{\times}10^{-4}$ & $3{\times}10^{-4}$ \\
Critic learning rate               & $3{\times}10^{-4}$ & $3{\times}10^{-4}$ & $3{\times}10^{-4}$ & $3{\times}10^{-4}$ & $3{\times}10^{-4}$ & $3{\times}10^{-4}$ \\
Representation/encoder LR          & $1{\times}10^{-4}$ & $1{\times}10^{-4}$ & $1{\times}10^{-4}$ & \textemdash      & \textemdash      & $3{\times}10^{-4}$ \\
Mini-batch size                    & 512           & 512           & 512           & 512           & 512           & 256 \\
Target update                      & Polyak $\tau{=}0.005$ & Polyak $\tau{=}0.005$ & Polyak $\tau{=}0.005$ & Polyak $\tau{=}0.005$ & Polyak $\tau{=}0.005$ & Hard, every 250 updates \\
Actor update frequency             & 1             & 1             & 1             & 1             & 1             & 2 \\
Actor hidden sizes                 & [512,512,512] & [512,512,512] & [512,512,512] & [1024,1024,1024] & [1024,1024] & [256,256] \\
Critic hidden sizes                & [512] (after RFF 1024) & [512] (after RFF 1024) & [512] (after RFF 1024) & [1024,1024,1024] & [1024,1024] & [256,256] \\
Activation                         & ELU/Mish + LN & ELU/Mish + LN & Mish + LN     & ELU + LN      & ELU + LN      & ELU + AvgL1Norm \\
Number of critics                  & 2 (twin Q)    & 2 (twin Q)    & 2 (twin Q)    & 2 (twin Q)    & 2 (twin Q)    & 2 (twin Q) \\
Exploration noise $\sigma$         & 0.2           & 0.2           & 0.2           & policy entropy & policy entropy & 0.1 \\
Target policy noise $\tilde\sigma$ & 0.2 (clip $\pm0.3$) & 0.2 (clip $\pm0.3$) & 0.2 (clip $\pm0.3$) & \textemdash & \textemdash & 0.2 (clip $\pm0.5$) \\
Feature dim $d$                    & 512           & 512           & 512           & \textemdash      & \textemdash      & 256 ($z_{\mathrm{dim}}$) \\
Encoder hidden sizes               & [512,512]     & [512,512]     & [512,512,512] & \textemdash      & \textemdash      & [256,256] \\
RFF projection width               & 1024          & 1024          & 1024          & \textemdash      & \textemdash      & \textemdash \\
Representation objective           & RP-NCE (ranking) & RP-NCE (ranking) & VP diffusion ($T{=}50$) & \textemdash & \textemdash & latent dynamics regression \\
RP-NCE noise levels $M$            & 25            & 25            & \textemdash    & \textemdash      & \textemdash      & \textemdash \\
Negative samples $K$               & 512 (in-batch) & 512 (in-batch) & \textemdash   & \textemdash      & \textemdash      & \textemdash \\
Replay sampling                    & uniform       & uniform       & uniform       & uniform       & uniform       & LAP ($\alpha{=}0.4$) \\
Observation normalization          & running stats & running stats & running stats & none          & none          & none \\
Random/warmup frames               & 10{,}000      & 10{,}000      & 10{,}000      & 10{,}000      & 10{,}000      & 50{,}000 (25k agent steps) \\
\bottomrule
\end{tabular}%
}
\end{table}
\section{Proof Details}
\subsection{Notation, Conventions, and Standing Assumptions}
\label{app:notation}
We work with a discounted MDP $(\Scal,\Acal,P,r,\gamma,\rho_0)$ on a continuous state space $\Scal$ and continuous action space $\Acal$, with $\gamma \in [0,1)$ and reward bounded by $R_{\max}$. We assume the existence of a fixed reference \emph{probability measure} $\nu$ on $\Scal$ that dominates every transition kernel, so that $P(\cdot\mid s,a)\ll \nu$ and admits a density $P(s'\mid s,a)$ with respect to $\nu$. All integrals over $\Scal$ are then taken with respect to $\nu$, i.e.\ $\int_\Scal f(s')\,ds' \equiv \int_\Scal f(s')\,d\nu(s')$. This is the standard setting for $L^2(\nu)$-based representation-learning analyses and fixes the measure with respect to which all densities and risks are evaluated.

\paragraph{Symbols.}
\begin{itemize}[leftmargin=2em,itemsep=2pt]
\item $\phis:\Scal\to\R^d$, $\phia:\Acal\to\R^d$, $m:\Scal\to\R^d$: state encoder, action encoder, kernel encoder.
\item $\psi(s,a) := \phis(s)\odot\phia(a) \in \R^d$: \emph{factored} (Hadamard) feature.
\item $\varphi(s,a)\in\R^d$, $\mu:\Scal\to\R^d$: \emph{joint} (monolithic) encoder and kernel encoder.
\item $\Fcal_s,\Fcal_a,\Fcal_m,\Fcal_{sa}$: hypothesis classes for $\phis,\phia,m,\varphi$.
\item $\Gcal_{\mathrm{fac}} := \{(s,a,s')\mapsto (\phis(s)\odot\phia(a))^\top m(s'):\phis\in\Fcal_s,\phia\in\Fcal_a, m\in\Fcal_m\}$.
\item $\Gcal_{\mathrm{joint}} := \{(s,a,s')\mapsto \varphi(s,a)^\top\mu(s'):\varphi\in\Fcal_{sa},\mu\in\Fcal_m\}$.
\end{itemize}

\paragraph{Boundedness assumptions.}
We assume there exist constants $B_\phi,B_\varphi,B_{m,\infty},B_{m,1},B_g \ge 1$ such that, uniformly over their hypothesis classes,
\begin{align*}
\sup_{\phis\in\Fcal_s}\sup_{s\in\Scal}\|\phis(s)\|_2,\,
\sup_{\phia\in\Fcal_a}\sup_{a\in\Acal}\|\phia(a)\|_2 &\le B_\phi,
&
\sup_{\varphi\in\Fcal_{sa}}\sup_{(s,a)}\|\varphi(s,a)\|_2 &\le B_\varphi, \\
\sup_{h\in\Fcal_m}\sup_{s'\in\Scal}\|h(s')\|_2 &\le B_{m,\infty},
&
\sup_{h\in\Fcal_m}\int_\Scal \|h(s')\|_2\,d\nu(s') &\le B_{m,1},
\end{align*}
and $|g(s,a,s')|\le B_g$ for every $g\in\Gcal_{\mathrm{fac}}\cup\Gcal_{\mathrm{joint}}$. Thus both the factored kernel encoder $m$ and the joint kernel encoder $\mu$ obey the same pointwise and $L^1(\nu)$ bounds. For a single continuous encoder on compact $\Scal$ these bounds are automatic; for a hypothesis class we impose them uniformly. We separate the pointwise bound $B_{m,\infty}$ (used for covering numbers and sup-norm Lipschitz arguments) from the integral bound $B_{m,1}$ (used to verify that integrals against $V^\pi$ are finite); the two regimes appear in different proofs.

\paragraph{Rademacher complexity and covering numbers.}
For a fixed sample $S=(z_1,\dots,z_n)$ and a scalar-valued class $\Hcal$, define the empirical absolute Rademacher complexity
\[
\widehat\Rcal_S(\Hcal)
:=
\E_\sigma\!\left[
\sup_{h\in\Hcal}
\left|
\frac{1}{n}\sum_{i=1}^n \sigma_i\, h(z_i)
\right|
\right],
\qquad
\sigma_i\stackrel{\mathrm{iid}}{\sim}\mathrm{Unif}\{\pm1\}.
\]
The expected Rademacher complexity is
\[
\Rcal_n(\Hcal):=\E_S\!\left[\widehat\Rcal_S(\Hcal)\right].
\]
All finite-sample bounds use this expected version. Set-inclusion arguments hold pointwise for $\widehat\Rcal_S$ and therefore also after taking expectation over $S$.
For scalar-valued classes, $\Ncal(\Hcal,\epsilon)$ denotes the covering number under
$d_\infty(h,h')=\sup_z |h(z)-h'(z)|$. For vector-valued encoder classes such as $\Fcal_s,\Fcal_a,\Fcal_m,\Fcal_{sa}$, the covering metric is
\[
d_\infty(f,f')=\sup_x\|f(x)-f'(x)\|_2.
\]

\paragraph{Best-in-class CP error.} For non-realizable settings we write
\[
\epsilon_{\mathrm{CP}}^2 \;:=\; \inf_{g\in\Gcal_{\mathrm{fac}}}\; \E_{(s,a)\sim\rho}\!\left[\int_\Scal \bigl(P(s'\mid s,a) - g(s,a,s')\bigr)^2\,d\nu(s')\right].
\]
Under CP realizability (Assumption~\ref{assump:factored}), $\epsilon_{\mathrm{CP}}=0$.

\bigskip

\paragraph{Proof architecture.}
The results in this appendix support a single claim: the CP factorization of the transition kernel gives a representation whose statistical complexity decomposes additively in $(d_s,d_a)$, whereas the generic uniform-convergence bound for an unrestricted joint encoder scales with the full joint dimension $d_s+d_a$.
Appendix~\ref{app:proof-bilinear-Q} derives the Q-linearity stated in Section~\ref{sec:method-cp}: under CP factorization, every $Q^\pi$ is linear in the Hadamard feature, so the factored class is matched to the structure of the transition kernel rather than chosen as an architectural convenience.
Appendix~\ref{app:proof-complexity} proves Theorem~\ref{thm:complexity}, the main technical step: the Hadamard product converts the joint covering of $\Gcal_{\mathrm{fac}}$ into a sum of three independent component coverings, and the same set-inclusion argument gives a Rademacher ordering between the two classes. Appendix~\ref{app:proof-rademacher} states this Rademacher monotonicity on its own, since it follows from set inclusion without needing the covering bound.
Appendix~\ref{app:proof-finite-sample} converts the Rademacher bound into an excess-risk bound for the spectral $L_2$ objective with constants made explicit (Proposition~\ref{prop:finite-sample}). Appendix~\ref{app:proof-sample-sep} converts the additive vs.\ multiplicative entropy gap into the sufficient-sample-size gap implied by these bounds (Proposition~\ref{prop:sample-sep}). It ends with a same-$n$ comparison that holds without any Lipschitz instantiation: under the containment condition, both Rademacher terms in Proposition~\ref{prop:finite-sample} are no larger for the factored class, so the certified upper bound on representation error is no larger at every sample size (Remark~\ref{rmk:same-n}).

\subsection{Derivation of the Q-linearity under CP}
\label{app:proof-bilinear-Q}

The CP transition structure makes every $Q^\pi$ linear in the Hadamard feature $\psi$.
This appendix gives the linear-MDP derivation specialized to the CP feature, together
with norm bounds on the value coefficients used downstream. We spell out the
interchange of summation and integration and the norm estimates in detail.

\paragraph{Setup.}
Suppose Assumption~\ref{assump:factored} holds. That is, the reward decomposes as
\[
r(s,a)=\psi(s,a)^\top \theta_r,
\]
where
\[
\psi(s,a)
=
\phi_s(s)\odot \phi_a(a)
\in \R^d,
\qquad
\psi_k(s,a)=\phi_{s,k}(s)\phi_{a,k}(a),
\]
and the transition density factorizes as
\begin{equation}\label{eq:cp-density}
P(s'\mid s,a)
=
\sum_{k=1}^d
\phi_{s,k}(s)\,\phi_{a,k}(a)\,m_k(s')
=
\sum_{k=1}^d
\psi_k(s,a)m_k(s').
\end{equation}
Here each $m_k$ is measurable and belongs to $L^1(\nu)$. We also use the
boundedness conventions of Appendix~\ref{app:notation}; in particular, writing
\[
m(s')=(m_1(s'),\ldots,m_d(s'))^\top,
\]
we have
\[
\int_\Scal \|m(s')\|_2\,d\nu(s')\le B_{m,1}.
\]
Consequently, for every coordinate $k$,
\[
\int_\Scal |m_k(s')|\,d\nu(s')
\le
\int_\Scal \|m(s')\|_2\,d\nu(s')
\le B_{m,1}.
\]

\paragraph{Claim.}
For every stationary policy $\pi$, there exists $w^\pi\in\R^d$ such that
\[
Q^\pi(s,a)=\psi(s,a)^\top w^\pi,
\]
where
\[
w^\pi_k=\theta_{r,k}+\gamma\theta^\pi_k,
\qquad
\theta^\pi_k
:=
\int_\Scal m_k(s')V^\pi(s')\,d\nu(s').
\]
Moreover,
\[
\|w^\pi\|_2
\le
\|\theta_r\|_2
+
\gamma\,\frac{B_{m,1}R_{\max}}{1-\gamma}.
\]

\begin{proof}
Fix an arbitrary stationary policy $\pi$. We prove the result for this fixed
$\pi$.

First, by the definition of the discounted value function,
\[
V^\pi(s)
=
\E_\pi\!\left[
\sum_{t=0}^{\infty}\gamma^t r(s_t,a_t)
\,\middle|\,s_0=s
\right].
\]
Since $|r(s,a)|\le R_{\max}$ for all $(s,a)$ and $0\le \gamma<1$, we have, for
every $s$,
\begin{align*}
|V^\pi(s)|
&=
\left|
\E_\pi\!\left[
\sum_{t=0}^{\infty}\gamma^t r(s_t,a_t)
\,\middle|\,s_0=s
\right]
\right|                                                     \\
&\le
\E_\pi\!\left[
\left|
\sum_{t=0}^{\infty}\gamma^t r(s_t,a_t)
\right|
\,\middle|\,s_0=s
\right]                                                     \\
&\le
\E_\pi\!\left[
\sum_{t=0}^{\infty}\gamma^t |r(s_t,a_t)|
\,\middle|\,s_0=s
\right]                                                     \\
&\le
\E_\pi\!\left[
\sum_{t=0}^{\infty}\gamma^t R_{\max}
\,\middle|\,s_0=s
\right]                                                     \\
&=
R_{\max}\sum_{t=0}^{\infty}\gamma^t
=
\frac{R_{\max}}{1-\gamma}.
\end{align*}
Therefore,
\begin{equation}\label{eq:value-uniform-bound}
\|V^\pi\|_\infty
\le
\frac{R_{\max}}{1-\gamma}.
\end{equation}

The Bellman equation for $Q^\pi$ is
\begin{equation}\label{eq:bellman}
Q^\pi(s,a)
=
r(s,a)
+
\gamma
\int_\Scal
P(s'\mid s,a)V^\pi(s')\,d\nu(s').
\end{equation}
We now expand the transition term in \eqref{eq:bellman}. Substituting the
CP density representation \eqref{eq:cp-density} gives
\begin{align}
\int_\Scal P(s'\mid s,a)V^\pi(s')\,d\nu(s')
&=
\int_\Scal
\left(
\sum_{k=1}^d
\phi_{s,k}(s)\phi_{a,k}(a)m_k(s')
\right)
V^\pi(s')\,d\nu(s')                                      \notag\\
&=
\int_\Scal
\left(
\sum_{k=1}^d
\psi_k(s,a)m_k(s')
\right)
V^\pi(s')\,d\nu(s').
\label{eq:expand-cp}
\end{align}

Before moving the finite sum outside the integral, we check that each term is
integrable. For a fixed coordinate $k$, using \eqref{eq:value-uniform-bound},
\begin{align*}
\int_\Scal
\left|
m_k(s')V^\pi(s')
\right|
\,d\nu(s')
&\le
\int_\Scal
|m_k(s')|\,|V^\pi(s')|
\,d\nu(s')                                                \\
&\le
\frac{R_{\max}}{1-\gamma}
\int_\Scal
|m_k(s')|
\,d\nu(s')                                                \\
&\le
\frac{R_{\max}}{1-\gamma}
B_{m,1}
<
\infty.
\end{align*}
Thus $m_kV^\pi$ is integrable for every $k$, and the quantity
\[
\theta^\pi_k
:=
\int_\Scal m_k(s')V^\pi(s')\,d\nu(s')
\]
is well-defined.

Since the sum over $k$ contains only finitely many terms, exchanging the sum and
the integral uses only the elementary linearity of the integral. Therefore,
\begin{align}
\int_\Scal
\left(
\sum_{k=1}^d
\psi_k(s,a)m_k(s')
\right)
V^\pi(s')\,d\nu(s')
&=
\int_\Scal
\sum_{k=1}^d
\psi_k(s,a)m_k(s')V^\pi(s')\,d\nu(s')                 \notag\\
&=
\sum_{k=1}^d
\int_\Scal
\psi_k(s,a)m_k(s')V^\pi(s')\,d\nu(s')                 \notag\\
&=
\sum_{k=1}^d
\psi_k(s,a)
\int_\Scal
m_k(s')V^\pi(s')\,d\nu(s')                            \notag\\
&=
\sum_{k=1}^d
\psi_k(s,a)\theta^\pi_k.
\label{eq:transition-linear}
\end{align}
In the third line above, $\psi_k(s,a)$ is taken outside the integral because,
for fixed $(s,a)$, it is a scalar that does not depend on $s'$.

Next, we expand the reward term. Since $r(s,a)=\psi(s,a)^\top\theta_r$, we have
\begin{equation}\label{eq:reward-linear}
r(s,a)
=
\sum_{k=1}^d
\psi_k(s,a)\theta_{r,k}
=
\sum_{k=1}^d
\phi_{s,k}(s)\phi_{a,k}(a)\theta_{r,k}.
\end{equation}

Combining \eqref{eq:bellman}, \eqref{eq:transition-linear}, and
\eqref{eq:reward-linear}, we obtain
\begin{align*}
Q^\pi(s,a)
&=
r(s,a)
+
\gamma
\int_\Scal P(s'\mid s,a)V^\pi(s')\,d\nu(s')                  \\
&=
\sum_{k=1}^d
\psi_k(s,a)\theta_{r,k}
+
\gamma
\sum_{k=1}^d
\psi_k(s,a)\theta^\pi_k                                      \\
&=
\sum_{k=1}^d
\psi_k(s,a)\theta_{r,k}
+
\sum_{k=1}^d
\psi_k(s,a)\gamma\theta^\pi_k                                \\
&=
\sum_{k=1}^d
\psi_k(s,a)
\left(
\theta_{r,k}+\gamma\theta^\pi_k
\right).
\end{align*}
Define $w^\pi\in\R^d$ coordinatewise by
\[
w^\pi_k
:=
\theta_{r,k}+\gamma\theta^\pi_k,
\qquad k=1,\ldots,d.
\]
Then the previous display becomes
\[
Q^\pi(s,a)
=
\sum_{k=1}^d
\psi_k(s,a)w^\pi_k
=
\psi(s,a)^\top w^\pi.
\]
This proves the claimed linear representation of $Q^\pi$.

It remains to prove the stated norm bound. By definition,
\[
\theta^\pi
=
\left(
\int_\Scal m_1(s')V^\pi(s')\,d\nu(s'),
\ldots,
\int_\Scal m_d(s')V^\pi(s')\,d\nu(s')
\right)^\top.
\]
Equivalently, in vector notation,
\[
\theta^\pi
=
\int_\Scal m(s')V^\pi(s')\,d\nu(s'),
\]
where the vector integral is understood coordinatewise.

We show that
\[
\|\theta^\pi\|_2
\le
\frac{B_{m,1}R_{\max}}{1-\gamma}.
\]
If $\theta^\pi=0$, this inequality is immediate. Otherwise, let
\[
u
=
\frac{\theta^\pi}{\|\theta^\pi\|_2}.
\]
Then $\|u\|_2=1$, and hence
\begin{align*}
\|\theta^\pi\|_2
&=
u^\top \theta^\pi                                             \\
&=
u^\top
\int_\Scal m(s')V^\pi(s')\,d\nu(s')                            \\
&=
\int_\Scal u^\top m(s')V^\pi(s')\,d\nu(s')                      \\
&\le
\int_\Scal
\left|
u^\top m(s')V^\pi(s')
\right|
\,d\nu(s')                                                     \\
&=
\int_\Scal
\left|
u^\top m(s')
\right|
\left|
V^\pi(s')
\right|
\,d\nu(s')                                                     \\
&\le
\int_\Scal
\|u\|_2\|m(s')\|_2
\left|
V^\pi(s')
\right|
\,d\nu(s')                                                     \\
&=
\int_\Scal
\|m(s')\|_2
\left|
V^\pi(s')
\right|
\,d\nu(s')                                                     \\
&\le
\frac{R_{\max}}{1-\gamma}
\int_\Scal
\|m(s')\|_2
\,d\nu(s')                                                     \\
&\le
\frac{B_{m,1}R_{\max}}{1-\gamma}.
\end{align*}
The inequality
\[
|u^\top m(s')|\le \|u\|_2\|m(s')\|_2
\]
is the Cauchy--Schwarz inequality. The key point is that this argument bounds
the Euclidean norm of the whole vector $\theta^\pi$ directly, and therefore
does not introduce an unnecessary $\sqrt d$ factor.

Finally, since
\[
w^\pi=\theta_r+\gamma\theta^\pi,
\]
the triangle inequality gives
\begin{align*}
\|w^\pi\|_2
&=
\|\theta_r+\gamma\theta^\pi\|_2                                  \\
&\le
\|\theta_r\|_2+\|\gamma\theta^\pi\|_2                              \\
&=
\|\theta_r\|_2+\gamma\|\theta^\pi\|_2                              \\
&\le
\|\theta_r\|_2
+
\gamma\,\frac{B_{m,1}R_{\max}}{1-\gamma}.
\end{align*}
This proves the claimed norm bound and completes the proof.
\end{proof}

\paragraph{Remark (Linear-MDP structure and feature bound)}\label{rmk:linear-mdp}
Defining signed measures $\mu_k(A):=\int_A m_k(s')\,d\nu(s')$, the CP transition density is equivalent to $P(\cdot\mid s,a)=\sum_{k=1}^d\psi_k(s,a)\mu_k(\cdot)$. Together with $r(s,a)=\psi(s,a)^\top\theta_r$, this matches the linear-MDP definition of \citet[Definition~2]{jin2020provably} with feature map $\psi$, used in Appendix~\ref{app:proof-sample-sep}. The same swap-of-sum-and-integral argument as the proof of Q-linearity above gives, for any bounded measurable $f:\Scal\to\R$, $\int_\Scal P(s'\mid s,a)f(s')\,d\nu(s')=\psi(s,a)^\top\theta_f$ with $(\theta_f)_k:=\int_\Scal m_k(s')f(s')\,d\nu(s')$ and $\|\theta_f\|_2\le B_{m,1}\|f\|_\infty$. Finally, the Hadamard-product inequality $\|u\odot v\|_2\le\|u\|_2\|v\|_2$ applied to $\psi=\phi_s\odot\phi_a$ gives the uniform feature bound $\|\psi(s,a)\|_2\le B_\phi^2$.

\subsection{Proof of Theorem~\ref{thm:complexity}: Additive Covering Number Decomposition}
\label{app:proof-complexity}

This is the main technical step. The Hadamard product structure converts the joint
covering of $\Gcal_{\mathrm{fac}}$ into a product of three independent coverings
on the state, action, and next-state components. After taking logarithms, this
product becomes an additive decomposition of metric entropies. Without using this
decomposition, a generic uniform-convergence treatment of the state-action pair
would cover the whole state-action feature class directly and would therefore give
the same $(d_s+d_a)$-dimensional entropy scaling as the joint class. The
Hadamard structure reduces the factored upper bound to the largest of the state
and action dimensions.

Throughout this appendix, covering numbers for the component classes are taken
with respect to the uniform sup-$\ell_2$ metrics
\[
d_s(\phis,\widetilde \phis)
:=
\sup_{s\in\Scal}
\|\phis(s)-\widetilde \phis(s)\|_2,
\]
\[
d_a(\phia,\widetilde \phia)
:=
\sup_{a\in\Acal}
\|\phia(a)-\widetilde \phia(a)\|_2,
\]
and
\[
d_m(m,\widetilde m)
:=
\sup_{s'\in\Scal}
\|m(s')-\widetilde m(s')\|_2.
\]
For scoring functions $g:\Scal\times\Acal\times\Scal\to\R$, the covering number
is taken with respect to the usual uniform norm
\[
\|g-\widetilde g\|_\infty
:=
\sup_{(s,a,s')\in\Scal\times\Acal\times\Scal}
|g(s,a,s')-\widetilde g(s,a,s')|.
\]

\begin{theorem*}[Restatement of Theorem~\ref{thm:complexity}]\mbox{}
\begin{enumerate}[label=\emph{(\roman*)}]
\item \emph{Additive decomposition.} For every $\epsilon>0$,
\begin{equation}\label{eq:covering-decomp}
\log\Ncal(\Gcal_{\mathrm{fac}},\epsilon)
\le
\log\Ncal\!\Bigl(\Fcal_s,\tfrac{\epsilon}{3B_{m,\infty}B_\phi}\Bigr)
+
\log\Ncal\!\Bigl(\Fcal_a,\tfrac{\epsilon}{3B_{m,\infty}B_\phi}\Bigr)
+
\log\Ncal\!\Bigl(\Fcal_m,\tfrac{\epsilon}{3B_\phi^2}\Bigr).
\end{equation}
\item \emph{Containment.} If
\[
\Fcal_{sa}\supseteq
\{\phis\odot\phia:\phis\in\Fcal_s,\phia\in\Fcal_a\},
\]
then $\Gcal_{\mathrm{fac}}\subseteq\Gcal_{\mathrm{joint}}$, hence
\[
\Ncal(\Gcal_{\mathrm{fac}},\epsilon)
\le
\Ncal(\Gcal_{\mathrm{joint}},\epsilon)
\]
for every $\epsilon>0$.
\end{enumerate}
\end{theorem*}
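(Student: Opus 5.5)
For part (i) I would build an explicit product cover and then count it. Set $\epsilon_\phi:=\epsilon/(3B_{m,\infty}B_\phi)$ and $\epsilon_m:=\epsilon/(3B_\phi^2)$. Let $\Ccal_s,\Ccal_a,\Ccal_m$ denote minimal covers of $\Fcal_s,\Fcal_a,\Fcal_m$ at these radii under $d_s,d_a,d_m$, and let
\[
\widetilde\Gcal:=\{(\widetilde\phi_s\odot\widetilde\phi_a)^\top\widetilde m:\ \widetilde\phi_s\in\Ccal_s,\ \widetilde\phi_a\in\Ccal_a,\ \widetilde m\in\Ccal_m\}.
\]
Then $|\widetilde\Gcal|\le|\Ccal_s||\Ccal_a||\Ccal_m|$, and taking logarithms gives the right-hand side of \eqref{eq:covering-decomp}. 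What remains is to show that $\widetilde\Gcal$ is an $\epsilon$-cover of $\Gcal_{\mathrm{fac}}$ in $\|\cdot\|_\infty$.

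I would first choose the cover elements so that they themselves satisfy the norm bounds. The simplest way is to take the covers internal, i.e. $\Ccal_s\subseteq\Fcal_s$, and similarly for the other two classes. This costs at most a factor of two in radius in general. Alternatively, one can interpret $\Ncal$ as the internal covering number, which is what makes the stated constants exact. I would state this convention explicitly. If the paper intends external covers, the constants change only by absolute factors.

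Fix $g=(\phi_s\odot\phi_a)^\top m$, and let $\widetilde\phi_s,\widetilde\phi_a,\widetilde m$ be the nearest cover elements. For each $(s,a,s')$ I would telescope the difference as
\[
g-\widetilde g=((\phi_s-\widetilde\phi_s)\odot\phi_a)^\top m+(\widetilde\phi_s\odot(\phi_a-\widetilde\phi_a))^\top m+(\widetilde\phi_s\odot\widetilde\phi_a)^\top(m-\widetilde m).
\]
Each term is bounded by Cauchy--Schwarz together with $\|u\odot v\|_2\le\|u\|_2\|v\|_2$. The first term is at most $\epsilon_\phi B_\phi B_{m,\infty}=\epsilon/3$. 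The second is bounded the same way, using $\|\widetilde\phi_s\|\le B_\phi$. The third is at most $B_\phi^2\epsilon_m=\epsilon/3$. Summing the three and taking the supremum over $(s,a,s')$ gives $\|g-\widetilde g\|_\infty\le\epsilon$.

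For part (ii), given $g\in\Gcal_{\mathrm{fac}}$, set $\varphi:=\phi_s\odot\phi_a$. By hypothesis $\varphi\in\Fcal_{sa}$, and taking $\mu:=m\in\Fcal_m$ shows $g\in\Gcal_{\mathrm{joint}}$.

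For the monotonicity of the covering numbers under inclusion, I would again use internal covers (or absorb a factor of $2$ in $\epsilon$). The Rademacher statement follows pointwise in the sample $S$: the supremum over a subset is no larger than the supremum over the superset, and taking expectation over $S$ preserves this.

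The only delicate point is the boundedness of the cover centres in the middle and last terms, which is handled by the internal-cover convention above.
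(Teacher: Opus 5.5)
Your proof follows the paper's route. For (i) you use a product of three component covers, a three-term telescoping bound via Cauchy--Schwarz and $\|u\odot v\|_2\le\|u\|_2\|v\|_2$, and a count of triples. For (ii) you use the same containment $\varphi:=\phi_s\odot\phi_a$, $\mu:=m$. The only difference is the order of telescoping. The paper's lemma writes $g-\widetilde g=x^\top(m-\widetilde m)+(x-\widetilde x)^\top\widetilde m$, so it needs the centres $\widetilde\phi_s$ and $\widetilde m$ to be bounded. Your ordering needs $\widetilde\phi_s$ and $\widetilde\phi_a$ bounded and never uses a bound on $\widetilde m$. The paper simply assumes these centre bounds without justification, so raising the point is more careful than the original.

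One of your convention remarks is backwards, though. Internal covering numbers are not monotone under inclusion. At radius $1$, the set $\{-1,0,1\}$ is covered by the single internal centre $0$, but its subset $\{-1,1\}$ needs two internal centres. So ``again use internal covers'' does not give (ii) at the same $\epsilon$.

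As written, your argument proves (i) exactly but (ii) only with $\epsilon/2$ on the right under the internal convention. Under the external convention it is the reverse: (ii) is exact, but (i) loses a factor of $2$.

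Both parts can be made exact under the external convention, which is the one the paper's proof of (ii) relies on (``any $\epsilon$-cover of a set also covers all of its subsets''). Replace each centre $\widetilde f$ of the $\Fcal_s$ and $\Fcal_a$ covers by $\Pi_B\circ\widetilde f$, where $\Pi_B$ is Euclidean projection onto the closed ball of radius $B=B_\phi$. Every class member takes values in that ball, and $\Pi_B$ is $1$-Lipschitz. Hence the projected centres still cover at the same radius, and they satisfy exactly the bounds your telescoping needs, with no factor of $2$.
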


\begin{lemma}[Perturbation bound for factored CP scores]
\label{lem:cp-score-perturb}
Let
\[
g(s,a,s')=(\phis(s)\odot\phia(a))^\top m(s')
\]
and
\[
\widetilde g(s,a,s')
=
(\widetilde\phis(s)\odot\widetilde\phia(a))^\top\widetilde m(s').
\]
Assume
\[
\sup_s\|\phis(s)\|_2,\ \sup_s\|\widetilde\phis(s)\|_2\le B_\phi,
\qquad
\sup_a\|\phia(a)\|_2,\ \sup_a\|\widetilde\phia(a)\|_2\le B_\phi,
\]
and
\[
\sup_{s'}\|\widetilde m(s')\|_2\le B_{m,\infty}.
\]
Define
\[
\Delta_s:=\sup_s\|\phis(s)-\widetilde\phis(s)\|_2,
\qquad
\Delta_a:=\sup_a\|\phia(a)-\widetilde\phia(a)\|_2,
\]
and
\[
\Delta_m:=\sup_{s'}\|m(s')-\widetilde m(s')\|_2.
\]
Then
\begin{equation}\label{eq:sup-bound}
\|g-\widetilde g\|_\infty
\le
B_\phi^2\Delta_m
+
B_{m,\infty}B_\phi\Delta_s
+
B_{m,\infty}B_\phi\Delta_a.
\end{equation}
\end{lemma}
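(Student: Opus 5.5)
I will prove the bound pointwise at a fixed triple $(s,a,s')$ and then take the supremum. The plan is a telescoping (hybrid) decomposition that changes one factor at a time. Write $u=\phis(s)$, $\tilde u=\widetilde\phis(s)$, $v=\phia(a)$, $\tilde v=\widetilde\phia(a)$, $w=m(s')$, $\tilde w=\widetilde m(s')$. The difference $g-\widetilde g$ is then $(u\odot v)^\top w-(\tilde u\odot\tilde v)^\top\tilde w$. I will split it as
\[
(u\odot v)^\top(w-\tilde w)\;+\;\bigl((u-\tilde u)\odot v\bigr)^\top\tilde w\;+\;\bigl(\tilde u\odot(v-\tilde v)\bigr)^\top\tilde w .
\]
Expanding the three terms shows that they telescope back to the original difference, since the Hadamard product is bilinear.

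The order of the factors is chosen so that every bounded factor comes from a quantity the lemma actually controls. In the first term, the unperturbed $m$ is only differenced and never bounded in norm. This matters because the lemma assumes the sup-norm bound $B_{m,\infty}$ only for $\widetilde m$. The $\phi$-factors that appear undifferenced are $u$, $v$ and $\tilde u$, and all of them satisfy the $B_\phi$ bounds.

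Each term is bounded by Cauchy--Schwarz followed by the Hadamard inequality $\|x\odot y\|_2\le\|x\|_2\|y\|_2$, which the paper already used in the remark on the feature bound. This gives $|(u\odot v)^\top(w-\tilde w)|\le\|u\|_2\|v\|_2\|w-\tilde w\|_2\le B_\phi^2\Delta_m$. Similarly, $|((u-\tilde u)\odot v)^\top\tilde w|\le\Delta_s B_\phi B_{m,\infty}$ and $|(\tilde u\odot(v-\tilde v))^\top\tilde w|\le B_\phi\Delta_a B_{m,\infty}$.

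The Hadamard inequality itself is elementary. One has $\sum_k x_k^2y_k^2\le(\max_k y_k^2)\sum_k x_k^2\le\|x\|_2^2\|y\|_2^2$.

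Summing the three bounds with the triangle inequality and taking the supremum over $(s,a,s')$ yields \eqref{eq:sup-bound}. The only step needing real care is choosing the telescoping order that matches the one-sided hypothesis on $\widetilde m$. Once that order is fixed, the remaining steps are routine.
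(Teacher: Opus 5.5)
Your proof is correct and is the same argument as the paper's. The paper writes $x=\phis\odot\phia$ and first splits $g-\widetilde g=x^\top(m-\widetilde m)+(x-\widetilde x)^\top\widetilde m$, then splits $x-\widetilde x=(\phis-\widetilde\phis)\odot\phia+\widetilde\phis\odot(\phia-\widetilde\phia)$; these are exactly your three terms in the same order, which use the $B_{m,\infty}$ bound only on $\widetilde m$, and the paper bounds them the same way via Cauchy--Schwarz and $\|u\odot v\|_2\le\|u\|_2\|v\|_2$.
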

\begin{proof}
Write
\[
x(s,a):=\phis(s)\odot\phia(a),
\qquad
\widetilde x(s,a):=\widetilde\phis(s)\odot\widetilde\phia(a).
\]
For fixed $(s,a,s')$,
\[
g(s,a,s')-\widetilde g(s,a,s')
=
x(s,a)^\top\bigl(m(s')-\widetilde m(s')\bigr)
+
\bigl(x(s,a)-\widetilde x(s,a)\bigr)^\top\widetilde m(s').
\]
By Cauchy--Schwarz,
\[
|g(s,a,s')-\widetilde g(s,a,s')|
\le
\|x(s,a)\|_2\|m(s')-\widetilde m(s')\|_2
+
\|x(s,a)-\widetilde x(s,a)\|_2\|\widetilde m(s')\|_2.
\]
The Hadamard product bound gives
\[
\|x(s,a)\|_2
=
\|\phis(s)\odot\phia(a)\|_2
\le
\|\phis(s)\|_2\|\phia(a)\|_2
\le
B_\phi^2.
\]
Moreover,
\[
x(s,a)-\widetilde x(s,a)
=
(\phis(s)-\widetilde\phis(s))\odot\phia(a)
+
\widetilde\phis(s)\odot(\phia(a)-\widetilde\phia(a)).
\]
Using
\[
\|u\odot v\|_2\le \|v\|_\infty\|u\|_2
\le
\|v\|_2\|u\|_2,
\]
we obtain
\[
\|x(s,a)-\widetilde x(s,a)\|_2
\le
B_\phi\|\phis(s)-\widetilde\phis(s)\|_2
+
B_\phi\|\phia(a)-\widetilde\phia(a)\|_2.
\]
Taking the supremum over $(s,a,s')$ proves \eqref{eq:sup-bound}.
\end{proof}

\begin{proof}
We prove the two statements separately.
\paragraph{Part (i): additive decomposition.}
Choose component covers $\widehat\Fcal_s$, $\widehat\Fcal_a$, $\widehat\Fcal_m$
of $\Fcal_s,\Fcal_a,\Fcal_m$ at radii
\[
\epsilon_s:=\frac{\epsilon}{3B_{m,\infty}B_\phi},
\qquad
\epsilon_a:=\frac{\epsilon}{3B_{m,\infty}B_\phi},
\qquad
\epsilon_m:=\frac{\epsilon}{3B_\phi^2},
\]
respectively, under the corresponding uniform sup-$\ell_2$ metrics.
Take any
\[
g(s,a,s')=(\phis(s)\odot\phia(a))^\top m(s')
\in\Gcal_{\mathrm{fac}}.
\]
By the definitions of the three covers, choose
\[
\widehat\phis\in\widehat\Fcal_s,
\qquad
\widehat\phia\in\widehat\Fcal_a,
\qquad
\widehat m\in\widehat\Fcal_m
\]
such that
\[
\sup_s\|\phis(s)-\widehat\phis(s)\|_2\le \epsilon_s,
\qquad
\sup_a\|\phia(a)-\widehat\phia(a)\|_2\le \epsilon_a,
\qquad
\sup_{s'}\|m(s')-\widehat m(s')\|_2\le \epsilon_m.
\]
Define
\[
\widehat g(s,a,s')
=
(\widehat\phis(s)\odot\widehat\phia(a))^\top\widehat m(s').
\]
Applying Lemma~\ref{lem:cp-score-perturb} gives
\begin{align*}
\|g-\widehat g\|_\infty
&\le
B_\phi^2\epsilon_m
+
B_{m,\infty}B_\phi\epsilon_s
+
B_{m,\infty}B_\phi\epsilon_a        \\
&=
\frac{\epsilon}{3}
+
\frac{\epsilon}{3}
+
\frac{\epsilon}{3}
=
\epsilon.
\end{align*}
Thus the product class
\[
\widehat\Gcal_{\mathrm{fac}}
:=
\left\{
(s,a,s')\mapsto
(\widehat\phis(s)\odot\widehat\phia(a))^\top\widehat m(s')
:
\widehat\phis\in\widehat\Fcal_s,\
\widehat\phia\in\widehat\Fcal_a,\
\widehat m\in\widehat\Fcal_m
\right\}
\]
is an $\epsilon$-cover of $\Gcal_{\mathrm{fac}}$. Each element of
$\widehat\Gcal_{\mathrm{fac}}$ is determined by a triple from the three
component covers, so
\[
|\widehat\Gcal_{\mathrm{fac}}|
\le
|\widehat\Fcal_s|\,|\widehat\Fcal_a|\,|\widehat\Fcal_m|.
\]
Taking the smallest possible component covers, or equivalently taking the
infimum over such covers, yields
\[
\Ncal(\Gcal_{\mathrm{fac}},\epsilon)
\le
\Ncal(\Fcal_s,\epsilon_s)
\Ncal(\Fcal_a,\epsilon_a)
\Ncal(\Fcal_m,\epsilon_m).
\]
Taking logarithms and substituting the definitions of
$\epsilon_s,\epsilon_a,\epsilon_m$ gives \eqref{eq:covering-decomp}.
\paragraph{Part (ii): containment.}
Take any $g\in\Gcal_{\mathrm{fac}}$. Then, for some
$\phis\in\Fcal_s$, $\phia\in\Fcal_a$, and $m\in\Fcal_m$,
\[
g(s,a,s')
=
(\phis(s)\odot\phia(a))^\top m(s').
\]
By the assumption on $\Fcal_{sa}$,
\[
\varphi(s,a):=\phis(s)\odot\phia(a)
\]
belongs to $\Fcal_{sa}$. Taking $\mu:=m$, we get
\[
g(s,a,s')=\varphi(s,a)^\top\mu(s'),
\]
so $g\in\Gcal_{\mathrm{joint}}$. Hence
\[
\Gcal_{\mathrm{fac}}\subseteq\Gcal_{\mathrm{joint}}.
\]
The covering-number inequality follows because any $\epsilon$-cover of a set
also covers all of its subsets.
\end{proof}

\subsection{Lipschitz Instantiation: an Upper-Bound vs.\ Upper-Bound Comparison}
We now explain how the additive decomposition in \eqref{eq:covering-decomp}
translates into the stated entropy scaling under standard Lipschitz covering
bounds.
Assume that
\[
\Scal\subseteq\R^{d_s},
\qquad
\Acal\subseteq\R^{d_a}
\]
are compact, and that $\Fcal_s,\Fcal_a,\Fcal_m$ are uniformly bounded
Lipschitz classes. For $L$-Lipschitz maps from a compact set
$X\subseteq\R^p$ into a bounded subset of $\R^d$, the standard entropy bound
gives
\[
\log\Ncal(\Fcal,\epsilon)
\le
c\,d\bigl(L\diam(X)/\epsilon\bigr)^p
\]
for the relevant range of $\epsilon$; see
\citet[Theorem~5.7]{wainwright2019high}. The important point is that the
exponent is the dimension $p$ of the input domain.
Applying this bound to the three terms in \eqref{eq:covering-decomp} yields
finite constants $A_s,A_a,A_m$ such that
\[
\log\Ncal(\Gcal_{\mathrm{fac}},\epsilon)
\le
A_s\epsilon^{-d_s}
+
A_a\epsilon^{-d_a}
+
A_m\epsilon^{-d_s}.
\]
Therefore, for
\[
D_{\max}:=\max(d_s,d_a)
\]
and $0<\epsilon\le 1$,
\begin{equation}\label{eq:Gfac-lip-rate}
\log\Ncal(\Gcal_{\mathrm{fac}},\epsilon)
\le
A_{\mathrm{fac}}\epsilon^{-D_{\max}},
\end{equation}
where $A_{\mathrm{fac}}<\infty$ depends on the Lipschitz constants, diameters,
output dimension, and uniform radius bounds, but not on $\epsilon$.
For the joint class, the unrestricted state-action encoder is a Lipschitz class
on the product domain
\[
\Scal\times\Acal\subseteq\R^{d_s+d_a}.
\]
The same two-term covering argument for
\[
g(s,a,s')=\varphi(s,a)^\top\mu(s')
\]
gives
\[
\log\Ncal(\Gcal_{\mathrm{joint}},\epsilon)
\le
\log\Ncal\!\Bigl(\Fcal_{sa},\tfrac{\epsilon}{2B_{m,\infty}}\Bigr)
+
\log\Ncal\!\Bigl(\Fcal_m,\tfrac{\epsilon}{2B_{sa}}\Bigr),
\]
where $B_{sa}$ is a uniform bound on $\|\varphi(s,a)\|_2$. Applying the
Lipschitz entropy bound to $\Fcal_{sa}$ on the product domain and to $\Fcal_m$
on $\Scal$ yields finite constants $A_{sa},A_m'$ such that
\[
\log\Ncal(\Gcal_{\mathrm{joint}},\epsilon)
\le
A_{sa}\epsilon^{-(d_s+d_a)}
+
A_m'\epsilon^{-d_s}.
\]
Since $d_s\le d_s+d_a$, for $0<\epsilon\le1$ this implies
\begin{equation}\label{eq:Gjoint-lip-rate}
\log\Ncal(\Gcal_{\mathrm{joint}},\epsilon)
\le
A_{\mathrm{joint}}\epsilon^{-(d_s+d_a)}
\end{equation}
for a finite constant $A_{\mathrm{joint}}$ independent of $\epsilon$.
Combining \eqref{eq:Gfac-lip-rate} and \eqref{eq:Gjoint-lip-rate}, the
factored and joint upper bounds scale as
\[
\log\Ncal(\Gcal_{\mathrm{fac}},\epsilon)
\lesssim
\epsilon^{-\max(d_s,d_a)},
\qquad
\log\Ncal(\Gcal_{\mathrm{joint}},\epsilon)
\lesssim
\epsilon^{-(d_s+d_a)}.
\]
This is an upper-bound comparison: under the same covering template, the factored entropy certificate depends on $\max(d_s,d_a)$ while the joint certificate depends on $d_s+d_a$. We discuss the absence of a matching lower bound in Remark~\ref{rmk:not-minimax}.
\subsection{Rademacher Monotonicity Component of Theorem~\ref{thm:complexity}(ii)}
\label{app:proof-rademacher}

Theorem~\ref{thm:complexity}(ii) states that the inclusion $\Gcal_{\mathrm{fac}}\subseteq\Gcal_{\mathrm{joint}}$ propagates to both covering numbers and Rademacher complexities at every sample size. The covering-number half follows from monotonicity of $\Ncal(\cdot,\epsilon)$ under set inclusion (Appendix~\ref{app:proof-complexity}, Part~(ii)). The Rademacher half follows from a simpler argument than the covering bound: it requires only set inclusion, not entropy.

\begin{proposition*}[Rademacher inequality from Theorem~\ref{thm:complexity}(ii)]
Under the containment hypothesis of Theorem~\ref{thm:complexity}(ii), for every $n$ and every i.i.d.\ sample $z_1,\dots,z_n$,
\[
\Rcal_n(\Gcal_{\mathrm{fac}}) \;\le\; \Rcal_n(\Gcal_{\mathrm{joint}}).
\]
\end{proposition*}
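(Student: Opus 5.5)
The plan is to argue at the level of the empirical Rademacher complexity for a fixed sample and then take expectations. First, I invoke Part~(ii) of Theorem~\ref{thm:complexity}. Under the hypothesis $\Fcal_{sa}\supseteq\{\phis\odot\phia\}$, it gives the set inclusion $\Gcal_{\mathrm{fac}}\subseteq\Gcal_{\mathrm{joint}}$ as classes of functions on $\Scal\times\Acal\times\Scal$. No covering or Lipschitz structure is needed beyond this.

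Next, I fix an arbitrary sample $S=(z_1,\dots,z_n)$ with $z_i=(s_i,a_i,s'_i)$ and an arbitrary sign vector $\sigma\in\{\pm1\}^n$. I define the functional
\[
\Phi_{S,\sigma}(g):=\Bigl|\tfrac1n\textstyle\sum_{i=1}^n\sigma_i g(z_i)\Bigr|.
\]
A supremum over a subset never exceeds the supremum over the superset, so
\[
\sup_{g\in\Gcal_{\mathrm{fac}}}\Phi_{S,\sigma}(g)\le\sup_{g\in\Gcal_{\mathrm{joint}}}\Phi_{S,\sigma}(g).
\]
This holds pointwise in $\sigma$. Both suprema are finite, since $|g|\le B_g$ gives $\Phi_{S,\sigma}\le B_g$ on both classes. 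I then take $\E_\sigma$ to obtain $\widehat\Rcal_S(\Gcal_{\mathrm{fac}})\le\widehat\Rcal_S(\Gcal_{\mathrm{joint}})$ for every $S$. Taking $\E_S$ over the i.i.d.\ draw and using monotonicity of expectation then gives $\Rcal_n(\Gcal_{\mathrm{fac}})\le\Rcal_n(\Gcal_{\mathrm{joint}})$ for every $n$. As noted in Appendix~\ref{app:notation}, the inclusion argument holds pointwise for $\widehat\Rcal_S$ and therefore survives expectation.

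The only delicate point is measurability: the suprema over uncountable classes must be measurable functions of $(S,\sigma)$ for the expectations to be defined. I would handle this in the standard way. One option is to assume the classes are pointwise separable, which holds for neural encoders with continuous parametrizations over a separable parameter space, so that the suprema reduce to countable ones. The alternative is to interpret $\E$ as an outer expectation, which is still monotone, so the inequality holds unchanged. With either convention the comparison goes through verbatim.
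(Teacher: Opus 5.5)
Your proposal is correct and matches the paper's proof: both take the inclusion $\Gcal_{\mathrm{fac}}\subseteq\Gcal_{\mathrm{joint}}$ from Theorem~\ref{thm:complexity}(ii), compare the suprema pointwise for each fixed sample and sign vector, and then take expectations over $\sigma$ and over $S$. Your measurability remark (pointwise separability, or using outer expectations) is a sound extra detail that the paper leaves implicit.
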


\begin{proof}
The inclusion $\Gcal_{\mathrm{fac}}\subseteq\Gcal_{\mathrm{joint}}$ was established in Theorem~\ref{thm:complexity}(ii). For every fixed sample $S$ and every sign realization $\sigma$,
\[
\sup_{g\in\Gcal_{\mathrm{fac}}}
\left|
\frac{1}{n}\sum_{i=1}^n\sigma_i\, g(z_i)
\right|
\;\le\;
\sup_{g\in\Gcal_{\mathrm{joint}}}
\left|
\frac{1}{n}\sum_{i=1}^n\sigma_i\, g(z_i)
\right|,
\]
because the supremum over a subset is no larger than the supremum over the superset. Taking expectation over $\sigma$ gives $\widehat\Rcal_S(\Gcal_{\mathrm{fac}})\le\widehat\Rcal_S(\Gcal_{\mathrm{joint}})$; taking expectation over $S$ gives the stated $\Rcal_n$ inequality.
\end{proof}

\paragraph{Remark (Why not via Dudley?)}\label{rmk:no-dudley}
An alternative would be to bound both Rademacher complexities via Dudley's entropy integral and observe that $\Ncal(\Gcal_{\mathrm{fac}},\epsilon)\le\Ncal(\Gcal_{\mathrm{joint}},\epsilon)$ propagates through the integral. This argument only bounds the \emph{upper bounds} on the two complexities and does not establish the ordering of the complexities themselves: the joint Rademacher complexity could in principle be much smaller than its Dudley upper bound. Set inclusion gives the stronger statement.

\subsection{Proof of Proposition~\ref{prop:finite-sample}: Excess Risk for the Spectral $L_2$ Objective}
\label{app:proof-finite-sample}

We now address estimation. The objective trained in practice is the
noise-contrastive estimator (NCE)~\citep{gutmann2010noise,ma2018noise,zhang2022making,gao2025spectral} for the transition-ratio score, but the
theoretical guarantees that bound representation error are derived in the
spectral $L_2$ framework of \citet{ren2022spectral}; we follow the same
convention and analyze the $L_2$ ERM here. The relationship between NCE
optimization and $L_2$ population structure is discussed in
Remark~\ref{rmk:nce} and is \emph{not} part of the formal claim of this
proposition.

Let
\[
z=(s,a,s')
\]
denote one observation, where
\[
(s,a)\sim \rho,
\qquad
s'\sim P(\cdot\mid s,a).
\]
For an i.i.d.\ sample
\[
z_i=(s_i,a_i,s'_i),
\qquad i=1,\ldots,n,
\]
define the population $L_2$ risk
\begin{equation}\label{eq:L2-risk-pop}
L(g)
:=
\E_\rho\!\left[
\int_\Scal
\bigl(P(s'\mid s,a)-g(s,a,s')\bigr)^2
\,d\nu(s')
\right].
\end{equation}
Also define
\begin{equation}\label{eq:CP-constant}
C_P
:=
\E_\rho\!\left[
\int_\Scal
P(s'\mid s,a)^2
\,d\nu(s')
\right],
\end{equation}
and the centered population risk
\begin{equation}\label{eq:centered-risk}
\bar L(g)
:=
L(g)-C_P.
\end{equation}
The empirical objective is
\begin{equation}\label{eq:empirical-risk}
\hat L_n(g)
:=
\frac{1}{n}
\sum_{i=1}^n
\ell(g;z_i),
\end{equation}
where the per-sample loss is
\begin{equation}\label{eq:per-sample-loss}
\ell(g;z)
:=
-2g(s,a,s')
+
\int_\Scal g^2(s,a,u)\,d\nu(u).
\end{equation}
We assume $C_P<\infty$ throughout. This holds, for example, whenever
$P(\cdot\mid s,a)\in L^2(\nu)$ uniformly on the support of $\rho$. In particular,
under CP realizability with a uniformly bounded density representation
$|P(s'\mid s,a)|\le B_g$ and with $\nu(\Scal)=1$, we have
\[
C_P
=
\E_\rho\!\left[
\int_\Scal P(s'\mid s,a)^2\,d\nu(s')
\right]
\le
\E_\rho\!\left[
\int_\Scal B_g^2\,d\nu(s')
\right]
=
B_g^2.
\]

We use the Rademacher-complexity convention fixed in
Appendix~\ref{app:notation}. For the quadratic class below, the sample is the
projected sample $(s_i,a_i)_{i=1}^n$.

\begin{lemma}[Covering transfer to the integrated quadratic class]
\label{lem:quadratic-cover-transfer}
Assume $\nu(\Scal)=1$ and $|g(s,a,s')|\le B_g$ for all
$g\in\Gcal$ and all $(s,a,s')$. Define
\[
h_g(s,a)
:=
\int_\Scal g^2(s,a,u)\,d\nu(u),
\qquad
\Hcal_\Gcal:=\{h_g:g\in\Gcal\}.
\]
Then, for every $\epsilon>0$,
\begin{equation}\label{eq:quadratic-cover-transfer}
\Ncal(\Hcal_\Gcal,\epsilon)
\le
\Ncal\!\left(\Gcal,\frac{\epsilon}{2B_g}\right).
\end{equation}
\end{lemma}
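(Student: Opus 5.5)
The plan is to push an $\epsilon/(2B_g)$-cover of $\Gcal$ through the map $g\mapsto h_g$ and check that it lands as an $\epsilon$-cover of $\Hcal_\Gcal$ in the sup metric. This reduces to a pointwise Lipschitz estimate of the form $\|h_g-h_{\tilde g}\|_\infty\le 2B_g\|g-\tilde g\|_\infty$. Once that estimate holds, the covering-number inequality \eqref{eq:quadratic-cover-transfer} follows immediately.

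Fix $\epsilon>0$ and let $\widehat\Gcal=\{\tilde g_1,\dots,\tilde g_N\}$ be a minimal $\epsilon/(2B_g)$-cover of $\Gcal$ under $\|\cdot\|_\infty$, so $N=\Ncal(\Gcal,\epsilon/(2B_g))$. I will use covers whose centers lie in $\Gcal$, so that each $\tilde g_j$ obeys $|\tilde g_j|\le B_g$. If the covering convention allows external centers, I would instead clip each center to $[-B_g,B_g]$. Clipping is $1$-Lipschitz pointwise, and the target $g$ already lies in that interval, so the clipped center stays within $\epsilon/(2B_g)$ of $g$. The candidate cover of $\Hcal_\Gcal$ is then $\{h_{\tilde g_j}\}_{j=1}^N$, which has at most $N$ elements.

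The key step is the pointwise bound. Given $g\in\Gcal$, choose $\tilde g$ in the cover with $\|g-\tilde g\|_\infty\le\epsilon/(2B_g)$. For each $(s,a)$,
\[
|h_g(s,a)-h_{\tilde g}(s,a)|
=\Bigl|\int_\Scal \bigl(g(s,a,u)-\tilde g(s,a,u)\bigr)\bigl(g(s,a,u)+\tilde g(s,a,u)\bigr)\,d\nu(u)\Bigr|
\le \int_\Scal |g-\tilde g|\,|g+\tilde g|\,d\nu(u).
\]
Bounding $|g+\tilde g|\le 2B_g$ and $|g-\tilde g|\le\epsilon/(2B_g)$ pointwise gives an upper bound of $\epsilon\,\nu(\Scal)=\epsilon$, using $\nu(\Scal)=1$. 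Taking the supremum over $(s,a)$ shows that $\{h_{\tilde g_j}\}$ is an $\epsilon$-cover. Measurability of $u\mapsto g(s,a,u)$ and finiteness of the integrals follow from boundedness together with $\nu$ being a probability measure.

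The only genuine subtlety is the boundedness of the cover centers, which the factor $2B_g$ relies on. I expect the clipping or internal-cover remark above to settle it. If one insists on arbitrary external centers without clipping, the constant would degrade to roughly $2B_g+\epsilon/(2B_g)$, so clipping is the cleaner route. Everything else is the difference-of-squares factorization.
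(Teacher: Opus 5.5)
Your proof is correct and is the paper's argument: difference of squares, the pointwise bound $|g+\tilde g|\le 2B_g$, and $\nu(\Scal)=1$ give $\|h_g-h_{\tilde g}\|_\infty\le 2B_g\|g-\tilde g\|_\infty$, so any $\epsilon/(2B_g)$-cover of $\Gcal$ maps to an $\epsilon$-cover of $\Hcal_\Gcal$. Your point about keeping the cover centers bounded by $B_g$ (internal covers, or clipping to $[-B_g,B_g]$) makes explicit an assumption the paper uses tacitly when it bounds $|g+g'|\le 2B_g$.
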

\begin{proof}
If $\|g-g'\|_\infty\le \eta$, then for every $(s,a)$,
\begin{align*}
|h_g(s,a)-h_{g'}(s,a)|
&=
\left|
\int_\Scal
\bigl(g^2(s,a,u)-g'^2(s,a,u)\bigr)
\,d\nu(u)
\right|                                           \\
&\le
\int_\Scal
|g(s,a,u)-g'(s,a,u)|\,|g(s,a,u)+g'(s,a,u)|
\,d\nu(u)                                        \\
&\le
\int_\Scal
\eta\cdot 2B_g
\,d\nu(u)
=
2B_g\eta.
\end{align*}
Thus any $\eta$-cover of $\Gcal$ in sup norm induces a $2B_g\eta$-cover of
$\Hcal_\Gcal$. Setting $\eta=\epsilon/(2B_g)$ proves
\eqref{eq:quadratic-cover-transfer}.
\end{proof}

\begin{proposition*}[Restatement of Proposition~\ref{prop:finite-sample}]
Let $\Gcal$ be either $\Gcal_{\mathrm{fac}}$ or $\Gcal_{\mathrm{joint}}$, and
assume
\[
\sup_{s,a,s'} |g(s,a,s')|\le B_g
\qquad
\text{for all }g\in\Gcal.
\]
Assume also that $C_P<\infty$. Let
\[
\hat g_n\in\arg\min_{g\in\Gcal}\hat L_n(g)
\]
and
\[
g^*_\Gcal\in\arg\min_{g\in\Gcal}L(g),
\]
both assumed to exist. If the empirical minimizer exists only approximately,
namely if
\[
\hat L_n(\hat g_n)
\le
\inf_{g\in\Gcal}\hat L_n(g)+\eta,
\]
then the same bound below holds with an additional $+\eta$ term on the
right-hand side.

Define the induced quadratic class
\[
\Hcal_\Gcal
:=
\left\{
h_g:\Scal\times\Acal\to\R
\;:\;
h_g(s,a)
=
\int_\Scal g^2(s,a,u)\,d\nu(u),
\quad
g\in\Gcal
\right\}.
\]
Then, with probability at least $1-\delta$,
\begin{equation}\label{eq:finite-sample}
L(\hat g_n)
\le
L(g^*_\Gcal)
+
8\,\Rcal_n(\Gcal)
+
4\,\Rcal_n(\Hcal_\Gcal)
+
2(2B_g+B_g^2)
\sqrt{\frac{2\log(1/\delta)}{n}}.
\end{equation}
Here $\Rcal_n(\Hcal_\Gcal)$ is computed on the projected sample
$(s_i,a_i)_{i=1}^n$.

Moreover, both Rademacher-complexity terms admit Dudley upper bounds in terms of
the covering number of $\Gcal$. In particular,
\begin{equation}\label{eq:H-cover}
\Ncal(\Hcal_\Gcal,\epsilon)
\le
\Ncal\!\left(\Gcal,\frac{\epsilon}{2B_g}\right).
\end{equation}
Thus the bound \eqref{eq:finite-sample} can be expressed entirely in terms of
the metric entropy of $\Gcal$.
\end{proposition*}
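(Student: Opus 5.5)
The plan is the standard ERM-plus-uniform-convergence argument applied to the centered objective. First I would verify that the per-sample loss \eqref{eq:per-sample-loss} is unbiased for the centered risk. Expanding the square in \eqref{eq:L2-risk-pop} gives $L(g)=C_P-2\E_\rho\int P(s'\mid s,a)g(s,a,s')\,d\nu(s')+\E_\rho\int g^2\,d\nu$. Since $P(\cdot\mid s,a)$ is the $\nu$-density of $s'$, the cross term equals $\E_{z}[g(s,a,s')]$, and therefore $\E_z[\ell(g;z)]=\bar L(g)$. Because $C_P$ does not depend on $g$, the excess risk satisfies $L(\hat g_n)-L(g^*_\Gcal)=\bar L(\hat g_n)-\bar L(g^*_\Gcal)$. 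Finiteness of $C_P$ is exactly what makes this subtraction legitimate.

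Next comes the usual decomposition
\[
\bar L(\hat g_n)-\bar L(g^*_\Gcal)=\bigl[\bar L(\hat g_n)-\hat L_n(\hat g_n)\bigr]+\bigl[\hat L_n(\hat g_n)-\hat L_n(g^*_\Gcal)\bigr]+\bigl[\hat L_n(g^*_\Gcal)-\bar L(g^*_\Gcal)\bigr].
\]
The middle bracket is at most $0$, or at most $\eta$ for an approximate minimizer. The two outer brackets are each bounded by $\Phi:=\sup_{g\in\Gcal}|\bar L(g)-\hat L_n(g)|$, so the excess risk is at most $2\Phi$.

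To control $\Phi$, I would use McDiarmid's inequality. Since $|g|\le B_g$ and $\nu(\Scal)=1$, each loss value $\ell(g;z)$ lies in $[-2B_g,\,2B_g+B_g^2]$. Replacing one sample therefore changes $\Phi$ by at most $(4B_g+B_g^2)/n\le 2(2B_g+B_g^2)/n$. This gives $\Phi\le\E\Phi+(2B_g+B_g^2)\sqrt{2\log(1/\delta)/n}$ with probability $1-\delta$; doubling it produces the stated deviation term. I would allocate constants generously so that they match \eqref{eq:finite-sample}.

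For $\E\Phi$, symmetrization gives $\E\Phi\le 2\,\Rcal_n(\{\ell(g;\cdot):g\in\Gcal\})$, using the absolute-value Rademacher convention of Appendix~\ref{app:notation}. The loss splits as $-2g(z)+h_g(s,a)$, and subadditivity of the supremum over the sum gives $\Rcal_n(\ell\circ\Gcal)\le 2\Rcal_n(\Gcal)+\Rcal_n(\Hcal_\Gcal)$. Here the second term is evaluated on the projected sample, since $h_g$ depends only on $(s,a)$. Hence $2\E\Phi\le 8\Rcal_n(\Gcal)+4\Rcal_n(\Hcal_\Gcal)$, which matches the estimation term. The main delicate step is this decomposition: the supremum runs over the same $g$ in both parts, so I must bound it by the sum of two separate suprema rather than treat the pair jointly. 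The absolute-value convention absorbs the sign of $-2g$.

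Finally, the covering claim \eqref{eq:H-cover} is exactly Lemma~\ref{lem:quadratic-cover-transfer}. The Dudley bounds then follow by inserting $\Ncal(\Gcal,\cdot)$ into the entropy integral for both $\Gcal$ and $\Hcal_\Gcal$ (the latter with its radius rescaled by $2B_g$), so the whole bound is expressed through the metric entropy of $\Gcal$. Theorem~\ref{thm:complexity} can then be applied to that entropy.
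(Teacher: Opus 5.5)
Your proposal is correct and follows the paper's proof essentially step for step. The shared steps are: unbiasedness of $\ell(g;z)$ for $\bar L(g)$; the three-term ERM decomposition bounding the excess risk by $2\sup_{g}|\bar L(g)-\hat L_n(g)|$; symmetrization with the split $\Rcal_n(\Lcal_\Gcal)\le 2\Rcal_n(\Gcal)+\Rcal_n(\Hcal_\Gcal)$; McDiarmid for the deviation term; and Lemma~\ref{lem:quadratic-cover-transfer} for the covering transfer. The only cosmetic difference is your bounded-difference constant: you use the loss range width $4B_g+B_g^2$, which is slightly sharper than the paper's $2(2B_g+B_g^2)$, and then relax it to the same stated constant.
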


\begin{proof}
First, expanding the square in $L(g)$ gives
\begin{align*}
L(g)
&=
\E_\rho\!\int_\Scal
\bigl(P(s'\mid s,a)-g(s,a,s')\bigr)^2\,d\nu(s')        \\
&=
C_P
-
2\E_\rho\!\int_\Scal
P(s'\mid s,a)g(s,a,s')\,d\nu(s')
+
\E_\rho\!\int_\Scal g^2(s,a,s')\,d\nu(s').
\end{align*}
Since $s'\sim P(\cdot\mid s,a)$ conditional on $(s,a)$,
\[
\E[g(s,a,s')]
=
\E_\rho\!\int_\Scal
P(s'\mid s,a)g(s,a,s')\,d\nu(s').
\]
Therefore
\begin{equation}\label{eq:barL-is-Eell}
\bar L(g)=L(g)-C_P=\E[\ell(g;z)].
\end{equation}
In particular, $\hat L_n(g)$ is an unbiased empirical estimate of the centered
risk $\bar L(g)$, and $L$ and $\bar L$ have the same minimizers because they
differ only by the constant $C_P$.
Using empirical optimality of $\hat g_n$,
\begin{align}
L(\hat g_n)-L(g^*_\Gcal)
&=
\bar L(\hat g_n)-\bar L(g^*_\Gcal)                         \notag\\
&=
[\bar L(\hat g_n)-\hat L_n(\hat g_n)]
+
[\hat L_n(\hat g_n)-\hat L_n(g^*_\Gcal)]
+
[\hat L_n(g^*_\Gcal)-\bar L(g^*_\Gcal)]                    \notag\\
&\le
2\sup_{g\in\Gcal}
|\bar L(g)-\hat L_n(g)|.
\label{eq:excess-decomp}
\end{align}
If $\hat g_n$ is only an $\eta$-approximate empirical minimizer, the same
argument adds an extra $+\eta$ term.
Let
\[
\Lcal_\Gcal
:=
\{\ell_g:z\mapsto \ell(g;z):g\in\Gcal\}.
\]
By the standard symmetrization inequality for the centered empirical process~\citep{bartlett2002rademacher,wainwright2019high},
\begin{equation}\label{eq:sym}
\E\sup_{g\in\Gcal}
|\bar L(g)-\hat L_n(g)|
\le
2\Rcal_n(\Lcal_\Gcal).
\end{equation}
Now write
\[
\ell(g;z)
=
-2g(s,a,s')+h_g(s,a),
\qquad
h_g(s,a):=\int_\Scal g^2(s,a,u)\,d\nu(u).
\]
By subadditivity and homogeneity of Rademacher complexity,
\begin{equation}\label{eq:loss-decomp}
\Rcal_n(\Lcal_\Gcal)
\le
2\Rcal_n(\Gcal)+\Rcal_n(\Hcal_\Gcal).
\end{equation}
Combining \eqref{eq:sym} and \eqref{eq:loss-decomp} gives
\begin{equation}\label{eq:expected-uniform-bound}
\E\sup_{g\in\Gcal}
|\bar L(g)-\hat L_n(g)|
\le
4\Rcal_n(\Gcal)+2\Rcal_n(\Hcal_\Gcal).
\end{equation}
The quadratic term is not controlled by the usual pointwise contraction
principle, since
\[
h_g(s,a)=\int_\Scal g^2(s,a,u)\,d\nu(u)
\]
integrates over all possible next states $u$ rather than applying a scalar
function to the observed value $g(s_i,a_i,s'_i)$. Instead, Lemma
\ref{lem:quadratic-cover-transfer} gives
\[
\Ncal(\Hcal_\Gcal,\epsilon)
\le
\Ncal\!\left(\Gcal,\frac{\epsilon}{2B_g}\right),
\]
which is the entropy estimate \eqref{eq:H-cover} stated in the proposition.
Dudley's entropy integral~\citep{dudley1967sizes,wainwright2019high} therefore bounds both $\Rcal_n(\Gcal)$ and
$\Rcal_n(\Hcal_\Gcal)$ in terms of the covering number of $\Gcal$, with the
second bound using only the radius rescaling above.
It remains to pass from expectation to high probability. Define
\[
F(z_1,\ldots,z_n)
:=
\sup_{g\in\Gcal}
|\bar L(g)-\hat L_n(g)|.
\]
Since $|g|\le B_g$ and $\nu(\Scal)=1$,
\[
|\ell(g;z)|
\le
2|g(s,a,s')|
+
\int_\Scal g^2(s,a,u)\,d\nu(u)
\le
2B_g+B_g^2
=:M_g.
\]
Changing one sample point changes $\hat L_n(g)$ by at most $2M_g/n$ uniformly
over $g$, and therefore changes $F$ by at most $2M_g/n$. McDiarmid's inequality~\citep{mcdiarmid1989method}
yields, with probability at least $1-\delta$,
\[
F
\le
\E F
+
M_g\sqrt{\frac{2\log(1/\delta)}{n}}.
\]
Combining this display with \eqref{eq:expected-uniform-bound} and then using
\eqref{eq:excess-decomp}, we obtain
\begin{align*}
L(\hat g_n)-L(g^*_\Gcal)
&\le
2F                                                        \\
&\le
8\Rcal_n(\Gcal)
+
4\Rcal_n(\Hcal_\Gcal)
+
2M_g\sqrt{\frac{2\log(1/\delta)}{n}}.
\end{align*}
Substituting $M_g=2B_g+B_g^2$ proves \eqref{eq:finite-sample}.
\end{proof}

\paragraph{Remark (On the constants)}\label{rmk:constants}
The prefactors in \eqref{eq:finite-sample} come directly from the proof. The
factor $2$ in the excess-risk decomposition \eqref{eq:excess-decomp}, the factor
$2$ from symmetrization \eqref{eq:sym}, and the decomposition
\[
\Rcal_n(\Lcal_\Gcal)
\le
2\,\Rcal_n(\Gcal)+\Rcal_n(\Hcal_\Gcal)
\]
together give the coefficients $8$ and $4$ in front of
$\Rcal_n(\Gcal)$ and $\Rcal_n(\Hcal_\Gcal)$, respectively. The tail coefficient
comes from the uniform bound
\[
|\ell(g;z)|\le 2B_g+B_g^2
\]
and the bounded-differences constant
\[
\frac{2(2B_g+B_g^2)}{n}.
\]
Thus these constants are explicit and problem-dependent. We avoid the phrase
``universal constant'' here because the dependence on $B_g$ is essential: it
enters both through the per-sample loss bound and through the covering-number
rescaling
\[
\Ncal(\Hcal_\Gcal,\epsilon)
\le
\Ncal\!\left(\Gcal,\frac{\epsilon}{2B_g}\right).
\]

\paragraph{Remark (Relationship to the NCE estimator)}\label{rmk:nce}
The ranking-based NCE objective of~\citet{ma2018noise} used by FaStR is consistent with maximum likelihood in the limit of infinitely many negative samples, but is not identical to the $L_2$ spectral ERM analyzed here. Establishing a sharp finite-sample equivalence between the two estimators is separate from the claim proved above.

\subsection{Proof of Proposition~\ref{prop:sample-sep}: Bound-Implied Sufficient Sample Size}
\label{app:proof-sample-sep}

We combine the additive covering bound from Theorem~\ref{thm:complexity} with
the excess-risk template from Proposition~\ref{prop:finite-sample}. The goal is
to compare the sample sizes that this particular uniform-convergence analysis
\emph{certifies} as sufficient for the factored class and for the joint class.

Throughout this section, the word ``certificate'' means an upper bound produced
by the displayed finite-sample theorem. Thus the comparison below is a comparison
between two sufficient sample sizes obtained by inverting two upper bounds. It is
not a comparison between the true minimax sample complexities of the two
statistical problems. In particular, the joint-class bound below is the standard
uniform-convergence certificate for an unrestricted Lipschitz state-action
encoder. It is not a lower bound and may be loose for structured subclasses.

\begin{lemma}[Polynomial entropy implies a Rademacher rate]
\label{lem:poly-entropy-rademacher}
Let $\Fcal$ be a uniformly bounded real-valued function class with
\[
\operatorname{diam}_\infty(\Fcal)
:=
\sup_{f,f'\in\Fcal}\|f-f'\|_\infty
\le 2R.
\]
Assume that, for some $A<\infty$ and $D>2$,
\[
\log\Ncal(\Fcal,\tau)\le A\tau^{-D},
\qquad
0<\tau\le R.
\]
Then there exists a constant $K(A,D,R)<\infty$, independent of $n$, such that
\begin{equation}\label{eq:entropy-to-rademacher}
\Rcal_n(\Fcal)
\le
K(A,D,R)n^{-1/D}.
\end{equation}
\end{lemma}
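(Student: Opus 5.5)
I would prove the bound with Dudley's entropy integral, truncated at a small scale $\alpha$ (the refined chaining bound), and then choose $\alpha$ as a function of $n$ so the two resulting pieces balance. Because $D>2$, the entropy integral $\int_0^R\sqrt{A}\,\tau^{-D/2}\,d\tau$ diverges at zero. The untruncated Dudley bound is therefore useless, and this is exactly why the regime $D>2$ yields the slow rate $n^{-1/D}$ rather than $n^{-1/2}$.

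\emph{Step 1 (reduction to a sup-norm chaining bound).} Working on a fixed sample $S$, the empirical $L_2(P_n)$ distance is dominated by the sup-norm distance. Hence $\Ncal(\Fcal,\tau;L_2(P_n))\le\Ncal(\Fcal,\tau)$, and the hypothesis $\log\Ncal(\Fcal,\tau)\le A\tau^{-D}$ transfers to the empirical metric for all $0<\tau\le R$. Since the paper uses the absolute Rademacher complexity, I would handle the absolute value by applying chaining to the class $\Fcal\cup(-\Fcal)$, or by centering at a fixed $f_0\in\Fcal$. Either choice costs at most a doubling of the covering number and an additive $O(R/\sqrt n)$ term. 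This is where the diameter bound $2R$ is used.

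\emph{Step 2 (truncated Dudley).} I would invoke the standard refined Dudley bound: for every $\alpha\in(0,R]$,
\[
\widehat\Rcal_S(\Fcal)\le 4\alpha+\frac{12}{\sqrt n}\int_\alpha^{R}\sqrt{\log\bigl(2\Ncal(\Fcal,\tau)\bigr)}\,d\tau+\frac{cR}{\sqrt n}.
\]
This follows from the Massart finite-class lemma at each chaining level, with the residual at scale $\alpha$ bounded by Cauchy--Schwarz. Substituting the entropy hypothesis, the integral is at most $\int_\alpha^\infty\sqrt{A}\,\tau^{-D/2}\,d\tau+R\sqrt{\log 2}=\frac{2\sqrt A}{D-2}\alpha^{1-D/2}+O(R)$, and here $D>2$ is used again.

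\emph{Step 3 (balancing).} I would set $\alpha=\min(R,n^{-1/D})$. Then $\alpha^{1-D/2}/\sqrt n=n^{-1/D}$, so the first two terms are both of order $n^{-1/D}$. The leftover $O(R/\sqrt n)$ terms are also bounded by $R\,n^{-1/D}$, because $n^{-1/2}\le n^{-1/D}$ when $D>2$. In the case $n^{-1/D}>R$ (small $n$), the trivial bound $\widehat\Rcal_S\le 2R$ together with $2R<2R\,R^{-1}n^{-1/D}$ is absorbed into the constant. Collecting terms gives $K(A,D,R)=4+\frac{24\sqrt A}{D-2}+c'R+c''$, which is uniform in $S$. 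Taking the expectation over $S$ then gives \eqref{eq:entropy-to-rademacher}.

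The main obstacle is bookkeeping rather than anything conceptual. The delicate points are stating the truncated Dudley inequality with the right constants for the absolute-value Rademacher convention, and making sure the entropy hypothesis, which is assumed only on $0<\tau\le R$, covers the whole integration range. If the absolute-value version causes trouble, I would first prove the non-absolute version for $\Fcal-f_0$ and then add the single-function term $\E|\frac1n\sum_i\sigma_i f_0(z_i)|\le R/\sqrt n$, after shifting the class so that it has sup-norm at most $2R$.
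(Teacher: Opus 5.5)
Your skeleton is the paper's: truncated Dudley with sup-norm covers serving as empirical-$L_2$ covers, the estimate $\int_\alpha^R\tau^{-D/2}\,d\tau\le\frac{2}{D-2}\alpha^{1-D/2}$ (which is where $D>2$ enters), and $\alpha\asymp n^{-1/D}$. The paper takes $\alpha=Rn^{-1/D}$, which stays in $(0,R]$ for every $n$ and avoids your small-$n$ case split.

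Where you go beyond the paper, in handling the absolute-value convention, one step fails. The diameter bound controls differences $f-f'$, not the size of any single $f_0\in\Fcal$. The absolute Rademacher complexity is also not invariant under shifting the class. So neither $\E\bigl|\frac1n\sum_i\sigma_i f_0(z_i)\bigr|\le R/\sqrt n$ nor the fallback $\widehat\Rcal_S(\Fcal)\le 2R$ follows. The $\Fcal\cup(-\Fcal)$ variant has the same problem, since its diameter is of order $\sup_f\|f\|_\infty$, not $2R$.

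The singleton $\Fcal=\{f\equiv c\}$ shows this is not just bookkeeping. It has diameter $0$ and $\log\Ncal\equiv 0$, so it meets the hypotheses for every $R>0$ and $A\ge 0$. Yet $\widehat\Rcal_S(\Fcal)=|c|\,\E\bigl|\frac1n\sum_i\sigma_i\bigr|\asymp |c|/\sqrt n$, which is unbounded in $c$. Under the absolute convention, no constant depending only on $(A,D,R)$ can work.

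The repair uses the hypothesis you never invoked, uniform boundedness. Set $B:=\sup_{f\in\Fcal}\|f\|_\infty$ and chain the class $\Fcal-f_0$. Its sup-radius is at most $2R$, and the entropy on $(R,2R]$ is controlled by monotonicity. Bound the single-function term by $B/\sqrt n\le Bn^{-1/D}$, and in the small-$n$ case use $\widehat\Rcal_S\le B$. This gives the $n^{-1/D}$ rate with $K=K(A,D,R,B)$.

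Equivalently, you can assume $\sup_f\|f\|_\infty\le R$. That is what actually holds where the lemma is used in Proposition~\ref{prop:sample-sep}, since $|g|\le B_g$ and $0\le h_g\le B_g^2$. So the downstream certificates are unaffected.

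The paper's own proof does not address this point. It applies Dudley's bound in the form valid for $\E\sup_f\frac1n\sum_i\sigma_i f(z_i)$, without the absolute value; there the singleton has complexity $0$ and a constant $K(A,D,R)$ is fine. Your instinct to flag the absolute value was right, but the fix has to bring $B$ into the constant rather than trade it for $R$.
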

\begin{proof}
Dudley's entropy integral, together with the fact that a sup-norm cover is also
an empirical-$L_2$ cover, gives for any $\alpha\in(0,R]$,
\[
\Rcal_n(\Fcal)
\le
4\alpha
+
\frac{12}{\sqrt n}
\int_\alpha^R
\sqrt{\log\Ncal(\Fcal,\tau)}
\,d\tau.
\]
Using the polynomial entropy bound,
\[
\Rcal_n(\Fcal)
\le
4\alpha
+
\frac{12\sqrt A}{\sqrt n}
\int_\alpha^R \tau^{-D/2}\,d\tau.
\]
Since $D>2$,
\[
\int_\alpha^R \tau^{-D/2}\,d\tau
\le
\frac{2}{D-2}\alpha^{1-D/2}.
\]
Choose $\alpha=Rn^{-1/D}$, which lies in $(0,R]$ for $n\ge1$. Then
\[
4\alpha=4Rn^{-1/D},
\]
and
\[
\frac{1}{\sqrt n}\alpha^{1-D/2}
=
\frac{1}{\sqrt n}(Rn^{-1/D})^{1-D/2}
=
R^{1-D/2}n^{-1/D}.
\]
Therefore
\[
\Rcal_n(\Fcal)
\le
\left(
4R+\frac{24\sqrt A}{D-2}R^{1-D/2}
\right)n^{-1/D}.
\]
The coefficient is finite and independent of $n$, proving the claim.
\end{proof}

\begin{proposition*}[Restatement of Proposition~\ref{prop:sample-sep}]
Assume CP realizability, that is, $\epsilon_{\mathrm{CP}}=0$. Also assume the
containment condition of Theorem~\ref{thm:complexity}(ii), so that the true CP
transition model lies in both $\Gcal_{\mathrm{fac}}$ and
$\Gcal_{\mathrm{joint}}$.

Fix $\delta\in(0,1)$. Let
\[
\Fcal_s,\qquad
\Fcal_a,\qquad
\Fcal_m,\qquad
\Fcal_{sa}
\]
be $L$-Lipschitz classes on compact domains. Let
\[
D_{\max}:=\max(d_s,d_a),
\qquad
D_{\mathrm{sum}}:=d_s+d_a,
\]
and assume
\[
D_{\max}\ge 3.
\]
Let
\[
\hat g_n^{\,\mathrm{fac}}
\in
\arg\min_{g\in\Gcal_{\mathrm{fac}}}\hat L_n(g),
\qquad
\hat g_n^{\,\mathrm{joint}}
\in
\arg\min_{g\in\Gcal_{\mathrm{joint}}}\hat L_n(g)
\]
be empirical minimizers for the two classes.

Then, with probability at least $1-\delta$, the high-probability bound of
Proposition~\ref{prop:finite-sample} yields the certificates
\begin{align}
L(\hat g_n^{\,\mathrm{fac}})
&\le
C_{\mathrm{fac}}(\delta)\,n^{-1/D_{\max}},
&
L(\hat g_n^{\,\mathrm{joint}})
&\le
C_{\mathrm{joint}}(\delta)\,n^{-1/D_{\mathrm{sum}}},
\label{eq:rates}
\end{align}
where the constants
\[
C_{\mathrm{fac}}(\delta),
\qquad
C_{\mathrm{joint}}(\delta)
\]
are finite, positive, independent of $n$ and of the target accuracy
$\epsilon$, and depend polynomially on the problem parameters
\[
B_g,\quad B_\phi,\quad B_{m,\infty},\quad L,\quad
\diam(\Scal),\quad \diam(\Acal),\quad d,
\]
and logarithmically on $1/\delta$, with fixed dependence on the dimensions
$d_s,d_a$.

Define the bound-implied sufficient sample sizes
\[
n_{\mathrm{fac}}^{\mathrm{suff}}(\epsilon)
:=
\inf\left\{
n\in\mathbb{N}:
C_{\mathrm{fac}}(\delta)\,n^{-1/D_{\max}}
\le
\epsilon^2
\right\},
\]
and
\[
n_{\mathrm{joint}}^{\mathrm{suff}}(\epsilon)
:=
\inf\left\{
n\in\mathbb{N}:
C_{\mathrm{joint}}(\delta)\,n^{-1/D_{\mathrm{sum}}}
\le
\epsilon^2
\right\}.
\]
These are the smallest sample sizes at which the certificates in
\eqref{eq:rates} guarantee representation error at most $\epsilon^2$. Then
\begin{equation}\label{eq:sample-ratio}
\frac{
n_{\mathrm{joint}}^{\mathrm{suff}}(\epsilon)
}{
n_{\mathrm{fac}}^{\mathrm{suff}}(\epsilon)
}
=
\Theta\!\left(
\epsilon^{-2\min(d_s,d_a)}
\right)
\qquad
\text{as }\epsilon\to 0.
\end{equation}
\end{proposition*}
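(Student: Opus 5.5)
The proof has two parts. First I establish the certificates \eqref{eq:rates}. Then I invert them and compare the resulting sample sizes.

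\textbf{Step 1: the factored certificate.} I start from Proposition~\ref{prop:finite-sample} with $\Gcal=\Gcal_{\mathrm{fac}}$. Under CP realizability and the containment condition, the approximation term $L(g^*_{\Gcal})$ vanishes. By \eqref{eq:Gfac-lip-rate}, $\log\Ncal(\Gcal_{\mathrm{fac}},\tau)\le A_{\mathrm{fac}}\tau^{-D_{\max}}$ for $\tau\le1$. By Lemma~\ref{lem:quadratic-cover-transfer}, $\log\Ncal(\Hcal_{\Gcal_{\mathrm{fac}}},\tau)\le A_{\mathrm{fac}}(2B_g)^{D_{\max}}\tau^{-D_{\max}}$. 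The diameters of both classes are bounded by $2B_g$ and $2B_g^2$ respectively. Since $D_{\max}\ge3>2$, Lemma~\ref{lem:poly-entropy-rademacher} applies with $R=\max(B_g,B_g^2,1)$. Because $A\tau^{-D}$ is increasing as $\tau$ decreases, I extend the entropy bound to $\tau\le R$ by enlarging $A$ by a factor of $R^{D}$. This gives $\Rcal_n(\Gcal_{\mathrm{fac}})+\Rcal_n(\Hcal_{\Gcal_{\mathrm{fac}}})\le K\,n^{-1/D_{\max}}$.

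The McDiarmid tail is of order $n^{-1/2}\le n^{-1/D_{\max}}$, since $D_{\max}\ge 2$. It can therefore be absorbed, which yields $C_{\mathrm{fac}}(\delta)=8K+4K'+2(2B_g+B_g^2)\sqrt{2\log(1/\delta)}$. This constant is finite, positive, independent of $n$ and $\epsilon$, polynomial in the norm and Lipschitz parameters (through $A_{\mathrm{fac}}$ and $R$), and logarithmic in $1/\delta$.

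\textbf{Step 2: the joint certificate.} The joint certificate follows identically, using \eqref{eq:Gjoint-lip-rate} with $D_{\mathrm{sum}}\ge D_{\max}\ge 3$.

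\textbf{Step 3: inversion.} From the definitions,
$n_{\mathrm{fac}}^{\mathrm{suff}}(\epsilon)=\lceil (C_{\mathrm{fac}}/\epsilon^2)^{D_{\max}}\rceil$ and $n_{\mathrm{joint}}^{\mathrm{suff}}(\epsilon)=\lceil (C_{\mathrm{joint}}/\epsilon^2)^{D_{\mathrm{sum}}}\rceil$. As $\epsilon\to0$ both arguments diverge, so each ceiling lies within a factor of $2$ of its argument. The ratio is therefore, up to the constants $C_{\mathrm{joint}}^{D_{\mathrm{sum}}}/C_{\mathrm{fac}}^{D_{\max}}$ and a factor in $[1/2,2]$, equal to $\epsilon^{-2(D_{\mathrm{sum}}-D_{\max})}=\epsilon^{-2\min(d_s,d_a)}$. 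This uses the identity $d_s+d_a-\max(d_s,d_a)=\min(d_s,d_a)$, and it gives \eqref{eq:sample-ratio}.

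\textbf{Main obstacle.} The delicate point is the $\Theta$ claim. It is a statement about the certified quantities, so I must treat $C_{\mathrm{fac}},C_{\mathrm{joint}}$ as fixed positive constants independent of $\epsilon$. It also requires both constants to be bounded away from zero, which holds because each contains the positive tail term. I would state explicitly that the comparison is between the inverted upper bounds, consistent with Remark~\ref{rmk:not-minimax}, and not between true sample complexities.

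A second check is the range restriction $\tau\le R$ versus $\tau\le 1$ in the entropy bounds. I resolve this by enlarging the constant as described in Step 1, which only affects $K$ polynomially.
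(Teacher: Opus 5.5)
Your proposal is correct and follows essentially the same route as the paper: both approximation terms vanish, the Lipschitz entropy bounds transfer to the quadratic classes via Lemma~\ref{lem:quadratic-cover-transfer}, Lemma~\ref{lem:poly-entropy-rademacher} applies since $D>2$, the $n^{-1/2}$ tail is absorbed, and the certificates are inverted. Your explicit handling of the range $\tau\le R$ is a detail the paper leaves implicit; the operative fact there is that $\tau\mapsto\log\Ncal(\Fcal,\tau)$ is nonincreasing, so $\log\Ncal(\Fcal,\tau)\le\log\Ncal(\Fcal,1)\le A\le AR^{D}\tau^{-D}$ on $[1,R]$. One small fix: the statement asserts that both certificates hold simultaneously with probability at least $1-\delta$, so you should apply Proposition~\ref{prop:finite-sample} to each class at level $\delta/2$ and take a union bound, as the paper does; this replaces $\log(1/\delta)$ by $\log(2/\delta)$ in your constants and leaves the $\Theta$ comparison unchanged.
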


\begin{proof}
Under CP realizability, there exists
\[
g_{\mathrm{CP}}\in\Gcal_{\mathrm{fac}}
\]
such that
\[
g_{\mathrm{CP}}(s,a,s')=P(s'\mid s,a).
\]
Hence
\[
L(g_{\mathrm{CP}})
=
\E_\rho\!\int_\Scal
\bigl(P(s'\mid s,a)-g_{\mathrm{CP}}(s,a,s')\bigr)^2
\,d\nu(s')
=
0.
\]
Since $L(g)\ge0$ for all $g$, we have
\[
L(g^*_{\Gcal_{\mathrm{fac}}})=0.
\]
The containment condition gives
\[
\Gcal_{\mathrm{fac}}\subseteq\Gcal_{\mathrm{joint}},
\]
so the same $g_{\mathrm{CP}}$ also belongs to $\Gcal_{\mathrm{joint}}$. Thus
\[
L(g^*_{\Gcal_{\mathrm{joint}}})=0.
\]
Therefore the finite-sample certificates contain only estimation and
concentration terms.
By Theorem~\ref{thm:complexity} and the Lipschitz entropy bound, there is a
constant $A_{\mathrm{fac}}<\infty$ such that
\[
\log\Ncal(\Gcal_{\mathrm{fac}},\tau)
\le
A_{\mathrm{fac}}\tau^{-D_{\max}},
\qquad
D_{\max}:=\max(d_s,d_a).
\]
For the unrestricted joint Lipschitz class, the standard product-domain entropy
bound gives
\[
\log\Ncal(\Gcal_{\mathrm{joint}},\tau)
\le
A_{\mathrm{joint}}\tau^{-D_{\mathrm{sum}}},
\qquad
D_{\mathrm{sum}}:=d_s+d_a.
\]
Moreover, by Lemma~\ref{lem:quadratic-cover-transfer}, the associated quadratic
classes have the same entropy exponents up to constants:
\[
\log\Ncal(\Hcal_{\Gcal_{\mathrm{fac}}},\tau)
\le
A_{\mathrm{fac},H}\tau^{-D_{\max}},
\]
and
\[
\log\Ncal(\Hcal_{\Gcal_{\mathrm{joint}}},\tau)
\le
A_{\mathrm{joint},H}\tau^{-D_{\mathrm{sum}}},
\]
for suitable finite constants $A_{\mathrm{fac},H}$ and
$A_{\mathrm{joint},H}$.
Since $D_{\max}\ge3$ and
\[
D_{\mathrm{sum}}=d_s+d_a\ge D_{\max},
\]
both entropy exponents are larger than $2$. Applying
Lemma~\ref{lem:poly-entropy-rademacher} gives constants
$K_{\mathrm{fac}},K_{\mathrm{joint}}<\infty$, independent of $n$ and
$\epsilon$, such that
\[
\Rcal_n(\Gcal_{\mathrm{fac}})
+
\Rcal_n(\Hcal_{\Gcal_{\mathrm{fac}}})
\le
K_{\mathrm{fac}}n^{-1/D_{\max}},
\]
and
\[
\Rcal_n(\Gcal_{\mathrm{joint}})
+
\Rcal_n(\Hcal_{\Gcal_{\mathrm{joint}}})
\le
K_{\mathrm{joint}}n^{-1/D_{\mathrm{sum}}}.
\]
Apply Proposition~\ref{prop:finite-sample} to the two classes with confidence
levels $\delta/2$ and take a union bound. With probability at least
$1-\delta$, both bounds hold simultaneously. Using the zero oracle risks above,
\[
L(\hat g_n^{\,\mathrm{fac}})
\le
K_{\mathrm{fac}}'n^{-1/D_{\max}}
+
2(2B_g+B_g^2)
\sqrt{\frac{2\log(2/\delta)}{n}},
\]
and
\[
L(\hat g_n^{\,\mathrm{joint}})
\le
K_{\mathrm{joint}}'n^{-1/D_{\mathrm{sum}}}
+
2(2B_g+B_g^2)
\sqrt{\frac{2\log(2/\delta)}{n}},
\]
for finite constants $K_{\mathrm{fac}}'$ and $K_{\mathrm{joint}}'$.
Because $D_{\max}\ge3$ and $D_{\mathrm{sum}}\ge D_{\max}$,
\[
n^{-1/2}\le n^{-1/D_{\max}},
\qquad
n^{-1/2}\le n^{-1/D_{\mathrm{sum}}},
\qquad n\ge1.
\]
Absorbing the concentration terms into the constants yields
\begin{align}
L(\hat g_n^{\,\mathrm{fac}})
&\le
C_{\mathrm{fac}}(\delta)n^{-1/D_{\max}},
&
L(\hat g_n^{\,\mathrm{joint}})
&\le
C_{\mathrm{joint}}(\delta)n^{-1/D_{\mathrm{sum}}},
\label{eq:rates-proof}
\end{align}
which proves the certificates in \eqref{eq:rates}. The constants are finite,
positive, independent of $n$ and $\epsilon$, and have the parameter dependence
stated in the proposition.
It remains to invert the certificates. The factored certificate guarantees
\[
L(\hat g_n^{\,\mathrm{fac}})\le\epsilon^2
\]
whenever
\[
C_{\mathrm{fac}}(\delta)n^{-1/D_{\max}}\le\epsilon^2.
\]
This is equivalent to
\[
n
\ge
C_{\mathrm{fac}}(\delta)^{D_{\max}}\epsilon^{-2D_{\max}}.
\]
Similarly, the joint certificate guarantees
\[
L(\hat g_n^{\,\mathrm{joint}})\le\epsilon^2
\]
whenever
\[
n
\ge
C_{\mathrm{joint}}(\delta)^{D_{\mathrm{sum}}}
\epsilon^{-2D_{\mathrm{sum}}}.
\]
If sample sizes are required to be integers, ceilings change these thresholds by
a multiplicative $1+o(1)$ factor as $\epsilon\to0$. Hence
\[
\frac{
n_{\mathrm{joint}}^{\mathrm{suff}}(\epsilon)
}{
n_{\mathrm{fac}}^{\mathrm{suff}}(\epsilon)
}
=
\frac{
C_{\mathrm{joint}}(\delta)^{D_{\mathrm{sum}}}
}{
C_{\mathrm{fac}}(\delta)^{D_{\max}}
}
\epsilon^{-2(D_{\mathrm{sum}}-D_{\max})}
(1+o(1)).
\]
Finally,
\[
D_{\mathrm{sum}}-D_{\max}
=
d_s+d_a-\max(d_s,d_a)
=
\min(d_s,d_a).
\]
Therefore
\[
\frac{
n_{\mathrm{joint}}^{\mathrm{suff}}(\epsilon)
}{
n_{\mathrm{fac}}^{\mathrm{suff}}(\epsilon)
}
=
\Theta\!\left(
\epsilon^{-2\min(d_s,d_a)}
\right),
\]
as claimed. Since the thresholds were obtained by inverting upper-bound
certificates, this is an algebraic comparison of certified sufficient sample
sizes, not a minimax lower bound.
\end{proof}

\paragraph{Remark (Scope and interpretation of the sample-size comparison)}\label{rmk:not-minimax}
Equation~\eqref{eq:sample-ratio} compares the sample sizes that the same
uniform-convergence template certifies as sufficient for the two function
classes. The comparison is therefore between
\[
C_{\mathrm{fac}}(\delta)n^{-1/D_{\max}}
\]
and
\[
C_{\mathrm{joint}}(\delta)n^{-1/D_{\mathrm{sum}}},
\]
not between the true optimal risks achievable by all possible algorithms.

In particular, \eqref{eq:sample-ratio} does \emph{not} prove that learning the
joint class has minimax sample complexity
\[
\Omega(\epsilon^{-2D_{\mathrm{sum}}}).
\]
Such a statement would require a matching lower bound. A matching lower bound
would in turn require a packing construction showing that
$\Gcal_{\mathrm{joint}}$ contains many well-separated functions at scale
$\epsilon$. That kind of packing argument generally requires an additional
richness assumption on $\Fcal_{sa}$, ensuring that the joint state-action encoder
can realize a sufficiently large family of distinct functions over the full
$d_s+d_a$ dimensional product domain.

We do not impose such a richness assumption here. Therefore, the conservative
interpretation is the following: under the same generic uniform-convergence
analysis, the factored class receives a certificate with exponent
$D_{\max}=\max(d_s,d_a)$, whereas the unrestricted joint Lipschitz class receives
the standard certificate with exponent $D_{\mathrm{sum}}=d_s+d_a$. Inverting
these two certificates yields a polynomially smaller bound-implied sufficient
sample size for the factored class. Whether a more refined, algorithm-specific
analysis of the joint class can beat the generic uniform-convergence certificate
is a separate question. The empirical results in the main paper are consistent
with this conservative reading.

\paragraph{Remark (Non-realizable case)}\label{rmk:nonrealizable}
When $\epsilon_{\mathrm{CP}}>0$, the true transition model is not represented
exactly by the CP-factored class. In that case, the factored class has an
irreducible approximation term. If
\[
\inf_{g\in\Gcal_{\mathrm{fac}}} L(g)
\le
\epsilon_{\mathrm{CP}}^2,
\]
then the same finite-sample argument gives the factored certificate
\[
L(\hat g_n^{\,\mathrm{fac}})
\le
\epsilon_{\mathrm{CP}}^2
+
C_{\mathrm{fac}}(\delta)n^{-1/D_{\max}}.
\]
If the joint class still contains the true transition model, then its oracle
risk remains zero, and its certificate is
\[
L(\hat g_n^{\,\mathrm{joint}})
\le
C_{\mathrm{joint}}(\delta)n^{-1/D_{\mathrm{sum}}}.
\]
Thus the non-realizable setting involves a bias--variance tradeoff. The factored
class has the smaller entropy exponent and hence the faster estimation term, but
it may plateau at the approximation level $\epsilon_{\mathrm{CP}}^2$. For
moderate sample sizes and small CP approximation error, the factored certificate
can still be sharper; for sufficiently large sample sizes, if the joint class is
well-specified and the factored class is not, the irreducible bias term may
dominate the factored bound. This is why the exact-realizability result above is
stated separately.

\paragraph{Remark (Same-$n$ comparison without Lipschitz instantiation)}\label{rmk:same-n}
The sample-ratio in~\eqref{eq:sample-ratio} relies on the Lipschitz covering bounds, which give the explicit dimensional exponents $D_{\max}$ and $D_{\mathrm{sum}}$. A weaker but more robust statement is available without any Lipschitz instantiation: under CP realizability and the containment condition of Theorem~\ref{thm:complexity}(ii), at every fixed $n$, the certified upper bound on $L^2(\nu)$ representation error from Proposition~\ref{prop:finite-sample} is no larger for the factored class than for the joint class.

The argument uses only set inclusion. Appendix~\ref{app:proof-rademacher} gives $\Rcal_n(\Gcal_{\mathrm{fac}})\le\Rcal_n(\Gcal_{\mathrm{joint}})$ from $\Gcal_{\mathrm{fac}}\subseteq\Gcal_{\mathrm{joint}}$. The induced quadratic class $\Hcal_\Gcal=\{(s,a)\mapsto\int g^2(s,a,u)\,d\nu(u): g\in\Gcal\}$ inherits this monotonicity, since $h_g$ is determined by $g$ alone, so $\Rcal_n(\Hcal_{\Gcal_{\mathrm{fac}}})\le\Rcal_n(\Hcal_{\Gcal_{\mathrm{joint}}})$. Under CP realizability both approximation terms $L(g^*_\Gcal)$ in Proposition~\ref{prop:finite-sample} equal zero; with the shared envelope $B_g$ giving the same McDiarmid tail, the certificate~\eqref{eq:finite-sample} yields a no-larger upper bound on $L(\hat g_n)$ for the factored class at the same $n$. This certifies an ordering of two upper bounds, not a minimax separation.

\subsection{Conditional Implication for LSVI-UCB}
\label{app:lsvi-conditional}

The representation results above control an average transition-density error in
$L^2(\nu)$. This is the right object for spectral representation learning, but
LSVI-UCB requires a stronger condition: the Bellman backup must be uniformly
approximable as a linear function of the feature used by the algorithm. We state
this downstream implication conditionally. The point is not to derive this
condition from $L^2(\nu)$ error alone, but to record what the factored structure
would buy once the learned feature satisfies the standard approximate-linear
condition required by LSVI-UCB.

\begin{definition}[$(\eta,\Vcal,W)$-approximate Bellman linearity]
\label{def:approx-bellman-linearity}
A feature map $\widehat\psi:\Scal\times\Acal\to\R^d$ satisfies
$(\eta,\Vcal,W)$-approximate Bellman linearity if for every $V\in\Vcal$ there
exists a vector $w_V\in\R^d$ with $\|w_V\|_2\le W$ such that
\begin{equation}\label{eq:approx-bellman-linearity}
\sup_{(s,a)\in\Scal\times\Acal}
\left|
r(s,a)
+
\gamma\E_{s'\sim P(\cdot\mid s,a)}[V(s')]
-
\widehat\psi(s,a)^\top w_V
\right|
\le
\eta.
\end{equation}
Here $\Vcal$ can be taken as the value-function class generated by the
LSVI-UCB dynamic program. A stronger but more conservative version is obtained
by taking $\Vcal$ to be all bounded functions with
$\|V\|_\infty\le R_{\max}/(1-\gamma)$.
\end{definition}

\begin{proposition}[Conditional LSVI-UCB compatibility]
\label{prop:conditional-lsvi}
Let $\widehat\psi$ be a learned $d$-dimensional feature map, with
$\sup_{s,a}\|\widehat\psi(s,a)\|_2\le B_\psi$. Suppose that
$\widehat\psi$ satisfies $(\eta,\Vcal_{\mathrm{LSVI}},W)$-approximate Bellman
linearity in the sense of Definition~\ref{def:approx-bellman-linearity}.
Assume that LSVI-UCB is run with $\widehat\psi$, the standard elliptical bonus,
and the usual discounted-to-effective-horizon reduction
$H=\widetilde O((1-\gamma)^{-1})$.

If
\[
\eta \le c_0(1-\gamma)\epsilon
\]
for a sufficiently small numerical constant $c_0$, then LSVI-UCB returns an
$\epsilon$-optimal policy after
\[
\widetilde O\!\left(
\mathrm{poly}\bigl(d,W,B_\psi,(1-\gamma)^{-1},\epsilon^{-1}\bigr)
\right)
\]
episodes. Under the standard bounded-norm normalization used in linear-MDP
analyses, this specializes to the familiar certificate
\[
\widetilde O\!\left(
\frac{d^2}{(1-\gamma)^4\epsilon^2}
\right).
\]
\end{proposition}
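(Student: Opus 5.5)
Our plan is to reduce Proposition~\ref{prop:conditional-lsvi} to the misspecified linear-MDP analysis of LSVI-UCB in \citet{jin2020provably}, treating $\widehat\psi$ as the feature and $(\eta,\Vcal_{\mathrm{LSVI}},W)$-approximate Bellman linearity as the misspecification condition. First we convert the discounted problem into an episodic one with effective horizon $H=\widetilde O((1-\gamma)^{-1})$. Truncating the return after $H$ steps costs at most $\gamma^H R_{\max}/(1-\gamma)\le\epsilon/4$ once $H\ge (1-\gamma)^{-1}\log(4R_{\max}/((1-\gamma)\epsilon))$. Stationary transitions and rewards make every step share the same $\widehat\psi$. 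Next we check that the hypotheses of the misspecified result hold step by step. For each optimistic value iterate $V_{h+1}\in\Vcal_{\mathrm{LSVI}}$ produced by the dynamic program, Definition~\ref{def:approx-bellman-linearity} supplies a weight $w_{V_{h+1}}$ with $\|w_{V_{h+1}}\|_2\le W$ and uniform Bellman error at most $\eta$. This is exactly the $\zeta$-approximate linear MDP condition of \citet{jin2020provably}, restricted to the value class that actually arises, with $\zeta=\eta$.

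The core step reruns the LSVI-UCB regret argument with $\widehat\psi$. The ridge-regression estimate $\hat w_h$ differs from $w_{V_{h+1}}$ by three contributions. The first is a self-normalized martingale term, controlled by an $\epsilon$-net over $\Vcal_{\mathrm{LSVI}}$, whose log-covering number is $\widetilde O(d^2)$ as in \citet{jin2020provably}. The second is a ridge bias of order $\sqrt{\lambda}W$. The third is a misspecification term of order $\eta\sqrt{dk}\,\|\widehat\psi\|_{\Lambda^{-1}}$ that is additive in the bonus. Choosing the bonus multiplier $\beta=\widetilde O(dH(1+W+B_\psi))+\eta\sqrt{dK}$ keeps optimism up to an additive $\eta H$ per episode. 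The elliptical potential lemma then gives regret
\[
\widetilde O\bigl(\sqrt{d^3H^3T}\bigr)\cdot\mathrm{poly}(W,B_\psi)+\widetilde O(\eta\, dHT),
\]
where $T=KH$ is the total number of steps.

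To obtain a policy, we output a uniformly random episode's policy (online-to-batch conversion), whose suboptimality is regret divided by $K$. The misspecification contribution is $O(\eta d H)$ per episode. Requiring it to be at most $\epsilon/2$ is the step we expect to be the main obstacle. The stated hypothesis $\eta\le c_0(1-\gamma)\epsilon$ cancels the $H$ factor but not the $d$ factor that \citet{jin2020provably} incur. We would first try to remove the $d$ by bounding the misspecification term directly via $\sum_k\|\widehat\psi\|_{\Lambda_k^{-1}}$ with Cauchy--Schwarz, giving only $\sqrt{d/K}$. Failing that, we would absorb the $\sqrt d$ either into $c_0$ or into the poly factor by restating the condition as $\eta\le c_0(1-\gamma)\epsilon/\sqrt d$.

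The statistical term gives $\epsilon/2$ once $K=\widetilde O(\mathrm{poly}(d,W,B_\psi,H)\epsilon^{-2})$, which is polynomial as claimed. Under the normalization $W\lesssim\sqrt d$ and $B_\psi\le 1$ used in linear-MDP analyses, with the sharper Bernstein-type bonus, this specializes to $\widetilde O(d^2H^4\epsilon^{-2})=\widetilde O(d^2(1-\gamma)^{-4}\epsilon^{-2})$ by substituting $H$.
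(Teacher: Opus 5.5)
Your reduction is the same as the paper's. You treat $\widehat\psi$ as a misspecified linear-MDP feature, use the misspecified LSVI-UCB regret bound of \citet{jin2020provably}, convert regret to a single policy, and substitute $H=\widetilde O((1-\gamma)^{-1})$. The paper does this in one line, with regret $\widetilde O(d\sqrt T/(1-\gamma)^{3/2})+C_{\mathrm{mis}}T\eta/(1-\gamma)$. Your version is more explicit, and the obstacle you flag is a genuine gap that the paper does not close either: $\eta\le c_0(1-\gamma)\epsilon$ with a numerical $c_0$ works only if $C_{\mathrm{mis}}$ is dimension-free, whereas in \citet{jin2020provably} it carries a factor $d$.

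Your first remedy, done carefully, bounds the misspecification part of the ridge error (per-sample errors $|\Delta_\tau|\le\eta$) by $|\widehat\psi^\top\Lambda_k^{-1}\sum_{\tau<k}\widehat\psi_\tau\Delta_\tau|\le\eta\sqrt{k}\,\|\widehat\psi\|_{\Lambda_k^{-1}}$. This uses Cauchy--Schwarz and $\sum_{\tau<k}\widehat\psi_\tau\widehat\psi_\tau^\top\preceq\Lambda_k$. The elliptical potential lemma then gives a per-episode cost of order $\eta\sqrt d\,H$ up to logarithms, so $d$ becomes $\sqrt d$ but no better.

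No argument can do better. Take $\gamma=0$, one state, and actions $a_1,\dots,a_N$ with $\widehat\psi(a_i)=v_i$. Here $v_1,\dots,v_N\in\R^d$ are unit vectors with $|v_i^\top v_j|\le\eta$ for $i\ne j$ and $N=e^{\Omega(\eta^2 d)}$. The reward is $1$ at a hidden $a_{i^\star}$ and $0$ elsewhere. Definition~\ref{def:approx-bellman-linearity} holds with $w=v_{i^\star}$ and $W=B_\psi=1$. Yet any algorithm needs $\Omega(N)$ queries to output a $\tfrac12$-optimal action; this is the misspecified-bandit lower bound of Du, Kakade, Wang and Yang, and of Lattimore, Szepesv\'ari and Weisz. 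With $\epsilon=\tfrac12$ and $\eta=c_0/2$, this is $e^{\Omega(c_0^2 d)}$ interactions, not polynomial in $d$.

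So the proposition with $c_0$ independent of $d$ cannot be proved, and absorbing $\sqrt d$ into $c_0$ is not legitimate. What you can prove is your restated version, $\eta\le c_0(1-\gamma)\epsilon/\sqrt d$, up to logarithmic factors. Those factors, introduced by $H=\widetilde O((1-\gamma)^{-1})$ and by the potential lemma, must also go into the hypothesis.

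Two smaller points. First, Definition~\ref{def:approx-bellman-linearity} is a backup-level condition on $\Vcal_{\mathrm{LSVI}}$. It is not the model-level total-variation condition of \citet{jin2020provably}, whose $\zeta$ induces a backup error of order $\zeta H$. Identifying $\zeta=\eta$ is what produced your regret term $\widetilde O(\eta dHT)$, which contradicts your (correct) per-episode cost $O(\eta dH)$ when $T=KH$. The consistent term is $\widetilde O(\eta dT)$, or $\widetilde O(\eta\sqrt d\,T)$ with the refined bound. For the same reason you must rerun optimism and concentration on $\Vcal_{\mathrm{LSVI}}$, as you do, rather than cite their theorem.

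Second, with the standard Hoeffding-type elliptical bonus named in the hypothesis, LSVI-UCB's leading term is $\widetilde O(\sqrt{d^3H^3T})$. This inverts to $d^3/((1-\gamma)^4\epsilon^2)$ episodes, not $d^2$. Your $d^2$ needs a variance-aware bonus, which is a different algorithm. The paper's leading term $d\sqrt T/(1-\gamma)^{3/2}$ rests on the same unstated sharpening.
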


\begin{proof}
Under approximate Bellman linearity, the Bellman target used by LSVI-UCB is
linear in $\widehat\psi$ up to a uniform misspecification error $\eta$. Standard
misspecified-linear-MDP analyses for LSVI-UCB then yield a regret decomposition
of the form
\begin{equation}\label{eq:misspecified-regret}
\mathrm{Regret}(T)
\le
\widetilde O\!\left(
\frac{d\sqrt{T}}{(1-\gamma)^{3/2}}
\right)
+
C_{\mathrm{mis}}\frac{T\eta}{1-\gamma},
\end{equation}
where the hidden factors depend polynomially on the feature and weight norm
bounds. The first term is the usual exact-linear-MDP regret term, and the second
term is the cumulative Bellman misspecification error.

To make the average regret at most $\epsilon$, it suffices to choose $T$ so that
\[
\frac{d}{(1-\gamma)^{3/2}\sqrt T}
\lesssim
\epsilon,
\]
which gives
\[
T
=
\widetilde O\!\left(
\frac{d^2}{(1-\gamma)^3\epsilon^2}
\right).
\]
At this value of $T$, the misspecification term in
\eqref{eq:misspecified-regret} contributes at most order $\epsilon T$ whenever
\[
\eta
\lesssim
(1-\gamma)\epsilon.
\]
Thus the condition $\eta\le c_0(1-\gamma)\epsilon$ ensures that the
misspecification term does not dominate the exact-feature regret term. Converting
the average-regret guarantee to a single output policy by the standard PAC
conversion for discounted problems introduces one additional factor of
$(1-\gamma)^{-1}$, yielding
\[
\widetilde O\!\left(
\frac{d^2}{(1-\gamma)^4\epsilon^2}
\right)
\]
under the usual bounded-norm normalization. Keeping the norm constants explicit
gives the stated polynomial dependence on $d,W,B_\psi,(1-\gamma)^{-1}$ and
$\epsilon^{-1}$.
\end{proof}

\paragraph{Implication for the factored representation.}
Under exact CP realizability, the true feature
\[
\psi^*(s,a)=\phi_s^*(s)\odot\phi_a^*(a)
\]
satisfies Bellman linearity exactly: for every bounded $V$,
\[
r(s,a)+\gamma\E[V(s')\mid s,a]
=
\psi^*(s,a)^\top w_V
\]
for an appropriate coefficient vector $w_V$. Thus the CP feature is a valid
linear-MDP feature. The learned feature $\widehat\psi$ can be used by LSVI-UCB
once it satisfies the stronger uniform condition
\eqref{eq:approx-bellman-linearity}. The representation-learning results in
Appendix~\ref{app:proof-finite-sample} do not by themselves prove this uniform
condition from average $L^2(\nu)$ error; rather, they provide a route for
reducing the number of representation-learning samples needed to obtain an
accurate transition feature.

The potential advantage of the factored structure is therefore conditional but
clear. If the factored and joint learned features are both evaluated through the
same downstream LSVI-UCB analysis, then the online exploration guarantee depends
on the feature dimension and the Bellman misspecification level $\eta$. The
factored representation can improve the upstream sample requirement for reaching
a small $\eta$ because its representation class has a smaller certified
estimation term in the low-CP-bias regime. Once the threshold
$\eta\le c_0(1-\gamma)\epsilon$ is reached, the downstream LSVI-UCB guarantee is
the standard one.


\section{Ablation Study and Analysis}
\label{app:ablation-analysis}

This appendix gathers two complementary analyses of the FaStR design. Section~\ref{app:nce-ablation} is a controlled encoder ablation that isolates which architectural ingredient drives the empirical gain by interpolating between CTRL-SR and FaStR through two intermediate architectures. Section~\ref{app:cp-misfit} is an offline diagnostic that estimates the cost of imposing the CP factorization on the dynamics of each task, independently of the RL training pipeline. The two analyses answer distinct questions: \emph{which property of the encoder produces the gain}, and \emph{when the underlying CP assumption is a good fit for the environment}.

\subsection{Interaction Structure of the Spectral Encoder}
\label{app:nce-ablation}

FaStR differs from CTRL-SR in three design choices: the encoder is split into separate state and action networks, their outputs are combined through element-wise multiplication, and the NCE logit is a diagonal trilinear transition-ratio score rather than a fused state-action logit. We isolate the contribution of each choice with a four-condition ablation in which two intermediate architectures (concat-of-separate and Tucker) sit between the monolithic baseline and FaStR.

\subsubsection{Encoder Structure of the Four Conditions}
\label{app:nce-ablation-impl}

The four conditions share the TD3 backbone, optimization, RP-NCE protocol, and evaluation setup of Appendix~\ref{app:impl} (Sections~\ref{app:impl-protocol} and~\ref{app:impl-fastr}); they differ \emph{only} in the encoder used to produce the spectral feature and, where the encoder structure forces it, in the matching critic.

\paragraph{Condition (a): CTRL-SR.} The baseline of~\citet{gao2025spectral}, identical to Appendix~\ref{app:impl-ctrlsr}: a monolithic encoder $\varphi(s,a)\in\R^{512}$ on $[s;a]$, bilinear NCE logit $\varphi(s,a)^\top\mu(s')$, and an RFF-MLP critic on $\varphi$. Effective NCE dimension $d{=}512$.

\paragraph{Condition (b): Concat-of-separate.} Two independent networks $\phis{:}\,\Scal\to\R^{256}$ and $\phia{:}\,\Acal\to\R^{256}$, concatenated and projected through a learned linear layer with Mish activation: $\psi=\sigma(W[\phis;\,\phia]+b)$, $W\in\R^{512\times512}$. The 256-dim outputs are sized so that the joint feature matches the 512-dim $\psi$ of the other conditions. Each component of $\psi$ depends additively on $\phis$ and $\phia$. The critic is the same RFF-MLP on $\psi$ as in FaStR. Effective NCE dimension $d{=}512$.

\paragraph{Condition (c): Tucker.} Two independent networks $\phis{:}\,\Scal\to\R^{64}$ and $\phia{:}\,\Acal\to\R^{64}$ feeding a full trilinear score: $h(s,a,s')=\phis(s)^\top M(s')\,\phia(a)$ with $M(s')\in\R^{64\times64}$. This is the Tucker parameterization of Section~\ref{sec:method-cp}, with effective NCE dimension $d_s\cdot d_a=4096$. Because full trilinear scoring is incompatible with the Hadamard critic, we use a structurally matched critic: a bilinear term $\phis^\top W\phia$ plus an RFF-MLP residual on $[\phis;\phia]$. The dimension $d_s{=}d_a{=}64$ keeps total parameters and per-step floating-point operations (FLOPs) within 10\% of FaStR's.

\paragraph{Condition (d): FaStR (ours).} Identical to Appendix~\ref{app:impl-fastr}: diagonal trilinear NCE score $(\phis(s)\odot\phia(a))^\top m(s')$ with $\phis,\phia,m\to\R^{512}$, factored critic feature $\psi=\phis(s)\odot\phia(a)$, and an RFF-MLP critic on $\psi$. Effective NCE dimension $d{=}512{=}B$.

The critic architecture varies only where the interaction structure forces it: FaStR's diagonal trilinear score induces the same Hadamard feature used by the critic, while Tucker's full interaction matrix requires a bilinear-residual head.

\subsubsection{Results}
\label{app:nce-ablation-results}

Figure~\ref{fig:nce-ablation} reports learning curves on five DM Control Suite tasks at two action dimensions. Humanoid tasks ($\dim\Acal{=}21$) are the primary comparison because the factored advantage is largest in high-dimensional action spaces; quadruped tasks ($\dim\Acal{=}12$) act as a control where all methods are expected to perform comparably.

\begin{figure}[h]
\centering
\includegraphics[width=\textwidth]{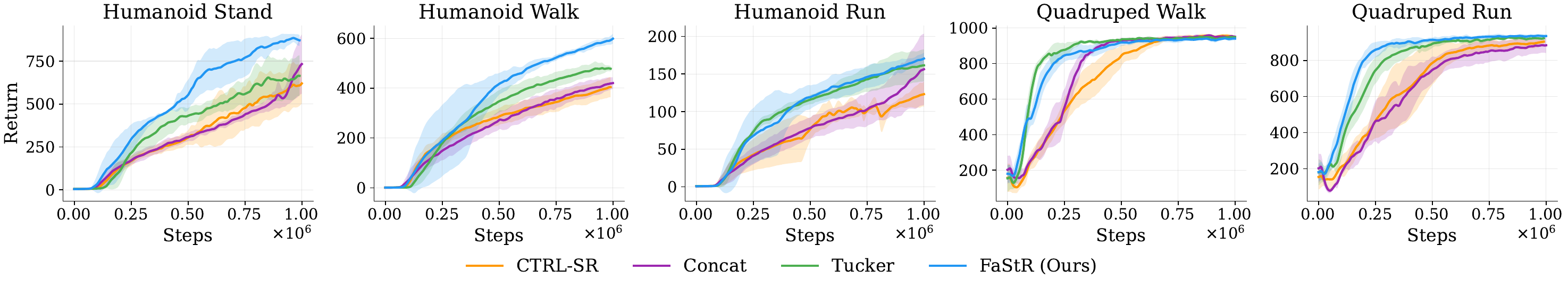}
\caption{NCE transition-ratio score structure ablation. Learning curves on five DM Control Suite tasks at two action dimensions: Humanoid ($\dim\Acal{=}21$) is the primary comparison; Quadruped ($\dim\Acal{=}12$) is a control. Curves: mean return across seeds; shaded: $\pm 1$ std.}
\label{fig:nce-ablation}
\end{figure}

\paragraph{Encoder separation alone does not explain the gain.}
The (a)-vs-(b) contrast isolates the effect of using separate $\phis$ and $\phia$ networks without multiplicative interaction. Concat preserves the split but combines through a learned affine transformation, so each component of $\psi$ is a linear mixture of $\phis$ and $\phia$. On humanoid tasks, concat gives a modest improvement over the monolithic encoder; on quadruped, the two are within noise. The improvement is well below FaStR's gain on the same tasks, ruling out the hypothesis that the split architecture alone accounts for FaStR's advantage.

\paragraph{The diagonal trilinear interaction is the source of the gain.}
The (b)-vs-(d) contrast isolates the CP interaction in the transition-ratio score versus linear projection with encoder separation held constant. FaStR gives higher returns than concat on every humanoid task, despite using the same 512-dimensional $\psi$ and the same RFF-MLP critic. The structural reason is that concat gives an additive scoring function $\psi_k=\sum_j(W_s)_{kj}\phi_{s,j}+\sum_j(W_a)_{kj}\phi_{a,j}+b_k$, mixing all dimensions, while FaStR's NCE logit contains the per-component product $\phi_{s,k}(s)\cdot\phi_{a,k}(a)\cdot m_k(s')$: the $k$-th action feature directly scales the $k$-th state feature before comparison with the $k$-th next-state factor. This per-component product is the algebraic structure required by Theorem~\ref{thm:complexity} for the log covering number to decompose additively over $\Fcal_s$, $\Fcal_a$, $\Fcal_m$. Concat loses this decomposition at the projection layer, where $W$ introduces cross-mode coupling.

\paragraph{Tucker is more expressive but trains worse under NCE.}
The (c)-vs-(d) contrast tests whether the strictly more expressive Tucker trilinear score translates into better contrastive performance. Tucker can represent arbitrary cross-component products $\phi_{s,i}\cdot\phi_{a,j}$, $i\neq j$, that diagonal CP excludes. Empirically Tucker gives lower returns than FaStR on humanoid tasks and matches FaStR on quadruped, where the gradient-coverage constraint is weaker.

The cause is the gradient rank deficiency of Section~\ref{sec:method-cp}: $\nabla_{\vect M(s'_j)}\ell$ lies in $\mathrm{span}\{\phia(a_i)\otimes\phis(s_i)\}_{i=1}^B$, a subspace of dimension at most $\min(B,d_sd_a)$. In condition (c), $d_sd_a{=}4096$ while $B{=}512$, so each mini-batch constrains only $512$ of $4096$ parameter directions of $M(s')$. Three requirements then conflict: encoder capacity, NCE validity ($B\geq d_sd_a$), and gradient coverage in a single batch. At standard batch size, Tucker cannot satisfy all three simultaneously.

\paragraph{Low-rank Tucker does not recover the gap.} The natural fix is to lower $d_s,d_a$ until $d_sd_a\leq B$. The boundary case $d_s{=}d_a{=}22$ ($d_sd_a{=}484\leq B$) on humanoid-walk gives near-zero return: encoder dimension becomes the binding constraint. Any low-rank Tucker variant within the validity constraint lies between this point and dense Tucker, so neither end matches CP.

FaStR avoids the trilemma by restricting $M(s')$ to a diagonal $m(s')\in\R^d$, with $d$ parameters per next-state and gradient spanning all of $\R^d$ whenever $B\geq d$ ($d{=}B{=}512$ here). The rank-one chain-rule structure of Section~\ref{sec:method-cp} holds for any trilinear-score loss $\ell=f(\phis^\top M\phia)$, so the same mechanism applies to the energy-based score matching of SPEDER~\citep{ren2022spectral} and the diffusion objective of Diff-SR~\citep{shribak2024diffusion}.

\paragraph{What CP gives up.}
CP excludes cross-component products $\phi_{s,i}\cdot\phi_{a,j}$ with $i\neq j$, which introduces approximation bias when action dimensions are tightly coupled. Whenever the Tucker core is approximately superdiagonal after a change of basis, the deep encoders can absorb the rotation internally and the CP restriction does not reduce expressiveness. CP's statistical advantage can also outweigh the approximation bias under limited training data, which is the typical online RL regime. Tucker is preferable when the training objective is not constrained by contrastive trilinear scoring (e.g., regression or MLE with enough data to span $\R^{d_sd_a}$), as in prior bilinear-MDP work~\citep{du2021bilinear,yang2020reinforcement}, and when the environment has cross-component couplings that no encoder rotation can absorb.

\subsection{Empirical CP Misfit Diagnostic}
\label{app:cp-misfit}

The finite-sample certificate of Proposition~\ref{prop:finite-sample} decomposes representation error into a CP approximation bias $\epsilon_{\mathrm{CP}}^2$ and an estimation term whose rate depends on the covering number. Proposition~\ref{prop:sample-sep} shows that, under low CP bias, the estimation advantage of the factored class over the joint class scales with $\min(d_s,d_a)$. The single-task experiments are consistent with this prediction but do not measure $\epsilon_{\mathrm{CP}}$ directly. We construct an offline diagnostic that estimates the transition-modeling cost of enforcing CP, independently of the RL training pipeline, providing a structural complement to the per-task return numbers in Table~\ref{tab:full-results}.

\subsubsection{Implementation Details}
\label{app:cp-misfit-impl}

For each of the 13 DM Control Suite tasks in Figure~\ref{fig:cp-misfit}, we collect a replay buffer of 500k transitions from a trained CTRL-SR agent across 5 seeds. Buffers are sourced from CTRL-SR rather than FaStR so that the state-action distribution is not shaped by the CP-factored encoder under evaluation, and they cover a range from early random exploration to trained locomotion, reflecting the RL-relevant state distribution.

We train two transition-modeling predictors on each buffer: the CP predictor scores transitions as $g_{\mathrm{CP}}(s,a,s')=(\phi_s(s)\odot\phi_a(a))^\top m(s')$ with separate state and action encoders, and the joint predictor uses $g_{\mathrm{joint}}(s,a,s')=\varphi(s,a)^\top\mu(s')$ with a single monolithic encoder. Both share feature dimension ($d{=}512$), encoder depth (2 residual blocks with LayerNorm and Mish), learning rate ($10^{-4}$), batch size (512), training epochs (50), and noise schedule (25 VP levels). The CP predictor has more parameters in every task, making the comparison conservative: if it still incurs higher test loss, the gap reflects structural misfit rather than underfitting. Each predictor is trained with the RP-NCE ranking loss plus an auxiliary conditional-moment head (weight 0.1) that predicts standardized next-state features (raw observations and random Fourier projections).

Data is split 70/15/15 into train, validation, and test. We select the checkpoint with the lowest validation NCE loss and report two test-set metrics: the NCE misfit $\widehat\Delta_{\mathrm{CP}}^{\mathrm{NCE}}=\mathcal{L}_{\mathrm{CP}}^{\mathrm{test}}-\mathcal{L}_{\mathrm{joint}}^{\mathrm{test}}$, which matches the contrastive objective used in the RL pipeline, and the moment misfit $\widehat\Delta_{\mathrm{CP}}^{\mathrm{mom}}$, which is independent of NCE. Both are distribution-weighted empirical proxies for $\epsilon_{\mathrm{CP}}^2$ rather than direct estimates.

\subsubsection{Results}
\label{app:cp-misfit-results}

Figure~\ref{fig:cp-misfit} reports the diagnostic across 13 tasks with action dimensions from 2 to 38. The CP factorization incurs a small excess transition-modeling loss in nearly all tasks. Walker-Walk is an outlier on both metrics: it has the highest NCE misfit and the highest moment misfit of any task in the figure.

\begin{figure}[h]
\centering
\includegraphics[width=\textwidth]{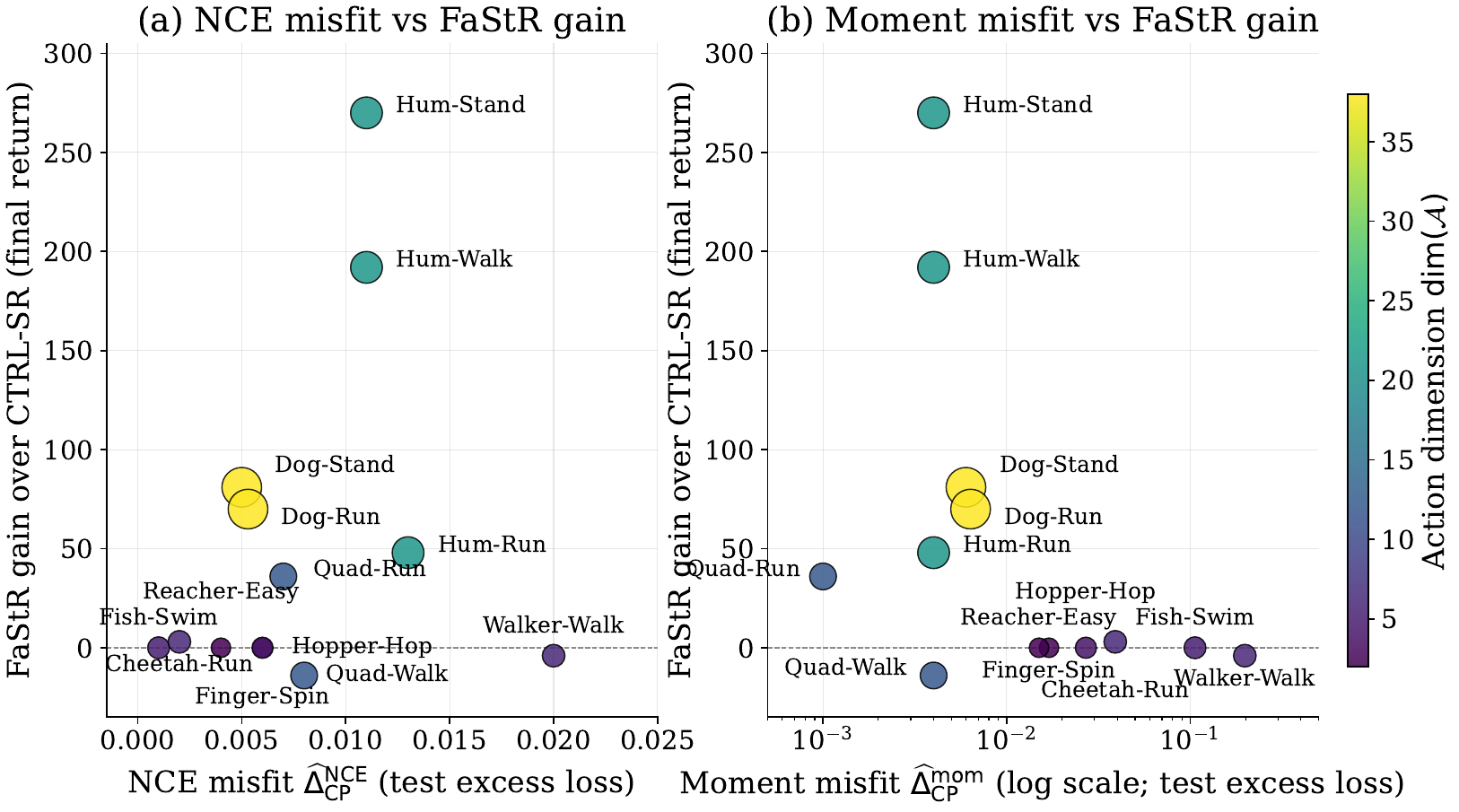}
\caption{Empirical CP misfit diagnostic across 13 DM Control Suite tasks. Each marker is one task; the $x$-axis is the test-set excess loss of the CP predictor over the joint predictor (positive means CP is worse), measured by (a) NCE misfit $\widehat\Delta_{\mathrm{CP}}^{\mathrm{NCE}}$ and (b) moment misfit $\widehat\Delta_{\mathrm{CP}}^{\mathrm{mom}}$ (log scale); the $y$-axis is the final-return gain FaStR $-$ CTRL-SR from Figure~\ref{fig:single-task}. Marker color and size encode action dimension $\dim\Acal$. The CP predictor has more parameters than the joint predictor in every task.}
\label{fig:cp-misfit}
\end{figure}

\paragraph{Two-condition pattern.} The gains match the prediction of Propositions~\ref{prop:finite-sample} and~\ref{prop:sample-sep}. Among low-dimensional tasks ($\dim\Acal\leq6$, lower-left cluster of Figure~\ref{fig:cp-misfit}), CP fits the dynamics well, but the covering-number separation at small $\min(d_s,d_a)$ is too small to yield a measurable return difference. Among high-dimensional tasks ($\dim\Acal\geq12$), where the theory predicts a larger estimation advantage, gains appear in every case where CP misfit is low: the Humanoid and Dog points sit in the upper region of both panels, while the high-misfit Walker-Walk point, with the same action dimension as Cheetah-Run, has no gain, which corresponds to the regime of Remark~\ref{rmk:nonrealizable} in which the bias term offsets the estimation advantage. Neither low misfit nor high action dimensionality alone is sufficient: both conditions must hold together.

\paragraph{Scope of the diagnostic.} The misfit metrics do not rank-order gain magnitudes within the low-misfit group. This is expected, since the estimation advantage depends both on the action dimension and on the effective complexity of the dynamics within the factored class, neither of which is captured by a scalar proxy. We therefore use the diagnostic as a compatibility check for the CP assumption rather than as a continuous predictor of gain magnitude.
\newpage
\end{document}